\newlength{\widebarargwidth}
\newlength{\widebarargheight}
\newlength{\widebarargdepth}
\DeclareRobustCommand{\widebar}[1]{%
  \settowidth{\widebarargwidth}{\ensuremath{#1}}%
  \settoheight{\widebarargheight}{\ensuremath{#1}}%
  \settodepth{\widebarargdepth}{\ensuremath{#1}}%
  \addtolength{\widebarargwidth}{-0.3\widebarargheight}%
  \addtolength{\widebarargwidth}{-0.3\widebarargdepth}%
  \makebox[0pt][l]{\hspace{0.3\widebarargheight}%
    \hspace{0.3\widebarargdepth}%
    \addtolength{\widebarargheight}{0.3ex}%
    \rule[\widebarargheight]{0.95\widebarargwidth}{0.1ex}}%
  {#1}}
\long\def\@makecaption#1#2{
        \vskip 0.8ex
        \setbox\@tempboxa\hbox{\small {\bf #1:} #2}
        \parindent 1.5em  
        \dimen0=\hsize
        \advance\dimen0 by -3em
        \ifdim \wd\@tempboxa >\dimen0
                \hbox to \hsize{
                        \parindent 0em
                        \hfil 
                        \parbox{\dimen0}{\def\baselinestretch{0.96}\small
                                {\bf #1.} #2
                                } 
                        \hfil}
        \else \hbox to \hsize{\hfil \box\@tempboxa \hfil}
        \fi
        }
\theoremstyle{plain}
\newtheorem{theos}{Theorem}
\newtheorem{props}{Proposition}
\newtheorem{lems}{Lemma}
\newtheorem{cors}{Corollary}
\theoremstyle{remark}
\theoremstyle{remark}
\theoremstyle{remark}
\long\def\comment#1{}
\newcommand{\inprod}[2]{\ensuremath{\langle #1 , \, #2 \rangle}}
\newcommand{\Exs}{\ensuremath{{\mathbb{E}}}}
\newcommand{\numobs}{\ensuremath{n}}
\newcommand{\usedim}{\ensuremath{d}}
\DeclareMathOperator{\var}{var}
\DeclareMathOperator{\cov}{cov}
\DeclareMathOperator*{\argmax}{arg\, max}
\newcommand{\NORMAL}{\ensuremath{\mathcal{N}}}
\newcommand{\Indi}{\mathbf{1}}
\newcommand{\thetastar}{\ensuremath{\theta^*}}
\newcommand{\thetahat}{\ensuremath{\widehat{\theta}}}
\newcommand{\thetatil}{\ensuremath{\widetilde{\theta}}}
\newcommand{\widgraph}[2]{\includegraphics[keepaspectratio,width=#1]{#2}}
\def\d{ \mathrm{d} }
\def\NN{ \mathbb{N} }						
\def\ZN{ \mathbb{Z} }						
\def\RN{ \mathbb{R} }						
\def\EE{ \mathbb{E} }
\def\E{ \mathrm{e} }							
\newcommand{\norm}[1]{\ensuremath{\|#1\|_2}}
\newcommand{\subsize}{\numobs} 
\newcommand{\subprob}{\delta}
\newcommand{\consteps}{C_\epsilon}
\newcommand{\blocksize}{\ensuremath{m}}
\newcommand{\tk}{\ensuremath{\tilde{k}}}
\newcommand{\nstates}{s}
\newcommand{\tvnorm}[2]{\ensuremath{\left\|#1 - #2\right\|_{\mathrm{TV}}}}
\newcommand{\EEzcondx}[3]{\ensuremath{\EE_{#1|#2,#3}}}
\newcommand{\EExcondparam}[2]{\ensuremath{\EE_{#1 \mid #2}}}
\newcommand{\transprob}[2]{p(#1|#2; \paramtrans)}
\newcommand{\obsprob}[2]{p(#1|#2; \paramobs )}
\newcommand{\generalp}{\tilde{p}}
\newcommand{\Tmat}{\ensuremath{A}}
\newcommand{\pistat}{\ensuremath{\widebar{\pi}}}
\newcommand{\stat}{\pistat}
\newcommand{\statmin}{\stat_{\min}}
\newcommand{\mixcoef}{\ensuremath{\rho_{\mathrm{mix}}}}
\newcommand{\mixcoefeff}{\ensuremath{\widetilde{\rho}_{\mathrm{mix}}}}
\newcommand{\mixcoefeps}{\ensuremath{\epsilon_{\mathrm{mix}}}}
\newcommand{\mixcoefbound}{\ensuremath{b}}
\newcommand{\paramtransbound}{\ensuremath{\beta_B}}
\newcommand{\weightsingle}[2]{w_{#1}(#2)}
\newcommand{\weightdouble}[2]{v_{#1}(#2)}
\newcommand{\epsilonobs}{\epsilon^\paramobs}
\newcommand{\epsilontrans}{\epsilon^\paramtrans}
\newcommand{\paramobs}{\mu}
\newcommand{\paramobstil}{\widetilde{\paramobs}}
\newcommand{\trueparamobs}{\ensuremath{\paramobs^*}}
\newcommand{\paramobsone}{\ensuremath{\paramobs}}
\newcommand{\paramobstwo}{\ensuremath{\paramobs^\prime}}
\newcommand{\paramitsamp}{\ensuremath{\widehat{\paramobs}}} 
\newcommand{\paramobstilde}{\widetilde{\paramobs}}
\newcommand{\paramtrans}{\beta}
\newcommand{\paramtranstil}{\widetilde{\paramtrans}}
\newcommand{\paramtransone}{\paramtrans}
\newcommand{\paramtranstwo}{\paramtrans'}
\newcommand{\trueparamtrans}{\paramtrans^*}
\newcommand{\paramspacetrans}{\ensuremath{\Omega_\paramtrans}}
\newcommand{\paramtranstilde}{\widetilde{\paramtrans}}
\newcommand{\paramspacep}{\ensuremath{\Omega_\matprob}}
\newcommand{\paramjoint}{\theta}
\newcommand{\paramjointtil}{\widetilde{\theta}}
\newcommand{\trueparamjoint}{\paramjoint^*}
\newcommand{\paramjointone}{\paramjoint}
\newcommand{\paramjointtwo}{\paramjoint'}
\newcommand{\paramjointwo}{\paramjointtwo}
\newcommand{\paramjointtilde}{\widetilde{\paramjoint}}
\newcommand{\paramspacejoint}{\DomTheta}
\newcommand{\paramjointhat}{\widehat{\paramjoint}}
\newcommand{\paramtransit}[1]{\ensuremath{\paramtrans^{#1}}}
\newcommand{\paramobsit}[1]{\ensuremath{\paramobs^{#1}}}
\newcommand{\paramjointit}[1]{\ensuremath{\theta^{#1}}}
\newcommand{\paramgamma}{\gamma}
\newcommand{\paramgammatilde}{\widetilde{\paramgamma}}
\newcommand{\elltwoballr}[2]{\ensuremath{\mathbb{B}_2\big(#1;#2\big)}}
\newcommand{\PlainQfunSam}{\ensuremath{Q_\numobs}}
\newcommand{\PlainQfunPop}{\ensuremath{\widebar{Q}}}
\newcommand{\qfunsamp}[2]{\PlainQfunSam(#1 \mid #2)}
\newcommand{\qfunpop}[2]{\PlainQfunPop(#1 \mid #2)}
\newcommand{\qfunpopobs}[2]{\PlainQfunPop_1(#1 \mid #2)}
\newcommand{\qfunsampobs}[2]{\ensuremath{Q_{1,\numobs}(#1 \mid #2)}}
\newcommand{\qfunsamptrans}[2]{\ensuremath{Q_{2,\numobs}(#1 \mid #2)}}
\newcommand{\qfun}{\PlainQfunPop}
\newcommand{\qfunk}[1]{\ensuremath{\QBAR^{#1}}}
\newcommand{\qfunn}[1]{\ensuremath{Q_{#1}}}
\newcommand{\qfunnk}[2]{\qfunsampextendnk{#1}{#2}}
\newcommand{\eqfunn}[1]{\ensuremath{\mathbb{E}Q_{#1}}}
\newcommand{\qnorm}[1]{\|#1\|_{\infty}}
\newcommand{\addnorm}[1]{\| #1 \|_{\star}}
\newcommand{\qfunsamptrunc}[2]{\qfunsampextend{#1}{#2}} 
\newcommand{\qfunpoptrunc}[2]{\PlainQfunPop^k(#1 \mid  #2)}
\newcommand{\qfunpoptruncobs}[2]{\PlainQfunPop_1^k(#1 \mid  #2)}
\newcommand{\qfunpoptrunctrans}[2]{\PlainQfunPop_2^k(#1 \mid  #2)}
\newcommand{\qfunsamptruncobs}[2]{\ensuremath{Q_{1,\numobs}^k (#1 \mid #2)}}
\newcommand{\qfunsamptrunctrans}[2]{\ensuremath{Q_{2,\numobs}^k (#1 \mid #2)}}
\newcommand{\qfunsampextend}[2]{Q^k_n(#1 \mid #2)}
\newcommand{\qfunpopextend}[2]{\qfunk{k}(#1 \mid #2)}
\newcommand{\qfunsampextendnk}[2]{Q^{#2}_{#1}}
\newcommand{\qfunpopextendk}{\qfunk{k}}                 
\newcommand{\filterop}[1]{F_{#1}}
\newcommand{\filterkernel}[2]{K_{#1|#2}}
\newcommand{\MBAR}{\ensuremath{\widebar{M}}}
\newcommand{\emoppop}[1]{\MBAR(#1)}
\newcommand{\emoppoptrunc}[1]{\MBAR^k(#1)}
\newcommand{\emoppoptruncobs}[1]{\MBAR^{\paramobs,\kdim}(#1)}
\newcommand{\emoppoptrunctrans}[1]{\MBAR^{\paramtrans,\kdim}(#1)}
\newcommand{\emoppopobs}[1]{\MBAR^{\paramobs}(#1)}
\newcommand{\emoppoptrans}[1]{\MBAR^{\paramtrans}(#1)}
\newcommand{\emopsampn}[2]{M_{#1}(#2)}
\newcommand{\emopsamptruncn}[2]{M^k_{#1}(#2)}
\newcommand{\emopsamp}[1]{M_{\subsize}(#1)}
\newcommand{\emopsampobs}[1]{\ensuremath{M^\paramobs_{\subsize} (#1)}}
\newcommand{\emopsamptruncobs}[1]{\ensuremath{M_{\subsize}^{\paramobs,k}(#1)}}
\newcommand{\emopsamptrunctrans}[1]{\ensuremath{M_{\subsize}^{\paramtrans,k}(#1)}}
\newcommand{\emopsamptrunc}[1]{\ensuremath{M_{\subsize}^{k}(#1)}}
\newcommand{\MFUNSAM}{\ensuremath{M_\numobs}}
\newcommand{\MFUNSAMOBS}[1]{\ensuremath{M_{\numobs}^{\paramobs}}(#1)}
\newcommand{\MFUNSAMTRANS}[1]{\ensuremath{M_{\numobs}^{\paramtrans}}(#1)}
\newcommand{\MFUNSAMTRUNCOBS}[1]{\ensuremath{M_{\numobs}^{\paramobs,k}}(#1)}
\newcommand{\MFUNSAMTRUNCTRANS}[1]{\ensuremath{M_{\numobs}^{\paramtrans,k}}(#1)}
\newcommand{\blockleftind}{i-k}
\newcommand{\blockrightind}{i+k}
\newcommand{\oddblockindeces}[2]{H^{#1}_{#2}}
\newcommand{\evenblockindeces}[2]{R^{#1}_{#2}}
\newcommand{\expcum}{\Phi}
\newcommand{\expcumgrad}[1]{\frac{\partial \Phi}{\partial #1}}
\newcommand{\expcumhess}[2]{\frac{\partial^2 \Phi}{\partial #1 \partial #2 }}
\newcommand{\condcov}[3]{\cov(#1, #2 \mid #3)}
\newcommand{\SNR}{\eta^2}
\newcommand{\canonvec}[1]{\mathrm{e}_1}
\newcommand{\samperror}[1]{e_{#1}}
\newcommand{\factorsnr}{\varphi_1(\eta)}
\newcommand{\factormixing}{\varphi_2(\mixcoefeps)}
\newcommand{\mprob}{\ensuremath{\mathbb{P}}} 
\newcommand{\real}{\ensuremath{\mathbb{R}}}
\newcommand{\MIXCON}{\ensuremath{c_0}}
\newcommand{\defn}{: \, = }
\newcommand{\Ball}{\ensuremath{\mathbb{B}}}
\newcommand{\rad}{\ensuremath{r}}
\newcommand{\radtrans}{\ensuremath{\max_{\paramtrans \in
      \paramspacetrans} \norm{\paramtrans-\trueparamtrans}}}
\newcommand{\LikeSample}{\ensuremath{\ell_\numobs}}
\newcommand{\hprob}{\ensuremath{p}}
\newcommand{\SPECEXPI}[1]{\ensuremath{\Exs_{Z_i \mid
      x_1^\numobs, #1}}}
\newcommand{\QFUNSAM}[2]{\ensuremath{Q_\numobs (#1 \, \mid #2)}}
\newcommand{\DomTheta}{\ensuremath{\Omega}}
\newcommand{\kdim}{\ensuremath{k}}
\newcommand{\QBAR}{\ensuremath{\widebar{Q}}}
\newcommand{\BOUNDFUN}{\ensuremath{\varphi}}
\newcommand{\probpar}{\zeta}
\newcommand{\matprob}{\probpar}
\newcommand{\matprobhat}{\widehat{\matprob}}
\newcommand{\probparhat}{\matprobhat}
\newcommand{\paramobshat}{\widehat{\paramobs}}
\newcommand{\mustar}{\ensuremath{\mu^*}}
\newcommand{\paramtranshat}{\widehat{\paramtrans}}
\newcommand{\myparagraph}[1]{\paragraph{#1:}}
\newcommand{\Xtil}{\ensuremath{\widetilde{X}}}
\newcommand{\HACKG}{\ensuremath{h}}
\newcommand{\constant}{\ensuremath{c}}
\newcommand{\plaincon}{\ensuremath{c}}
\newcommand{\BIGCON}{\ensuremath{C}}
\newcommand{\minimaxrad}{\ensuremath{e_\numobs}}
\newcommand{\processobs}{\ensuremath{\tilde{V}_m}}
\newcommand{\processradobs}{\ensuremath{V_m}}
\newcommand{\funcproc}{f_{\paramjoint}}
\newcommand{\funcprocgamma}[1]{F(#1;\blockXi)}
\newcommand{\blockXi}{\Xtil_{i;2k}}
\newcommand{\lipproc}{\ensuremath{L_m}}
\newcommand{\epsilontwo}{\tilde{\epsilon}}
\newcommand{\lipcont}{\ensuremath{L}}
\newcommand{\processradM}{\ensuremath{M_m}}
\newcommand{\processradN}{\ensuremath{N_m}}
\newcommand{\paramjointgamma}{\tilde{\paramjoint}}
\newenvironment{carlist}
 {\begin{list}{$\bullet$}
 {\setlength{\topsep}{0in} \setlength{\partopsep}{0in}
  \setlength{\parsep}{0in} \setlength{\itemsep}{\parskip}
  \setlength{\leftmargin}{0.07in} \setlength{\rightmargin}{0.08in}
  \setlength{\listparindent}{0in} \setlength{\labelwidth}{0.08in}
  \setlength{\labelsep}{0.1in} \setlength{\itemindent}{0in}}}
 {\end{list}}
\newcommand{\bcar}{\begin{carlist}}
\newcommand{\ecar}{\end{carlist}}
\newcommand{\COVNUM}{\ensuremath{T}}
\newcommand{\BIGSUP}{\sup_{ \substack{\|u\|_2=1 \\ \|v\|_2 = 1}} }
\title{\LARGE \bf Statistical and Computational Guarantees for the
  Baum-Welch Algorithm }
\author{Fanny Yang$^{\star}$, Sivaraman Balakrishnan$^{\dagger}$ and
  Martin J. Wainwright$^{\dagger,\star}$\\
\begin{tabular}{c}
Department of Statistics$^\dagger$, and \\ Department of Electrical
Engineering and Computer Sciences$^\star$ \\ UC Berkeley, Berkeley, CA
94720
\end{tabular}}
\begin{document}

\maketitle


\begin{abstract}
The Hidden Markov Model (HMM) is one of the mainstays of statistical
modeling of discrete time series, with applications including speech
recognition, computational biology, computer vision and econometrics.
Estimating an HMM from its observation process is often addressed via
the Baum-Welch algorithm, which is known to be susceptible to local
optima.  In this paper, we first give a general characterization of
the basin of attraction associated with any global optimum of the
population likelihood.  By exploiting this characterization, we
provide non-asymptotic finite sample guarantees on the Baum-Welch
updates, guaranteeing geometric convergence to a small ball of radius
on the order of the minimax rate around a global optimum.  As a
concrete example, we prove a linear rate of convergence for a hidden
Markov mixture of two isotropic Gaussians given a suitable mean
separation and an initialization within a ball of large radius around
(one of) the true parameters.  To our knowledge, these are the first
rigorous local convergence guarantees to global optima for the Baum-Welch algorithm in
a setting where the likelihood function is nonconvex.  We complement
our theoretical results with thorough numerical simulations studying
the convergence of the Baum-Welch algorithm and illustrating the
accuracy of our predictions.
\end{abstract}


\section{Introduction}

Hidden Markov models (HMMs) are one of the most widely applied
statistical models of the last 50 years, with major success stories in
computational biology~\cite{durbinbook}, signal processing and speech
recognition~\cite{rabinerbook}, control theory~\cite{controlbook}, and
econometrics~\cite{econbook} among other disciplines.  At a high
level, a hidden Markov model is a Markov process split into an
observable component and an unobserved or latent component.  From a
statistical standpoint, the use of latent states makes the HMM generic
enough to model a variety of complex real-world time series, while
the Markovian structure enables relatively simple computational
procedures.

In applications of HMMs, an important problem is to estimate the state
transition probabilities and the parameterized output densities based
on samples of the observable component.  From classical theory, it is
known that under suitable regularity conditions, the maximum
likelihood estimate (MLE) in an HMM has good statistical
properties~\cite{bickel1998}.
However,
given the potentially nonconvex nature of the likelihood surface,
computing the global maximum that defines the MLE is not a
straightforward task.  In fact, the HMM estimation problem in full
generality is known to be computationally intractable under
cryptographic assumptions~\cite{cryptohmm}.  In practice, however, the
Baum-Welch algorithm~\cite{Baum70} is frequently applied and leads to
good results.  It can be understood as the specialization of the EM
algorithm \cite{Dempster77} to the maximum likelihood estimation
problem associated with the HMM.  Despite its wide use in many
applications, the Baum-Welch algorithm can get trapped in local optima
of the likelihood function.  Understanding when this undesirable
behavior occurs---or does not occur---has remained an open question
for several decades.

A more recent line of work~\cite{mossel2006, Siddiqi10,Hsu12} has
focused on developing tractable estimators for HMMs, via approaches
that are distinct from the Baum-Welch algorithm.  Nonetheless, it
has been observed that the practical performance of such methods can
be significantly improved by running the Baum-Welch algorithm using
their estimators as the initial point; see, for instance, the detailed
empirical study in Kontorovich et al.~\cite{Kontorovich13}. This
curious phenomenon has been observed in other
contexts~\cite{Chaganty13}, but has not been explained to date.
Obtaining a theoretical characterization of when and why the
Baum-Welch algorithm behaves well is the main objective of this paper.


\subsection{Related work and our contributions}

Our work builds upon a framework for analysis of EM, as previously
introduced by a subset of the current authors~\cite{BalWaiYu14}; see
also the follow-up work to regularized EM algorithms~\cite{YiCar15, WangLiu14}.
All of this past work applies to models based on i.i.d. samples, and
as we show in this paper, there are a number of non-trivial steps
required to derive analogous theory for the dependent variables that
arise for HMMs.  Before doing so, let us put the results of this paper
in context relative to older and more classical work on Baum-Welch and
related algorithms.

\begin{figure}[h!]
\begin{center}
\begin{tabular}{cc}
\widgraph{.45\textwidth}{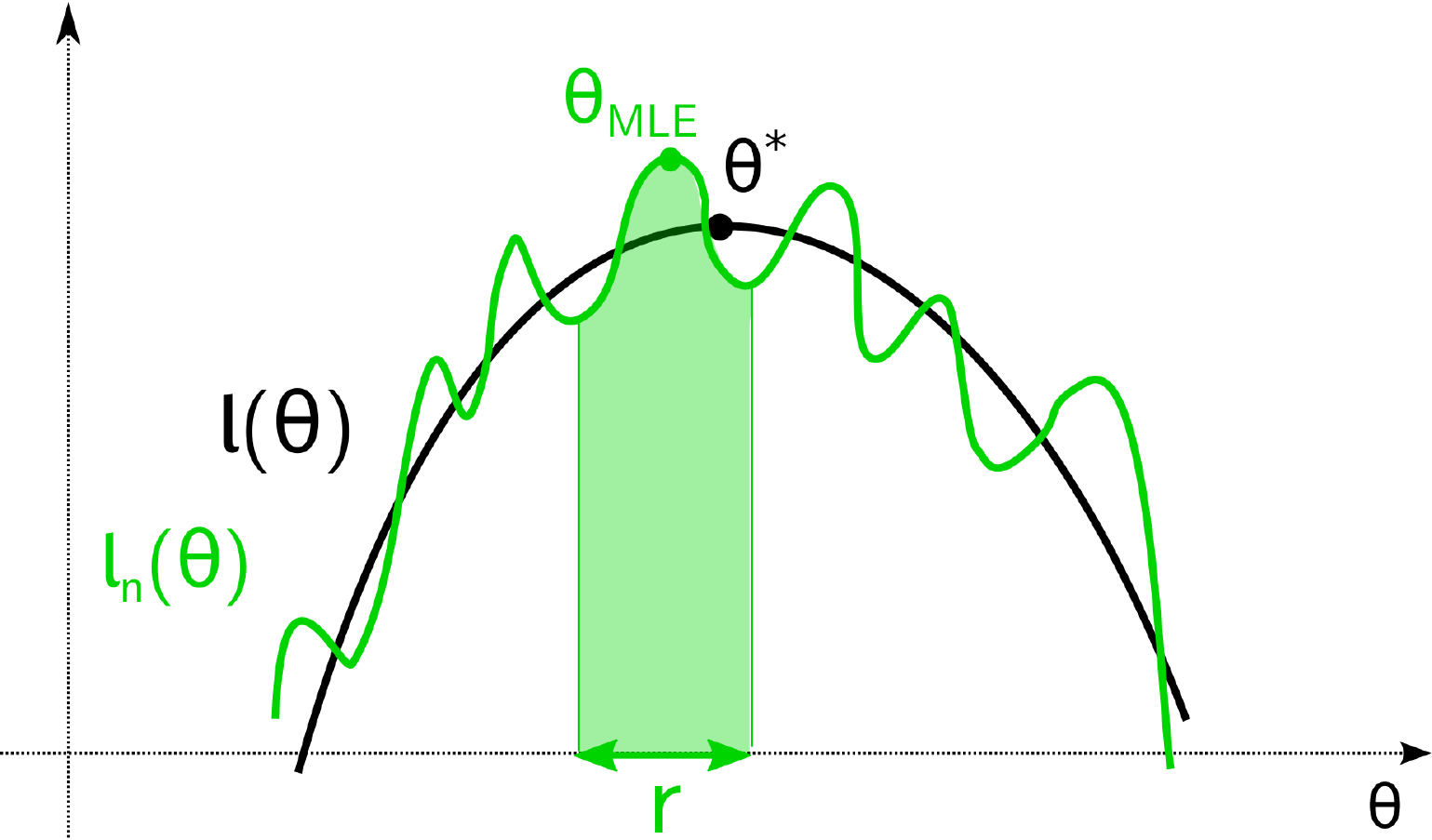} &
\widgraph{.45\textwidth}{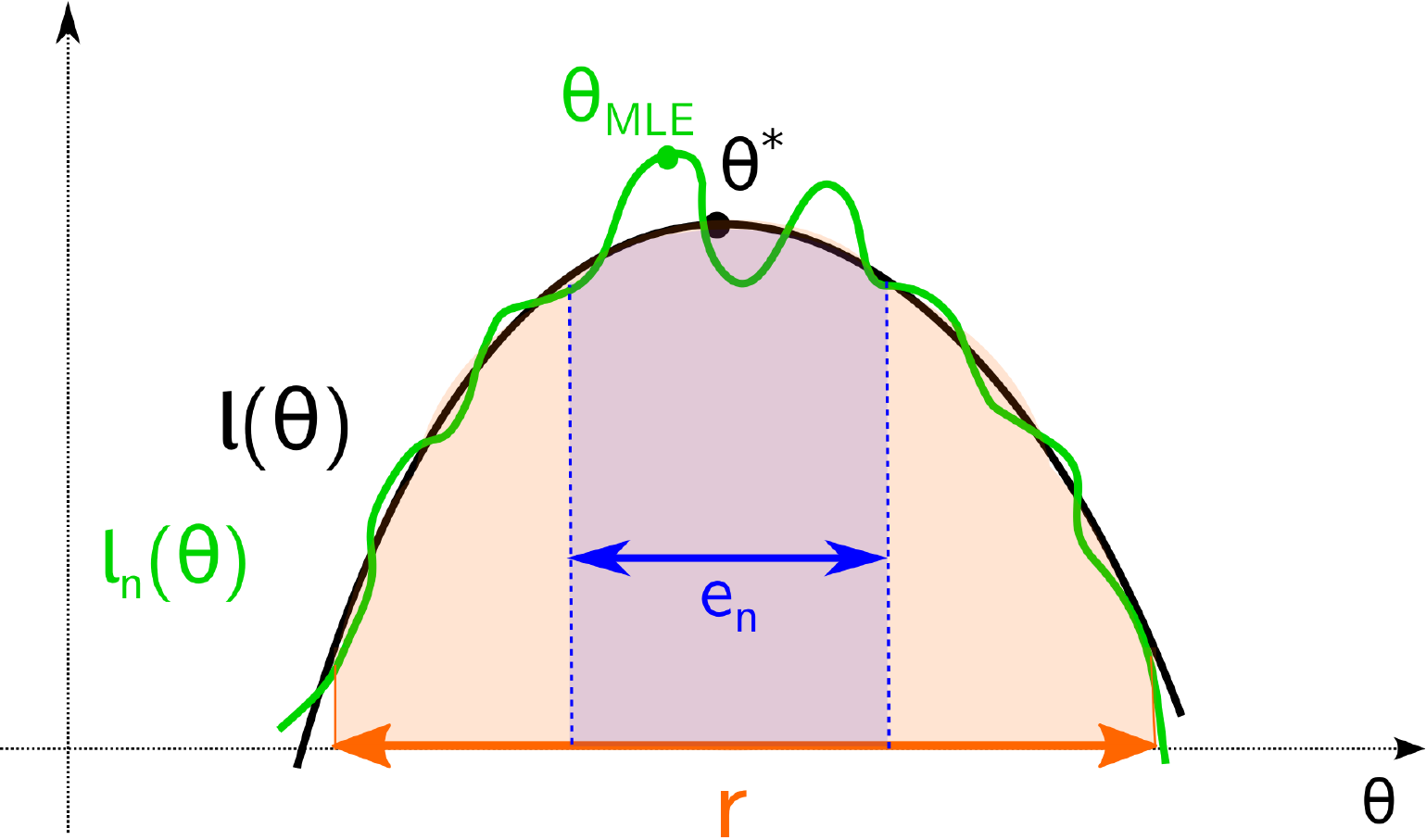} \\
(a) & (b)
\end{tabular}
\end{center}
\caption{(a) A poorly behaved sample likelihood, for which there are
  many local optima at varying distances from the MLE.  It would
  require an initialization extremely close to the MLE in order to
  ensure that the Baum-Welch algorithm would not be trapped at a
  sub-optimal fixed point.  (b) A well-behaved sample likelihood, for
  which all local optima lie within an $\minimaxrad$-ball of the MLE,
  as well as the true parameter $\thetastar$.  In this case, the
  Baum-Welch algorithm, when initialized within a ball of large radius
  $r$, will converge to the ball of much smaller radius $\minimaxrad$.
  The goal of this paper is to give sufficient conditions for when the
  sample likelihood exhibits this favorable structure.}
\label{fig:InMMBall}
\end{figure}

Under mild regularity conditions, it is well-known that the maximum
likelihood estimate (MLE) for an HMM is a consistent and
asymptotically normal estimator; for instance, see Bickel et
al.~\cite{bickel1998}, as well as the expository
works~\cite{Moulines_HMM, vanHandel_HMM}. 
On the algorithmic level,
the original papers of Baum and co-authors~\cite{Baum70, baum1966}
showed that the Baum-Welch algorithm converges to a stationary point
of the sample likelihood; these results are in the spirit of the
classical convergence analysis of the EM algorithm~\cite{Wu83,
  Dempster77}.  These classical convergence results only provide a
relatively weak guarantee---namely, that if the algorithm is
initialized sufficiently close to the MLE, then it will converge to
it.  However, the classical analysis does not quantify the size of
this neighborhood, and as a critical consequence, it \emph{does not}
rule out the pathological type of behavior illustrated in panel (a) of
Figure~\ref{fig:InMMBall}.  Here the sample likelihood has multiple
optima, both a global optimum corresponding to the MLE as well as many
local optima \emph{far away from the MLE} that are also fixed points
of the Baum-Welch algorithm.  In such a setting, the Baum-Welch
algorithm will only converge to the MLE if it is initialized in an
extremely small neighborhood.

In contrast, the goal of this paper is to give sufficient conditions
under which the sample likelihood has the more favorable structure
shown in panel (b) of Figure~\ref{fig:InMMBall}.  Here, even though
the MLE does not have a large basin of attraction, the sample
likelihood has all of its optima (including the MLE) localized to a
small region around the true parameter $\thetastar$.  Our strategy to
reveal this structure, as in our past work~\cite{BalWaiYu14}, is to
shift perspective: instead of studying convergence of Baum-Welch
updates to the MLE, we study their convergence to an
$\epsilon_\numobs$-ball of the true parameter $\thetastar$, and
moreover, instead of focusing exclusively on the sample likelihood, we
first study the structure of the population likelihood, corresponding
to the idealized limit of infinite data.  Our first main result
(Theorem~\ref{ThmPopContraction}) provides sufficient conditions under
which there is a large ball of radius $r$, over which the population
version of the Baum-Welch updates converge at a geometric rate to
$\thetastar$.  Our second main result
(Theorem~\ref{ThmSampContraction}) uses empirical process theory to
analyze the finite-sample version of the Baum-Welch algorithm,
corresponding to what is actually implemented in practice.  In this
finite sample setting, we guarantee that over the ball of radius $r$,
the Baum-Welch updates will converge to an $\epsilon_\numobs$-ball
with $\epsilon_\numobs \ll r$, and most importantly, this
$\epsilon_\numobs$-ball contains the true parameter $\thetastar$.  As
a side-note, it also contains the MLE, but our theory does \emph{not}
guarantee convergence to the MLE, but rather to a point that is close
to both the MLE and the true parameter $\thetastar$.

These latter two results are abstract, applicable to a broad class of
HMMs. We then specialize them to the case of a hidden Markov mixture
consisting of two isotropic components, with means separated by a
constant distance, and obtain concrete guarantees for this model.  It
is worth comparing these results to past work in the i.i.d. setting,
for which the problem of Gaussian mixture estimation under various
separation assumptions has been extensively studied
(e.g.,~\cite{dasgupta,vempala,belkin,moitra}). The constant distance
separation required in our work is much weaker than the separation
assumptions imposed in papers that focus on correctly labeling samples
in a mixture model.  Our separation condition is related to, but in
general incomparable with the non-degeneracy requirements in other
work~\cite{Hsu12, hsumog, moitra}.

Finally, let us discuss the various challenges that arise in studying
the dependent data setting of hidden Markov models, and highlight some
important differences with the
i.i.d. setting~\cite{BalWaiYu14,YiCar15}.  In the non-i.i.d. setting,
arguments passing from the population-based to sample-based updates
are significantly more delicate. First of all, it is not even obvious
that the population version of the $Q$-function---a central object in
the Baum-Welch updates---even exists. From a technical standpoint,
various gradient smoothness conditions are much more difficult to
establish, since the gradient of the likelihood no longer decomposes
over the samples as in the i.i.d. setting. In particular, each term in
the gradient of the likelihood is a function of all
observations. Finally, in order to establish the finite-sample
behavior of the Baum-Welch algorithm, we can no longer appeal to
standard i.i.d.  concentration and empirical process techniques.  Nor
do we pursue the approach of some past work on HMM estimation
(e.g.,~\cite{Hsu12}), in which it is assumed that there are multiple
independent samples of the HMM.\footnote{The rough argument here is that it
  is possible to reduce an i.i.d. sampling model by cutting the
  original sample into many pieces, but this is not an algorithm that
  one would implement in practice.} Instead, we directly analyze the
Baum-Welch algorithm that practioners actually use---namely, one that
applies to a single sample of an $\numobs$-length HMM.  In order to
make the argument rigorous, we need to make use of more sophisticated
techniques for proving concentration for dependent data~\cite{Yu94,
  NobDem93}.

The remainder of this paper is organized as follows.  In
Section~\ref{SecBackground}, we introduce basic background on hidden
Markov models and the Baum-Welch algorithm.
Section~\ref{sec:main_results} is devoted to the statement of our main
results in the general setting, whereas Section~\ref{sec:normal_HMM}
contains the more concrete consequences for the Gaussian output HMM.
The main parts of our proofs are given in Section~\ref{SecProofs},
with the more technical details deferred to the appendices.


\section{Background and problem set-up}
\label{SecBackground}

In this section, we introduce some standard background on hidden
Markov models and the Baum-Welch algorithm.

\subsection{Standard HMM notation and assumptions}
\label{SecWithSpecialCase}

We begin by defining a discrete-time hidden Markov model with hidden
states taking values in a discrete space.  Letting $\ZN$ denote the
integers, suppose that the observed random variables $\{X_i\}_{i \in
  \ZN}$ take values in $\real^\usedim$, and the latent random
variables $\{Z_i\}_{i \in \ZN}$ take values in the discrete space
$[\nstates] \defn \{1, \ldots, \nstates\}$.  The Markov structure is
imposed on the sequence of latent variables.  In particular, if the
variable $Z_1$ has some initial distribution $\pi_1$, then the joint
probability of a particular sequence $(z_1, \ldots, z_\numobs)$ is
given by
\begin{align}
\hprob(z_1, \ldots, z_\numobs; \paramtrans) & = \pi_1(z_1;
\paramtrans) \prod_{i=1}^{\numobs} \hprob(z_i \mid z_{i-1};
\paramtrans),
\end{align}
where the vector $\paramtrans$ is a particular parameterization of the
initial distribution and Markov chain transition probabilities.  We
restrict our attention to the homogeneous case, meaning that the
transition probabilities for step $(t-1) \rightarrow t$ are
independent of the index $t$.  Consequently, if we define the
transition matrix $\Tmat \in \real^{\nstates \times \nstates}$ with
entries
\begin{align*}
\Tmat(j,k ; \paramtrans) \defn \hprob(z_2 = k \mid z_1 = j; \paramtrans),
\end{align*}
then the marginal distribution $\pi_i$ of $Z_i$ can be described by
the matrix vector equation
\begin{align*}
\pi^T_i & = \pi_1^T \Tmat^{i-1},
\end{align*}
where $\pi_i$ and $\pi_1$ denote vectors belonging to the
$\nstates$-dimensional probability simplex.

We assume throughout that the Markov chain is aperiodic and recurrent,
whence it has a unique stationary distribution $\pistat$, defined by
the eigenvector equation $\pistat^T = \pistat^T \Tmat$.  To be clear,
both $\pistat$ and the matrix $\Tmat$ depend on $\paramtrans$, but we
omit this dependence so as to simplify notation.  We assume throughout
that the Markov chain begins in its stationary state, so that $\pi_1 =
\pistat$, and moreover, that it is reversible, meaning that
\begin{align}
\label{ass:reversible}
\pistat(j) \Tmat(j,k) & = \pistat(k) \Tmat(k,j) \qquad 
\end{align}
for all pairs $j, k \in [\nstates]$. 

A key quantity in our analysis is the mixing rate of the Markov chain.
In particular, we assume the existence of \emph{mixing constant}
$\mixcoefeps \in (0,1]$ such that
\begin{equation}
\label{ass:mixing}
\mixcoefeps \leq \frac{\transprob{z_i}{z_{i-1}}}{\pistat(z_i)} \leq
\mixcoefeps^{-1} 
\end{equation}
for all $(z_i, z_{i-1}) \in [\nstates] \times [\nstates]$.  This
condition implies that the dependence on the initial distribution
decays geometrically. More precisely, some simple algebra shows that
\begin{align}
\label{ass:mixingdef}
\sup_{\pi_1} \tvnorm{\pi_1^T \Tmat^t}{\pi_1^T} & \leq \MIXCON
\mixcoef^t \qquad \mbox{for all $t = 1, 2, \ldots$},
\end{align}
where $\mixcoef = 1 - \mixcoefeps$ denotes the \emph{mixing rate} of
the process, and $\MIXCON$ is a universal constant.  Note that as
$\mixcoefeps \rightarrow 1^-$, the Markov chain has behavior
approaching that of an i.i.d. sequence, whereas as $\mixcoefeps
\rightarrow 0^+$, its behavior becomes increasingly ``sticky''.


\begin{figure}[h]
\centering 
\includegraphics[scale= 0.5]{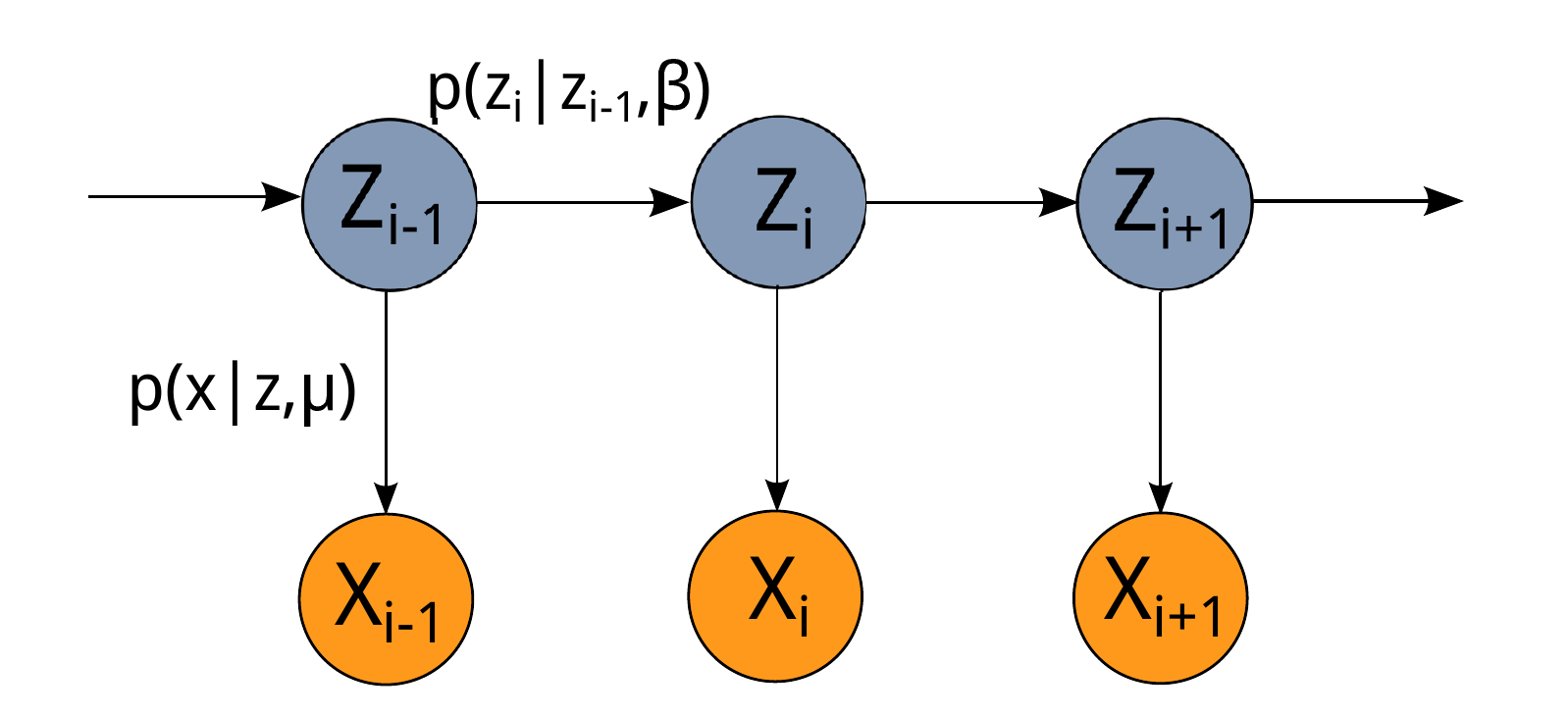}
\caption{The hidden Markov model as a graphical model. The blue circles
  indicate observed variables $Z_i$, whereas the orange circles indicate
  latent variables $X_i$.}
\label{fig:hmm_graphmodel}
\end{figure}

Associated with each latent variable $Z_i$ is an observation $X_i \in
\real^\usedim$.  We use $\obsprob{x_i}{z_i}$ to denote the density of
$X_i$ given that $Z_i = z_i$, an object that we assume to be
 parameterized by a vector $\paramobs$.  Introducing the shorthand $x_1^\numobs = (x_1,
\ldots, x_\numobs)$ and $z_1^\numobs = (z_1, \ldots, z_\numobs)$, the
joint probability of the sequence $(x_1^\numobs, z_1^\numobs)$ (also
known as the complete likelihood) can be
written in the form
\begin{align}
\label{EqnJoint}
\hprob(z_1^\numobs, x_1^\numobs; \paramjoint) & = \pi_1(z_1) 
  \prod_{i=2}^{\numobs} \hprob(z_i \mid z_{i-1}; \paramtrans)  
\prod_{i=1}^\numobs \obsprob{x_i}{z_i},
\end{align}
where the pair $\paramjoint \defn (\paramtrans, \paramobs)$
parameterizes the transition and observation functions.
Similarly we define the likelihood 
\begin{equation*}
p(x_1^\numobs; \paramjoint) = \sum_{z_1^n} \hprob(z_1^\numobs,
x_1^\numobs; \paramjoint).
\end{equation*}
We define a form of complete likelihood~\footnote{ We have defined a
  complete likelihood that involves an additional hidden variable
  $z_0$ that does not have any associated observation $x_0$.  This
  choice turns out to be convenient, but does preserve the usual
  relationship $\sum_{z_0^\numobs} \hprob(z_0^\numobs, x_1^\numobs;
  \paramjoint) = p(x_1^\numobs; \paramjoint)$ between the ordinary and
  complete likelihoods in EM problems.}
\begin{align}
\label{EqnJointTwo}
\hprob(z_0^\numobs, x_1^\numobs; \paramjoint) & = \pi_0(z_0)
\prod_{i=1}^{\numobs} \hprob(z_i \mid z_{i-1}; \paramtrans)
\prod_{i=1}^\numobs \obsprob{x_i}{z_i},
\end{align}
where $\pi_0 = \pistat$.

\textit{A simple example:} A special case helps to illustrate these
definitions.  In particular, suppose that we have a Markov chain with
$\nstates = 2$ states.  Consider a matrix of transition probabilities
$\Tmat \in \real^{2 \times 2}$ of the form
\begin{align}
\Tmat & = \frac{1}{e^{\paramtrans} + e^{-\paramtrans}}
\; \begin{bmatrix} e^\paramtrans & e^{-\paramtrans}
  \\ e^{-\paramtrans} & e^{\paramtrans}
\end{bmatrix} \; = \; \begin{bmatrix} \matprob & 1- \matprob \\
1- \matprob & \matprob
\end{bmatrix}, \label{EqnExampleTrans}
\end{align}
where $\matprob \defn \frac{e^\paramtrans}{e^{\paramtrans} +
  e^{-\paramtrans}}$.  By construction, this Markov chain is recurrent
and aperiodic with the unique stationary distribution $\pistat
= \begin{bmatrix} \frac{1}{2} & \frac{1}{2}
\end{bmatrix}^T$.  Moreover, by calculating the eigenvalues of the
transition matrix, we find that the mixing
condition~\eqref{ass:mixingdef} holds with $\mixcoef := |2 \matprob -
1| = |\tanh(\paramtrans)|$.

Suppose moreover that the observed variables in $\mathbb{R}^d$ are conditionally
Gaussian, say with
\begin{align}
\obsprob{x_t}{z_t} & = \begin{cases} \frac{1}{(2 \pi \sigma^2)^{d/2}}
  \exp \big \{ - \frac{1}{2 \sigma^2} \| x - \paramobs\|_2^2 \big \} &
  \mbox{if $z_t = 1$} \\
\frac{1}{(2 \pi \sigma^2)^{d/2}} \exp \big \{ - \frac{1}{2 \sigma^2}
\|x + \paramobs\|_2^2 \big \} & \mbox{if $z_t = 2$.}
\end{cases}
\label{EqnExampleObs}
\end{align}
With this choice, the marginal distribution of each $X_t$ is a
two-state Gaussian mixture with mean vectors $\paramobs$ and
$-\paramobs$, and covariance matrices $\sigma^2 I_\usedim$.  We
provide specific consequences of our general theory for this special
case in the sequel.


\subsection{Baum-Welch updates for HMMs}

We now describe the Baum-Welch updates for a general discrete-state
hidden Markov model.  As a special case of the EM algorithm, the
Baum-Welch algorithm is guaranteed to ascend on the
likelihood function of the hidden Markov model.  It does so
indirectly, by first computing a lower bound on the likelihood
(E-step) and then maximizing this lower bound (M-step).

For a given integer $\numobs \geq 1$, suppose that we observe a
sequence $x_1^\numobs = (x_1, \ldots, x_\numobs)$ drawn from the
marginal distribution over $X_1^\numobs$ defined by the
model~\eqref{EqnJoint}. The rescaled log likelihood of the sample path
$x_1^\numobs$ is given by
\begin{align*}
\LikeSample(\paramjoint) & = \frac{1}{\numobs} \log
\Big(\sum_{z_0^\numobs} \hprob(z_0^\numobs, x_1^\numobs; \paramjoint)
\Big)
\end{align*}
The EM likelihood is based on lower bounding the likelihood via
Jensen's inequality.  For any choice of parameter
$\paramjoint'$ and positive integers $i\leq j$ and $a<b$, let
$\EEzcondx{Z_i^{j}}{x_a^b}{\paramjoint'}$  denote the
expectation under the conditional distribution $p(Z_i^j \mid
x_a^b; \paramjoint')$.  With this notation, the concavity of the
logarithm and Jensen's inequality implies that for any choice of
$\paramjoint'$, we have the lower bound
\begin{align}
\LikeSample(\paramjoint) &\geq \underbrace{\frac{1}{\numobs}
  \EEzcondx{Z_0^{\numobs}}{x_1^\numobs}{\paramjoint'} \big[ \log p(Z_0^\numobs,x_1^\numobs;
    \paramjoint) \big]}_{\QFUNSAM{\paramjoint}{\paramjoint'}} +
\underbrace{\frac{1}{\numobs} \EEzcondx{Z_0^{\numobs}}{x_1^\numobs}{\paramjoint'} \big[ -\log
    p(Z_0^\numobs \mid x_1^\numobs;
    \paramjoint')]}_{H_\numobs(\paramjoint')}.
\end{align}
For a given choice of $\paramjoint'$, the E-step corresponds to the
computation of the function \mbox{$\paramjoint \mapsto
  \QFUNSAM{\paramjoint}{\paramjoint'}$.}  The $M$-step is defined by
the EM operator $\MFUNSAM: \DomTheta \mapsto \DomTheta$
\begin{align}
\label{EqnEMOperator}
\MFUNSAM(\paramjoint') & = \arg \max_{\theta \in \DomTheta}
\QFUNSAM{\paramjoint}{\paramjoint'},
\end{align}
where $\DomTheta$ is the set of feasible parameter vectors.  Overall,
given an initial vector $\paramjointit{0} = (\paramtransit{0},
\paramobsit{0})$, the EM algorithm generates a sequence
$\{\paramjointit{t}\}_{t=0}^\infty$ according to the recursion
$\paramjointit{t+1} = \MFUNSAM(\paramjointit{t})$.

This description can be made more concrete for an HMM, in which case
the $Q$-function takes the form
\begin{multline}
 \label{EqnDefnQfunSam} 
\qfunsamp{\paramjointone}{\paramjointtwo} = \frac{1}{\numobs}
\EEzcondx{Z_0}{x_1^\numobs}{\paramjoint'} \big[\log
  \pi_0(Z_0;  \paramtransone)\big] + \frac{1}{\numobs}
\sum_{i=1}^\numobs \EEzcondx{Z_{i-1}, Z_i}{x_1^\numobs}{\paramjoint'}
\big[ \log p(Z_i \mid Z_{i-1}; \paramtransone) \big] \\
 + \frac{1}{\numobs} \sum_{i=1}^\numobs \SPECEXPI{\paramjoint'} \big[
   \log p(x_i \mid Z_i; \paramobsone) \big],
\end{multline}
where the dependence of $\pi_0$ on $\paramtrans$ comes from the
assumption that $\pi_0 = \pistat$.  Note that the $Q$-function can be
decomposed as the sum of a term which is solely dependent on
$\paramobs$, and another one which only depends on
$\paramtrans$---that is
\begin{align}
\label{eq:QDecomposed}
\qfunsamp{\paramjointone}{\paramjointtwo} =
\qfunsampobs{\paramobs}{\paramjointtwo} +
\qfunsamptrans{\paramtrans}{\paramjointtwo}
\end{align}
where $\qfunsampobs{\paramobs}{\paramjointtwo} = \frac{1}{\numobs}
\sum_{i=1}^\numobs \SPECEXPI{\paramjoint'} \big[ \log p(x_i \mid
  Z_i,\paramobsone) \big]$, and
$\qfunsamptrans{\paramtrans}{\paramjointtwo}$ collects the remaining
terms.  In order to compute the expectations defining this function
(E-step), we need to determine the marginal distributions over the
singletons $Z_i$ and pairs $(Z_i, Z_{i+1})$ under the joint
distribution $p(Z_0^\numobs \mid x_1^\numobs; \paramjoint')$.  These
marginals can be obtained efficiently using a recursive
message-passing algorithm, known either as the forward-backward or
sum-product algorithm~\cite{Frank01,WaiJor08}.

In the $M$-step, the decomposition~\eqref{eq:QDecomposed} suggests that
the maximization over the two components
$(\paramtrans, \paramobs)$ can also be decoupled.
Accordingly, with a slight abuse of notation, we often write
\begin{align*}
 \quad \MFUNSAMOBS{\paramjoint'} = \arg \max_{\paramobs \in
  \DomTheta_\paramobs} \qfunsampobs{\paramobs}{\paramjoint'} , \quad
  \mbox{and } \quad
  \MFUNSAMTRANS{\paramjoint'} = \arg \max_{\paramtrans \in
  \DomTheta_\paramtrans} \qfunsamptrans{\paramtrans}{\paramjoint'}
\end{align*}
for these two decoupled maximization steps, where 
$\DomTheta_{\paramtrans}$ and $\DomTheta_{\paramobs}$ denote
the feasible set of transition and observation parameters respectively and
$\DomTheta \defn
\DomTheta_{\paramtrans} \times \DomTheta_{\paramobs}$.


\section{Main results}
\label{sec:main_results}

We now turn to a statement of our main results, along with a
discussion of some of their consequences.  The first step is to
establish the existence of an appropriate population analog of the
$Q$-function.  Although the existence of such an object is a
straightforward consequence of the law of large numbers in the case of
i.i.d. data, it requires some technical effort to establish existence
for the case of dependent data; in particular, we do so using
$\kdim$-truncated version of the full $Q$-function (see
Proposition~\ref{PropExistence}).  This truncated object plays a
central role in the remainder of our analysis.  In particular, we
first analyze a version of the Baum-Welch updates on the expected
$\kdim$-truncated $Q$-function for an extended sequence of
observations $x_{1-k}^{n+k}$, and provide sufficient conditions for
these population-level updates to be contractive (see
Theorem~\ref{ThmPopContraction}).  We then use non-asymptotic forms of
empirical process theory to show that under suitable conditions, the
actual sample-based EM updates---i.e., the updates that are actually
implemented in practice---are also well-behaved in this region with
high probability (see Theorem~\ref{ThmSampContraction}).  In
subsequent analysis to follow in Section~\ref{sec:normal_HMM}, we show
that this initialization radius is suitably large for an HMM with
Gaussian outputs.


\subsection{Existence of population \texorpdfstring{$Q$}{Lg}-function}

In the analysis of Balakrishnan et al.~\cite{BalWaiYu14}, the central
object is the notion of a population $Q$-function---namely, the
function that underlies the EM algorithm in the idealized limit of
infinite data.  In their setting of i.i.d. data, the standard law of
large numbers ensures that as the sample size $\numobs$ increases, the
sample-based $Q$-function approaches its expectation, namely the
function
\begin{align*}
\qfunpop{\paramjointone}{\paramjointtwo} &= \EE \big[
  \qfunsamp{\paramjointone}{\paramjointwo} \big] \; = \;
\EE \big[ \EEzcondx{Z_1}{X_1}{\paramjointwo} \big[\log
  p(X_1, Z_1; \paramjointone) \big] \big].
\end{align*}
Here we use the shorthand $\EE$ for the expectation over all samples
$X$ that are drawn from the joint distribution (in this case $\EE :=
\EExcondparam{X_1^n}{\trueparamjoint}$).

When the samples are dependent, the quantity $\EE \big[
  \qfunsamp{\paramjointone}{\paramjointwo} \big]$ is no longer
independent of $\numobs$, and so an additional step is required.  A
reasonable candidate for a general definition of the population
$Q$-function is given by
\begin{align}
\label{EqnDefnQfunPop}
\qfunpop{\paramjointone}{\paramjointtwo} & \defn \lim_{\numobs
  \rightarrow +\infty} [\EE
  \qfunsamp{\paramjointone}{\paramjointtwo}].
\end{align}
Although it is clear that this definition is sensible in the i.i.d. case, 
it is necessary for dependent sampling schemes to prove that the
limit given in definition~\eqref{EqnDefnQfunPop} actually exists.

In this paper, we do so by considering a suitably truncated version of
the sample-based $Q$-function.  Similar arguments have been used in
past work (e.g.,~\cite{Moulines_HMM, vanHandel_HMM}) to establish
consistency of the MLE; here our focus is instead the behavior of the
Baum-Welch algorithm.  Let us consider a sequence $\{(X_i, Z_i)\}_{i
  = 1-k}^{n+k}$, assumed to be drawn from the stationary distribution
of the overall chain. Recall that $\Exs_{Z_i^j \mid x_a^b,
  \paramjoint}$ denotes expectations taken over the distribution
$p(Z_i^j \mid x_a^b, \paramjoint)$.  Then, for a positive integer
$\kdim$ to be chosen, we define
\begin{multline}
\label{eq:sampletruncq} 
\qfunsampextend{\paramjointone}{\paramjointtwo} = \frac{1}{\numobs}
\Big[\Exs_{Z_0 \mid x_{-\kdim}^{\kdim}, \paramjointtwo} \log p(Z_1
  ; \paramtransone) + \sum_{i=1}^\numobs \Exs_{Z_{i-1}^i \mid
    x_{i-k}^{i+k}, \paramjointtwo} \log p(Z_i \mid Z_{i-1};
  \paramtransone) \\
 + \sum_{i=1}^{\numobs} \Exs_{Z_i \mid x_{i-k}^{i+k}, \paramjointwo}
 \log p(x_i \mid Z_i; \paramobsone) \Big].
\end{multline}
In an analogous fashion to the decomposition in
equation~\eqref{EqnDefnQfunSam}, we can decompose $\qfunnk{n}{k}$ in
the form
\begin{align*}
\qfunsamptrunc{\paramjoint}{\paramjointtwo} =
\qfunsamptruncobs{\paramobs}{\paramjointtwo} +
\qfunsamptrunctrans{\paramtrans}{\paramjointtwo}.
\end{align*}
We associate with this triplet of $Q$-functions the corresponding EM
operators $\emopsamptruncn{\numobs}{\paramjointtwo}$,
$\MFUNSAMTRUNCOBS{\paramjointtwo}$ and
$\MFUNSAMTRUNCTRANS{\paramjointtwo}$ as in Equation~\eqref{EqnEMOperator}. 
Note that as opposed to the
function $\qfunn{n}$ from equation~\eqref{EqnDefnQfunSam}, the
definition of $\qfunnk{n}{k}$ involves variables $Z_i, Z_{i-1}$ that
are not conditioned on the full observation sequence $x_1^\numobs$,
but instead only on a $2 \kdim$ window centered around the index $i$.
By construction, we are guaranteed that the $\kdim$-truncated
population function and its decomposed analogs given by
\begin{align}
\qfunpopextend{\paramjointone}{\paramjointtwo} &:= \lim_{n\to\infty} \EE
\qfunsampextend{\paramjointone}{\paramjointtwo} = \EE
\qfunsamptruncobs{\paramobs}{\paramjointtwo} + \lim_{n\to\infty} \EE
\qfunsamptrunctrans{\paramtrans}{\paramjointtwo} \nonumber \\ & :=
\qfunpoptruncobs{\paramobs}{\paramjointtwo} +
\qfunpoptrunctrans{\paramtrans}{\paramjointtwo} \label{eq:Qpopdecomposed}
\end{align}
are well-defined.  In particular, due to stationarity of the random
sequences $\{ p(z_i \mid X_{i-k}^{i+k} ) \}_{i = 1}^\numobs$ and $\{
p(z_{i-1}^i \mid X_{i-k}^{i+k} ) \}_{i = 1}^\numobs$, the expectation
over $\{(X_i, Z_i)\}_{i=1-k}^{n+k}$ is independent of the sample size
$\numobs$.

Our first result uses the existence of this truncated population
object in order to show that the standard population $Q$-function from
equation~\eqref{EqnDefnQfunPop} is indeed well-defined.  In doing so,
we make use of the sup-norm
\begin{align}
\label{eq:q-norm}
\qnorm{Q_1 - Q_2} & \defn \sup_{\paramjoint,\paramjoint' \in
  \DomTheta} \Big| Q_1(\paramjoint \mid \paramjoint') -
Q_2(\paramjoint \mid \paramjoint') \Big|.
\end{align}
For a radius $r > 0$, define the ball
$\Ball_2(r; \mustar) = \{ \mu \in \real^\usedim \, \mid \, \|\mu -
\mustar\|_2 \leq r \}$.
We require in the following that the observation
densities satisfy the following boundedness condition
\begin{align}
\label{EqnDensityBounded}
\sup_{\paramjoint \in \Ball_2(\rad; \thetastar)} \Exs \Big[ \max_{z_i
    \in [\nstates]} \big|\log \obsprob{X_i}{z_i} \big| \Big] < \infty.
\end{align}
%
\begin{props}
\label{PropExistence}
Under the previously stated assumptions, the population function
$\QBAR$ defined in equation~\eqref{EqnDefnQfunPop} exists.
\end{props}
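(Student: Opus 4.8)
The plan is to build $\QBAR$ as the $\kdim \to \infty$ limit of the truncated population functions $\qfunk{\kdim}$, whose existence has already been established via stationarity in~\eqref{eq:Qpopdecomposed}, and then to show that the sequence $\{\EE\, \qfunn{\numobs}\}_{\numobs \geq 1}$ converges to this same object in the sup-norm~\eqref{eq:q-norm}. The mechanism that drives everything is the geometric forgetting (smoother stability) of the hidden chain implied by the Doeblin-type mixing condition~\eqref{ass:mixing}: the two-sided smoothing marginals $p(z_i \mid x_{i-k}^{i+k}; \paramjointtwo)$ and $p(z_i \mid x_1^\numobs; \paramjointtwo)$ can differ only through observations lying at distance at least $\kdim$ from index $i$, and that influence decays like $\mixcoef^\kdim$.

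First I would record the elementary consequence of stationarity already noted in the text. Taking expectations in~\eqref{eq:sampletruncq}, every interior transition term and every observation term is independent of its index, so that $\EE\, \qfunnk{\numobs}{\kdim} = \qfunk{\kdim} + B_0/\numobs$, where $B_0 = \EE[\Exs_{Z_0 \mid x_{-\kdim}^{\kdim}, \paramjointtwo} \log p(Z_1; \paramtrans)]$ is the single boundary term. Because the mixing condition forces $\statmin > 0$, we have $|B_0| \leq \max_z |\log \pistat(z)| < \infty$ uniformly over $(\paramjointone, \paramjointtwo) \in \DomTheta \times \DomTheta$, and hence $\qnorm{\EE\, \qfunnk{\numobs}{\kdim} - \qfunk{\kdim}} \leq \UNICON/\numobs$.

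The heart of the argument is to bound $\qnorm{\EE\, \qfunn{\numobs} - \EE\, \qfunnk{\numobs}{\kdim}}$. Term by term, the two functions differ only in whether the inner conditional expectation of $Z_i$ (respectively of the pair $(Z_{i-1}, Z_i)$) is taken under $p(\cdot \mid x_1^\numobs; \paramjointtwo)$ or under $p(\cdot \mid x_{i-k}^{i+k}; \paramjointtwo)$. I would split $\{1, \dots, \numobs\}$ into the interior indices $\kdim \leq i \leq \numobs - \kdim$ and the $\Order(\kdim)$ boundary indices. For an interior index I would invoke a forgetting lemma of the form $\tvnorm{p(\cdot \mid x_{i-k}^{i+k}; \paramjointtwo)}{p(\cdot \mid x_1^\numobs; \paramjointtwo)} \leq \UNICON\, \mixcoef^\kdim$, holding uniformly over the observation sequence and over $\paramjointtwo \in \DomTheta$. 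The transition contribution is then at most $\UNICON \mixcoef^\kdim$, since $\log p(Z_i \mid Z_{i-1}; \paramtrans)$ is bounded (the mixing condition keeps the transition entries away from zero). For the observation contribution I would write the difference as $\sum_{z} \big( p(z \mid x_{i-k}^{i+k}; \paramjointtwo) - p(z \mid x_1^\numobs; \paramjointtwo) \big) \log p(x_i \mid z; \paramobs)$, bound its absolute value by the total-variation distance times $\max_z |\log p(x_i \mid z; \paramobs)|$, and then take expectations: since the forgetting factor $\mixcoef^\kdim$ is deterministic it factors out, and the remaining expectation is finite by the boundedness condition~\eqref{EqnDensityBounded}. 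Each of the $\Order(\kdim)$ boundary terms is controlled crudely by $2\max_z|\log p(x_i\mid z; \paramobs)|$, whose expectation is again finite, and is then divided by $\numobs$. Collecting the pieces yields the uniform bound
\begin{align*}
\qnorm{\EE\, \qfunn{\numobs} - \EE\, \qfunnk{\numobs}{\kdim}} \;\leq\; \UNICON\, \mixcoef^\kdim + \UNICON\, \frac{\kdim}{\numobs}.
\end{align*}

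Combining the last display with $\qnorm{\EE\, \qfunnk{\numobs}{\kdim} - \qfunk{\kdim}} \leq \UNICON/\numobs$ gives $\limsup_{\numobs \to \infty} \qnorm{\EE\, \qfunn{\numobs} - \qfunk{\kdim}} \leq \UNICON \mixcoef^\kdim$. Applying this at two truncation levels $\kdim$ and $\ell$ shows $\qnorm{\qfunk{\kdim} - \qfunk{\ell}} \leq \UNICON (\mixcoef^\kdim + \mixcoef^\ell)$, so $\{\qfunk{\kdim}\}_{\kdim \geq 1}$ is Cauchy in the sup-norm and converges to a limit that we name $\QBAR$. Feeding this back, the triangle inequality $\qnorm{\EE\, \qfunn{\numobs} - \QBAR} \leq \qnorm{\EE\, \qfunn{\numobs} - \qfunk{\kdim}} + \qnorm{\qfunk{\kdim} - \QBAR}$ can be made arbitrarily small by first sending $\numobs \to \infty$ and then $\kdim \to \infty$, which proves that the limit~\eqref{EqnDefnQfunPop} exists (indeed in sup-norm). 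The main obstacle is the forgetting lemma: establishing that the two-sided smoothing marginals forget observations outside a $2\kdim$-window at a \emph{deterministic} geometric rate $\mixcoef^\kdim$, uniformly over the realized observations and over $\paramjointtwo \in \DomTheta$. This is precisely where the Doeblin-type condition~\eqref{ass:mixing} is needed, and extra care is required because the observation log-densities appearing in the $Q$-function are unbounded and only integrable through~\eqref{EqnDensityBounded}.
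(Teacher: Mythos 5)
Your proposal is correct and takes essentially the same route as the paper: both hinge on the truncation-approximation estimate~\eqref{EqnPopulationTruncateOne}---proved exactly as you outline, via the uniform smoother-forgetting bound of Lemma~\ref{lem:twosided_approx} together with the integrability condition~\eqref{EqnDensityBounded}---followed by a Cauchy/completeness argument in the sup-norm~\eqref{eq:q-norm}. The only differences are organizational: the paper deduces that $\{\eqfunn{\numobs}\}_{\numobs \geq 1}$ is Cauchy in $\numobs$ directly, whereas you first show $\{\qfunk{\kdim}\}_{\kdim \geq 1}$ is Cauchy in $\kdim$ and then pass back to $\eqfunn{\numobs}$; moreover your explicit interior/boundary split of indices (yielding the $\kdim/\numobs$ term) is, if anything, slightly more careful than the paper's uniform invocation of its forgetting lemma.
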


\noindent The proof of this claim is given in
Appendix~\ref{AppPropExistence}.  It hinges on the following auxiliary
claim, which bounds the difference between $\eqfunn{\numobs}$ and the
$\kdim$-truncated $Q$-function as
\begin{align}
 \label{EqnPopulationTruncateOne}
\qnorm{\eqfunn{\numobs} - \qfunpopextendk} & \leq \frac{c \,
  \nstates^5}{\mixcoefeps^8 \statmin^2} \big (1 - \mixcoefeps \statmin
\big)^\kdim + \frac{1}{n} \log \statmin^{-1},
\end{align}
where $\statmin := \min_{\paramtrans \in \paramspacetrans, j \in
  [\nstates]} \pistat(j \mid \paramtrans)$ is the minimum probability
in the stationary distribution, and $\mixcoefeps$ is the mixing
constant from equation~\eqref{ass:mixing}. Note that the dependencies
on $\mixcoefeps$ and $\statmin$ are not optimized here since it would not
help to illustrate the main ideas more clearly. Since this bound holds for
all $\numobs$, it shows that the population function $\QBAR$ can be
uniformly approximated by $\qfunpopextendk$, with the approximation
error decreasing geometrically as the truncation level $\kdim$ grows.
This fact plays an important role in the analysis to follow.


\subsection{Analysis of updates based on \texorpdfstring{$\QBAR^k$}{Lg}}

Our ultimate goal is to establish a bound on the difference between
the sample-based Baum-Welch estimate and $\trueparamjoint$, in
particular showing contraction of the Baum-Welch update towards the
true parameter.  Our strategy for doing so involves first analyzing
the Baum-Welch iterates at the population level, which is the focus of
this section.

The quantity $\QBAR$ is significant for the EM updates because the
parameter $\trueparamjoint$ satisfies the self-consistency property
$\trueparamjoint = \argmax_{\paramjoint} \QBAR(\paramjoint \mid
\trueparamjoint)$.  In the i.i.d. setting, the function $\QBAR$ can
often be computed in closed form, and hence directly analyzed, as was
done in past work~\cite{BalWaiYu14}.  In the HMM case, this function
$\QBAR$ no longer has a closed form, so an alternative route is
needed.  Here we analyze the population version via the truncated
function $\QBAR^\kdim$ \eqref{eq:Qpopdecomposed} instead, where
$\kdim$ is a given truncation level (to be chosen in the sequel).
Although $\trueparamjoint$ is no longer a fixed point of
$\QBAR^\kdim$, the bound~\eqref{EqnPopulationTruncateOne} combined with
the assumption of strong concavity of $\QBAR^\kdim$ imply an upper bound
on the distance of the maximizers of $\QBAR^\kdim$ and $\QBAR$.

With this setup, we consider an idealized population-level algorithm that, 
based on some initialization $\thetatil^0 \in \paramspacejoint =
\elltwoballr{r}{\trueparamobs}\times \paramspacetrans$, generates the
sequence of iterates 
\begin{align}
\label{EqnIdealized}
\thetatil^{t+1} = \emoppoptrunc{\thetatil^t} := \argmax_{\paramjoint
  \in \paramspacejoint}
\qfunpoptrunc{\paramjoint}{\paramjointtil^{t}}.
\end{align}
Since $\QBAR^k$ is an approximate version of $\QBAR$, the update
operator $\MBAR^k$ should be understood as an approximation to the
idealized population EM operator $\MBAR$ where the maximum is taken
with respect to $\qfun$.  As part (a) of the following theorem shows,
the approximation error is well-controlled under suitable conditions.
We analyze the convergence of the sequence
$\{\thetatil^t\}_{t=0}^\infty$ in terms of the norm $\addnorm{\cdot} :
\Omega_{\paramobs} \times \paramspacetrans \to \RN^+$ given by
\begin{align}
\label{eq:addnorm}
\addnorm{\paramjoint - \trueparamjoint}=
\addnorm{(\paramobs,\paramtrans) - (\trueparamobs, \trueparamtrans)}
\defn \norm{\paramobs-\trueparamobs} +
\norm{\paramtrans-\trueparamtrans}.
\end{align}
Contraction in this norm implies that both parameters
$\paramobs,\paramtrans$ converge linearly to the true parameter.

\paragraph{Conditions on $\QBAR^\kdim$:} Let us 
now introduce the conditions on the truncated function $\QBAR^\kdim$
that underlie our analysis.  For a radius $r > 0$, 
we concentrate on showing conditions in the Cartesian product
\begin{align*}
\paramspacejoint & \defn \Ball_2(r; \mustar) \times \Omega_\beta,
\end{align*}
where $\Omega_\beta$ is the set of allowable HMM transition
parameters.  First, let us say that the function
$\qfunpoptrunc{\cdot}{\paramjoint'}$ is
\emph{$(\lambda_{\paramobs},\lambda_{\paramtrans})$-strongly concave }
in $\paramspacejoint$ if
\begin{subequations}
\label{EqnStrongConcavity}
\begin{align}
\qfunpoptruncobs{\paramobs_1}{\trueparamjoint} -
\qfunpoptruncobs{\paramobs_2}{\trueparamjoint} - \langle \nabla_{\paramobs}
\qfunpoptruncobs{\paramobs_2}{\trueparamjoint}, \paramobs_1 -
\paramobs_2 \rangle &\leq -\frac{\lambda_{\paramobs}}{2}
\norm{\paramobs_1 - \paramobs_2}^2
\\ \mbox{and} \qquad
\qfunpoptrunctrans{\paramtrans_1}{\trueparamjoint} -
\qfunpoptrunctrans{\paramtrans_2}{\trueparamjoint} - \langle \nabla_{\paramtrans}
\qfunpoptrunctrans{\paramtrans_2}{\trueparamjoint}, \paramtrans_1 -
\paramtrans_2 \rangle &\leq -\frac{\lambda_{\paramtrans}}{2}
\norm{\paramtrans_1 - \paramtrans_2}^2
\end{align}
\end{subequations}
for all $(\paramobs_1,\paramtrans_1), (\paramobs_2, \paramtrans_2) \in
\paramspacejoint$. 

Second, we impose \emph{first-order stability} conditions on
the gradients of each component of $\QBAR^\kdim$:
\bcar
\item For each $\paramobs \in \Omega_{\paramobs}, \paramjoint' \in
  \paramspacejoint$, we have
\begin{subequations}
\label{EqnObsFOS}
\begin{align}
\label{EqnObsFOSOne}
\|\nabla_\paramobs
\qfunpoptruncobs{\paramobs}{\paramobs',\paramtrans'} -
\nabla_\paramobs \qfunpoptruncobs{\paramobs}{\trueparamobs,
  \paramtrans'} \|_2 &\leq L_{\paramobs,1} \|\paramobs' -
\trueparamobs\|_2 \\
\label{EqnObsFOSTwo}
\|\nabla_\paramobs
\qfunpoptruncobs{\paramobs}{\paramobs',\paramtrans'} -
\nabla_\paramobs \qfunpoptruncobs{\paramobs}{\paramobs',
  \trueparamtrans} \|_2 & \leq L_{\paramobs,2} \norm{\paramtrans' -
\trueparamtrans},
\end{align}
\end{subequations}
We refer to this condition as $L_\paramobs$-FOS for short.
\item Secondly, for all $\paramtrans \in \paramspacetrans,
  \paramjoint'\in \paramspacejoint$, we require that
\begin{subequations}
\label{EqnTransFOS}
\begin{align}
\label{EqnTransFOSOne}
\|\nabla_\paramtrans
\qfunpoptrunctrans{\paramtrans}{\paramobs',\paramtrans'} -
\nabla_\paramtrans \qfunpoptrunctrans{\paramtrans}{\trueparamobs,
  \paramtrans'} \|_2 &\leq L_{\paramtrans,1} \|\paramobs' -
\trueparamobs\|_2 \\
\label{EqnTransFOSTwo}
\|\nabla_\paramtrans
\qfunpoptrunctrans{\paramtrans}{\paramobs',\paramtrans'} -
\nabla_\paramtrans \qfunpoptrunctrans{\paramtrans}{\paramobs',
  \trueparamtrans} \|_2 & \leq L_{\paramtrans,2} \norm{\paramtrans' -
\trueparamtrans}.
\end{align}
\end{subequations}
We refer to this condition as $L_\paramtrans$-FOS for short. 
\ecar
As we show in Section~\ref{sec:normal_HMM}, these conditions hold for
concrete models.


\paragraph{Convergence guarantee for $\QBAR^k$-updates:}

We are now equipped to state our main convergence guarantee for the
updates.  It involves the quantities
\begin{align}
\label{eq:Ldef}
L \defn \max \{L_{\paramobs_1},L_{\paramobs_2}\} + \max
\{L_{\paramtrans_1}, L_{\paramtrans_2}\}, \quad \lambda \defn
\min\{\lambda_{\paramobs}, \lambda_{\paramtrans}\} \quad \mbox{and }
\quad \kappa \defn \frac{L}{\lambda},
\end{align}
with $\kappa$ generally required to be smaller than one, 
as well as the additive norm $\addnorm{\cdot}$ from
equation~\eqref{eq:addnorm}.

Part (a) of the theorem controls the \emph{approximation error}
induced by using the $\kdim$-truncated function $\QBAR^\kdim$ as
opposed to the exact population function $\QBAR$, whereas part (b)
guarantees a \emph{geometric rate of convergence} in terms of $\kappa$
defined above in equation~\eqref{eq:Ldef}.

\begin{theos}
\label{ThmPopContraction}
\begin{enumerate}[(a)]
\item {\emph{Approximation guarantee:}} Under the mixing
  condition~\eqref{ass:mixingdef}, density boundedness
  condition~\eqref{EqnDensityBounded}, and
  $(\lambda_{\paramobs},\lambda_{\paramtrans})$-strong concavity
  condition~\eqref{EqnStrongConcavity}, there is a universal constant
  $c_0$ such that
\begin{align}
\label{EqnApproxBound}
\addnorm{\MBAR^k(\theta)- \MBAR(\theta)}^2 & \leq \underbrace{c_0
  \frac{\nstates^5 }{\lambda \, \mixcoefeps^8 \statmin^2}
  \big(1-\mixcoefeps \statmin \big)^\kdim}_{=: \BOUNDFUN^2(\kdim)}
\qquad \mbox{for all $\theta \in \DomTheta$,}
\end{align}
where $\nstates$ is the number of states, and $\statmin \defn \min
\limits_{\paramtrans \in \paramspacetrans} \min \limits_{j \in [\nstates]}
\pistat(j;  \paramtrans)$.

\item {\emph{Convergence guarantee:}} Suppose in addition that the
  $(L_{\paramobs},L_{\paramtrans})$-FOS
  conditions~\eqref{EqnObsFOS},\eqref{EqnTransFOS} holds with parameter
  $\kappa \in (0,1)$ as defined in~\eqref{eq:Ldef} for
  $\paramjoint,\paramjoint' \in \paramspacejoint =
  \elltwoballr{r}{\trueparamobs} \times \paramspacetrans$,
  and that the truncation parameter $\kdim$ is sufficiently large to
  ensure that 
\begin{align*}
\BOUNDFUN(\kdim) \leq \big(1 - \kappa \big) \rad - \kappa
\max_{\paramtrans \in \paramspacetrans} \norm{\paramtrans -
  \trueparamtrans}.
\end{align*}
Then given an initialization $\thetatil^0 \in \paramspacejoint$, the
iterates $\{\thetatil^t\}_{t=0}^\infty$ generated by the $\MBAR^k$
operator satisfy the bound
\begin{align}
\label{EqnFinalPopBound}
\addnorm{\thetatil^{t} - \thetastar} & \leq
\kappa^{t} \addnorm{\thetatil^0 - \thetastar} +
\frac{1}{1 - \kappa} \BOUNDFUN(\kdim).
\end{align}
\end{enumerate}
\end{theos}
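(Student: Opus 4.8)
The plan is to first promote the finite-$\numobs$ truncation estimate to a sup-norm bound between the exact and truncated population objects, and then turn closeness of functions into closeness of their maximizers. Since the auxiliary bound~\eqref{EqnPopulationTruncateOne} holds for every $\numobs$, taking $\numobs \to \infty$ in the definition~\eqref{EqnDefnQfunPop} annihilates the $\tfrac{1}{\numobs}\log\statmin^{-1}$ term and leaves
\[
\qnorm{\QBAR - \qfunpopextendk} \;\leq\; \frac{c\,\nstates^5}{\mixcoefeps^8 \statmin^2}\big(1-\mixcoefeps\statmin\big)^\kdim .
\]
I would then fix $\theta \in \DomTheta$ and run the standard maximizer-comparison argument on each decoupled block of~\eqref{eq:Qpopdecomposed} separately. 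For the observation block, write $f=\qfunpoptruncobs{\cdot}{\theta}$ (strongly concave by~\eqref{EqnStrongConcavity}, with maximizer $\emoppoptruncobs{\theta}$) and $g=\qfunpopobs{\cdot}{\theta}$ (maximized at $\emoppopobs{\theta}$): strong concavity gives $f(\emoppoptruncobs{\theta})-f(\emoppopobs{\theta})\geq \tfrac{\lambda_\paramobs}{2}\norm{\emoppoptruncobs{\theta}-\emoppopobs{\theta}}^2$, optimality of $\emoppopobs{\theta}$ gives $g(\emoppopobs{\theta})\geq g(\emoppoptruncobs{\theta})$, and telescoping the difference through $g$ bounds the left side by $2\qnorm{\QBAR-\qfunpopextendk}$. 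Hence $\norm{\emoppoptruncobs{\theta}-\emoppopobs{\theta}}^2 \leq \tfrac{4}{\lambda_\paramobs}\qnorm{\QBAR-\qfunpopextendk}$, with the identical bound (replacing $\lambda_\paramobs$ by $\lambda_\paramtrans$) for the transition block. Combining the two blocks via $(a+b)^2\leq 2(a^2+b^2)$ and $\lambda=\min\{\lambda_\paramobs,\lambda_\paramtrans\}$, and folding all universal constants into $c_0$, produces $\addnorm{\MBAR^\kdim(\theta)-\MBAR(\theta)}^2 \leq \BOUNDFUN^2(\kdim)$.

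\textbf{Part (b): one-step contraction.} The engine of the convergence proof is a single-step estimate $\addnorm{\MBAR^\kdim(\theta)-\thetastar}\leq \kappa\,\addnorm{\theta-\thetastar}+\BOUNDFUN(\kdim)$, which I would build in two moves. First, I compare the truncated update at conditioning $\theta=(\paramobs',\paramtrans')$ against that at conditioning $\thetastar$. Setting $\mu^+=\emoppoptruncobs{\theta}$ and $\bar\mu=\emoppoptruncobs{\thetastar}$, the interior stationarity conditions $\nabla_\paramobs\qfunpoptruncobs{\mu^+}{\theta}=0$ and $\nabla_\paramobs\qfunpoptruncobs{\bar\mu}{\thetastar}=0$ hold, and $\lambda_\paramobs$-strong concavity of $\qfunpoptruncobs{\cdot}{\thetastar}$ at the pair $(\mu^+,\bar\mu)$, followed by substitution of these two conditions, yields
\[
\lambda_\paramobs \norm{\mu^+-\bar\mu}^2 \;\leq\; \inprod{\nabla_\paramobs\qfunpoptruncobs{\mu^+}{\theta}-\nabla_\paramobs\qfunpoptruncobs{\mu^+}{\thetastar}}{\mu^+-\bar\mu}.
\]
Inserting the intermediate conditioning point $(\trueparamobs,\paramtrans')$ and applying the $L_\paramobs$-FOS bounds~\eqref{EqnObsFOSOne}--\eqref{EqnObsFOSTwo} controls the gradient gap by $L_{\paramobs,1}\norm{\paramobs'-\trueparamobs}+L_{\paramobs,2}\norm{\paramtrans'-\trueparamtrans}\leq \max\{L_{\paramobs,1},L_{\paramobs,2}\}\,\addnorm{\theta-\thetastar}$; a Cauchy--Schwarz step and cancellation give $\norm{\mu^+-\bar\mu}\leq \tfrac{\max\{L_{\paramobs,1},L_{\paramobs,2}\}}{\lambda_\paramobs}\addnorm{\theta-\thetastar}$. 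Repeating this verbatim for the transition block and summing, while bounding $\tfrac{\max\{L_{\paramobs,1},L_{\paramobs,2}\}}{\lambda_\paramobs}+\tfrac{\max\{L_{\paramtrans,1},L_{\paramtrans,2}\}}{\lambda_\paramtrans}\leq \tfrac{L}{\lambda}=\kappa$ through the definitions~\eqref{eq:Ldef}, gives $\addnorm{\MBAR^\kdim(\theta)-\MBAR^\kdim(\thetastar)}\leq \kappa\,\addnorm{\theta-\thetastar}$. Second, since self-consistency gives $\MBAR(\thetastar)=\thetastar$, part (a) at $\theta=\thetastar$ shows $\addnorm{\MBAR^\kdim(\thetastar)-\thetastar}=\addnorm{\MBAR^\kdim(\thetastar)-\MBAR(\thetastar)}\leq \BOUNDFUN(\kdim)$, and a triangle inequality delivers the single-step estimate.

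\textbf{Part (b): feasibility and iteration.} It remains to iterate the single-step estimate, and this is where the one genuinely delicate point lives: the stationarity conditions $\nabla=0$ are valid only when the maximizers lie interior to $\elltwoballr{\rad}{\trueparamobs}$, so I must verify inductively that every iterate $\thetatil^{t}$ remains in $\paramspacejoint$ with its $\paramobs$-component inside the radius-$\rad$ ball. Writing $\thetatil^{t}=(\paramobstilde^{t},\paramtranstilde^{t})$ and assuming $\thetatil^{t}\in\paramspacejoint$, so that $\addnorm{\thetatil^{t}-\thetastar}\leq \rad+\max_{\paramtrans\in\paramspacetrans}\norm{\paramtrans-\trueparamtrans}$, the single-step estimate gives
\[
\norm{\paramobstilde^{t+1}-\trueparamobs} \;\leq\; \addnorm{\MBAR^\kdim(\thetatil^{t})-\thetastar} \;\leq\; \kappa\big(\rad+\max_{\paramtrans\in\paramspacetrans}\norm{\paramtrans-\trueparamtrans}\big)+\BOUNDFUN(\kdim)\;\leq\;\rad,
\]
where the final inequality is exactly the hypothesis $\BOUNDFUN(\kdim)\leq (1-\kappa)\rad-\kappa\max_{\paramtrans\in\paramspacetrans}\norm{\paramtrans-\trueparamtrans}$; this closes the induction and certifies that the contraction holds at every step. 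Unrolling $\addnorm{\thetatil^{t+1}-\thetastar}\leq \kappa\,\addnorm{\thetatil^{t}-\thetastar}+\BOUNDFUN(\kdim)$ and summing the geometric series $\sum_{j\geq 0}\kappa^{j}=\tfrac{1}{1-\kappa}$ then yields the claimed bound~\eqref{EqnFinalPopBound}. I expect the main obstacle to be precisely this coupling of contraction and feasibility: the FOS-based contraction presupposes interior optima, yet interiority can only be secured through the contraction itself, so the radius condition has to be calibrated exactly so the induction closes.
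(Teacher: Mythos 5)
Your overall route coincides with the paper's: part (a) via the $\numobs\to\infty$ limit of the bound~\eqref{EqnPopulationTruncateOne} plus a maximizer-comparison argument under strong concavity, and part (b) via a one-step contraction of $\MBAR^\kdim$ toward $\MBAR^\kdim(\thetastar)$, combined with part (a) at $\theta=\thetastar$ and a geometric series. The gap lies in how you justify the first-order conditions that drive the contraction. You assume interior stationarity, $\nabla_\paramobs \qfunpoptruncobs{\mu^+}{\theta}=0$ for $\mu^+ = \emoppoptruncobs{\theta}$, and propose to certify interiority by an induction that uses the single-step contraction to conclude $\norm{\paramobstilde^{t+1}-\trueparamobs}\leq \rad$. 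This is circular: the contraction estimate at iteration $t$ is proved only under the hypothesis that $\mu^+=\paramobstilde^{t+1}$ is an interior maximizer, yet interiority of $\paramobstilde^{t+1}$ is exactly what the induction step is supposed to deliver; the induction hypothesis ($\thetatil^t \in \paramspacejoint$) constrains only the conditioning point, not where the new maximizer lands. Worse, even granting the estimate, your hypothesis on $\BOUNDFUN(\kdim)$ yields only $\kappa\big(\rad + \max_{\paramtrans \in \paramspacetrans}\norm{\paramtrans-\trueparamtrans}\big)+\BOUNDFUN(\kdim)\leq \rad$ with equality possible, so the maximizer could sit exactly on the sphere, where $\nabla=0$ fails. (The base point $\bar\mu=\emoppoptruncobs{\thetastar}$ is unproblematic: part (a) and $\MBAR(\thetastar)=\thetastar$ give $\norm{\bar\mu-\trueparamobs}\leq \BOUNDFUN(\kdim)<\rad$.)

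The paper's proof sidesteps this entirely: since $\MBAR^\kdim(\theta)$ is by definition a maximizer constrained to $\paramspacejoint$, it satisfies the variational inequality $\inprod{\nabla_\paramobs \qfunpoptruncobs{\mu^+}{\theta}}{u-\mu^+}\leq 0$ for every feasible $u$, whether or not $\mu^+$ is interior. Writing this for $\mu^+$ (with $u=\bar\mu$) and for $\bar\mu$ (with $u=\mu^+$) and adding the two inequalities gives
\begin{align*}
0 \;\leq\; \inprod{\nabla_\paramobs \qfunpoptruncobs{\bar\mu}{\thetastar} - \nabla_\paramobs \qfunpoptruncobs{\mu^+}{\theta}}{\bar\mu - \mu^+},
\end{align*}
after which your strong-concavity and FOS steps go through verbatim (and likewise for the transition block, whose maximizer may also lie on the boundary of $\paramspacetrans$), yielding the contraction~\eqref{EqnAuxiliary} for \emph{every} $\theta \in \paramspacejoint$ with no interiority claim anywhere. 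Once the contraction holds unconditionally, the iterates remain in $\paramspacejoint$ by construction of the constrained update~\eqref{EqnIdealized}, and your feasibility calculation with the condition $\BOUNDFUN(\kdim)\leq (1-\kappa)\rad - \kappa\max_{\paramtrans\in\paramspacetrans}\norm{\paramtrans-\trueparamtrans}$ plays only the bookkeeping role it plays in the paper; the rest of your argument (the triangle inequality through $\MBAR^\kdim(\thetastar)$ and the geometric series) is correct as written.
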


Note that the subtlety here is that $\thetastar$ is no longer a fixed
point of the operator $\MBAR^k$, due to the error induced by the
$\kdim^{th}$-order truncation.  Nonetheless, under a mixing condition,
as the bounds~\eqref{EqnApproxBound} and~\eqref{EqnFinalPopBound}
show, this approximation error is controlled, and decays exponentially
in $\kdim$.  The proof of the recursive bound~\eqref{EqnFinalPopBound}
is based on first showing that
\begin{align}
\label{EqnAuxiliary}
\addnorm{\MBAR^k(\theta) - \MBAR^k(\thetastar)} & \leq \kappa
\addnorm{\paramjoint - \trueparamjoint}
\end{align}
for any $\paramjoint \in \paramspacejoint$. Inequality
\eqref{EqnAuxiliary} is equivalent to stating that the operator
$\MBAR^k$ is contractive, i.e. that applying $\MBAR^k$ to the pair
$\theta$ and $\thetastar$ always decreases the distance.

Finally, when Theorem~\ref{ThmPopContraction} is applied to a concrete
model, the task is to find the biggest $r$ and $\paramspacetrans$ such
that the conditions in the theorem are satisfied, and we do so for the
Gaussian output HMM in Section~\ref{sec:normal_HMM}.


\subsection{Sample-based results}

We now turn to a result that applies to the sample-based form of the
Baum-Welch algorithm---that is, corresponding to the updates that are
actually applied in practice.  For a tolerance parameter $\delta \in
(0,1)$, we let $\BOUNDFUN_{\subsize}(\delta, \kdim)$ be the smallest
positive scalar such that
\begin{subequations}
\begin{align}
\label{EqnVarPhiBound}
\sup_{\theta \in \Ball_2(\rad; \thetastar)} \mprob \Big[
  \addnorm{\emopsampn{\subsize}{\theta} -
  \emopsamptruncn{\subsize}{\theta}} \geq
  \BOUNDFUN_{\subsize}(\delta, \kdim) \Big] & \leq \delta.
\end{align}
This quantity bounds the approximation error induced by the
$\kdim$-truncation, and is the sample-based analogue of the quantity
$\BOUNDFUN(\kdim)$ appearing in Theorem~\ref{ThmPopContraction}(a).
For each $\delta \in (0,1)$, we let
$\epsilonobs_{\subsize}(\delta,\kdim)$ and
$\epsilontrans_{\subsize}(\delta,\kdim)$ denote the smallest positive
scalars such that
\begin{align}
\label{eq:defepsmk}
\mprob \big[ \norm{\emopsamptruncobs{\paramjoint} -
  \emoppoptruncobs{\paramjoint}} \geq
  \epsilonobs_{\subsize}(\delta, \kdim) \Big] \leq \delta, \quad
\mbox{and} \quad
\mprob \big[ \norm{\emopsamptrunctrans{\paramjoint} -
  \emoppoptrunctrans{\paramjoint}} \geq
  \epsilontrans_{\subsize}(\delta, \kdim) \Big] \leq \delta
\end{align}
for all $\theta \in \Ball_2(\mustar; r) \times \Omega_\paramtrans$.
\end{subequations}
Furthermore we define $\epsilon_{\subsize}(\delta,\kdim) \defn
\epsilonobs_{\subsize}(\delta,\kdim) +
\epsilontrans_{\subsize}(\delta,\kdim)$.  For a given truncation level
$\kdim$, these values give an upper bound on the difference between
the population and sample-based $M$-operators, as induced by having
only a finite number $\numobs$ of samples.

\begin{theos}[Sample Baum-Welch]
\label{ThmSampContraction}
Suppose that the truncated population EM operator $\MBAR^\kdim$
satisfies the local contraction bound~\eqref{EqnAuxiliary} with
parameter $\kappa \in (0,1)$ in $\paramspacejoint$.
For a given sample size $\numobs$, suppose
that $(\kdim, \numobs)$ are sufficiently large to ensure that
\begin{subequations}
\begin{align}
\label{EqnConditionSamplesplit}
\BOUNDFUN_{\subsize}(\delta, \kdim) + \epsilonobs_{\subsize}
\big(\delta, \kdim \big) + \BOUNDFUN(\kdim) & \leq (1 - \kappa) \,
\rad - \kappa \max_{\paramtrans \in \paramspacetrans}\norm{\paramtrans - \trueparamtrans} .
\end{align}
Then given any initialization $\thetahat^0 \in \paramspacejoint$, with probability at least $1 - 2 \delta$, the
Baum-Welch sequence $\{\thetahat^t\}_{t=0}^\infty$ satisfies
the bound
\begin{align}
\label{EqnSampleSplitContraction}
\addnorm{\thetahat^{t}- \thetastar} & \leq \underbrace{\kappa^t
  \addnorm{\thetahat^0 - \thetastar}}_{\mbox{Geometric decay}} +
\underbrace{\frac{1}{1-\kappa} \Big \{ 2 \BOUNDFUN_{\subsize}
  \big(\delta, \kdim \big) + \epsilon_{\subsize}\big(\delta, \kdim
  \big) + \BOUNDFUN(\kdim) \Big \}}_{\mbox{Residual error $\minimaxrad$}}.
\end{align}
\end{subequations}
\end{theos}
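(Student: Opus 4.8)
The plan is to reduce the theorem to a deterministic, one-step contraction estimate that holds simultaneously at every iterate on a single high-probability event, and then to unroll it. The Baum--Welch update is $\thetahat^{t+1} = \emopsampn{\subsize}{\thetahat^t}$, but the only operator under our control is the truncated \emph{population} operator $\MBAR^\kdim$, for which the contraction bound~\eqref{EqnAuxiliary} is assumed. I would therefore interpolate along the chain $\emopsampnoparam \to \emopsamptruncnoparam \to \emoppoptruncnoparam \to \emoppopnoparam$, peeling off one approximation layer per link. Concretely, fixing a feasible iterate $\thetahat^t \in \paramspacejoint$ and applying the triangle inequality for $\addnorm{\cdot}$ gives
\begin{align*}
\addnorm{\thetahat^{t+1} - \thetastar}
&\leq \addnorm{\emopsampn{\subsize}{\thetahat^t} - \emopsamptruncn{\subsize}{\thetahat^t}}
+ \addnorm{\emopsamptruncn{\subsize}{\thetahat^t} - \emoppoptrunc{\thetahat^t}} \\
&\qquad + \addnorm{\emoppoptrunc{\thetahat^t} - \emoppoptrunc{\thetastar}}
+ \addnorm{\emoppoptrunc{\thetastar} - \thetastar}.
\end{align*}
The first term is at most $\BOUNDFUN_{\subsize}(\delta,\kdim)$ by~\eqref{EqnVarPhiBound}; the second, split into its observation and transition parts, is at most $\epsilonobs_{\subsize}(\delta,\kdim) + \epsilontrans_{\subsize}(\delta,\kdim) = \epsilon_{\subsize}(\delta,\kdim)$ by~\eqref{eq:defepsmk}; the third is at most $\kappa\,\addnorm{\thetahat^t - \thetastar}$ by the contraction hypothesis~\eqref{EqnAuxiliary}; and the fourth equals $\addnorm{\emoppoptrunc{\thetastar} - \emoppop{\thetastar}} \leq \BOUNDFUN(\kdim)$, using the self-consistency $\thetastar = \emoppop{\thetastar}$ together with Theorem~\ref{ThmPopContraction}(a). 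This yields the additive one-step recursion $\addnorm{\thetahat^{t+1}-\thetastar} \leq \kappa\,\addnorm{\thetahat^t-\thetastar} + \BOUNDFUN_{\subsize} + \epsilon_{\subsize} + \BOUNDFUN(\kdim)$.

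The delicate part is that all four bounds are only valid for arguments in $\paramspacejoint = \Ball_2(\rad;\mustar)\times\paramspacetrans$, so I must verify by induction that every iterate stays feasible. Since the transition component lies in $\paramspacetrans$ automatically (the $M$-step maximizes over that set), the only thing to check is that the observation component of $\thetahat^{t+1}$, namely $\emopsampobs{\thetahat^t}$, remains in $\Ball_2(\rad;\mustar)$. Running the same four-term decomposition with the observation-only operators and bounding the contraction term via $\addnorm{\thetahat^t-\thetastar}\leq \rad + \max_{\paramtrans\in\paramspacetrans}\norm{\paramtrans-\trueparamtrans}$ (valid whenever $\thetahat^t\in\paramspacejoint$) gives
\begin{align*}
\norm{\emopsampobs{\thetahat^t} - \trueparamobs}
\leq \BOUNDFUN_{\subsize}(\delta,\kdim) + \epsilonobs_{\subsize}(\delta,\kdim)
+ \kappa\big(\rad + \max_{\paramtrans\in\paramspacetrans}\norm{\paramtrans-\trueparamtrans}\big) + \BOUNDFUN(\kdim).
\end{align*}
Condition~\eqref{EqnConditionSamplesplit} is precisely what forces this right-hand side to be at most $\rad$, closing the induction; this also explains why the feasibility condition only involves $\epsilonobs_{\subsize}$ and the radius $\rad$ about $\mustar$, rather than the full $\epsilon_{\subsize}$.

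With feasibility secured, the recursion holds for all $t$, and unrolling it while summing the geometric series $\sum_{j\geq 0}\kappa^j = (1-\kappa)^{-1}$ produces the geometric-decay-plus-residual bound~\eqref{EqnSampleSplitContraction}, the residual being $(1-\kappa)^{-1}$ times the accumulated approximation error from the three layers $\BOUNDFUN_{\subsize}$, $\epsilon_{\subsize}$, $\BOUNDFUN(\kdim)$ (tracking the sample-truncation error both along the trajectory and at the reference point $\thetastar$ yields the stated residual). The only probabilistic input is the pair of concentration statements defining $\BOUNDFUN_{\subsize}$ and $\epsilon_{\subsize}$. Here lies the main obstacle: these must be invoked at the \emph{random, data-dependent} iterates $\thetahat^t$, so a pointwise-in-$\theta$ guarantee does not suffice. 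I would resolve this by reading~\eqref{EqnVarPhiBound} and~\eqref{eq:defepsmk} as bounds holding uniformly over $\paramspacejoint$, so that intersecting the two corresponding good events --- of probability at least $1-2\delta$ by a union bound --- makes every per-step inequality valid simultaneously for all $t$, independently of the trajectory. The genuinely heavy analytic work, namely deriving explicit forms for $\BOUNDFUN_{\subsize}$ and $\epsilon_{\subsize}$ under the dependent HMM sampling model, is not required for this theorem, which takes them as given.
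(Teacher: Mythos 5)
Your proposal is correct and takes essentially the same route as the paper's proof: the same triangle-inequality chain $\emopsampnoparam \to \emopsamptruncnoparam \to \emoppoptruncnoparam$ with each link bounded by $\BOUNDFUN_{\subsize}$, $\epsilon_{\subsize}$, and the contraction bound~\eqref{EqnAuxiliary}, the same observation-component feasibility induction closed exactly by condition~\eqref{EqnConditionSamplesplit}, and the same geometric unrolling; the paper merely bundles your third and fourth terms into the single bound $\addnorm{\emoppoptrunc{\thetahat^t} - \thetastar} \leq \kappa \addnorm{\thetahat^t - \thetastar} + \BOUNDFUN(\kdim)$. Incidentally, your one-step recursion gives a residual with a single $\BOUNDFUN_{\subsize}$ rather than the stated $2\BOUNDFUN_{\subsize}$ (the paper's factor $2$ comes from summing component-wise bounds for $\paramobs$ and $\paramtrans$), so your bound is slightly sharper and implies the theorem as stated.
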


The bound~\eqref{EqnSampleSplitContraction} shows that the distance
between $\thetahat^t$ and $\thetastar$ is bounded by two terms: the
first decays geometrically as $t$ increases, and the second term
corresponds to a residual error term that remains independent of $t$.
Thus, by choosing the iteration number $T$ larger than $\frac{\log(2
  \rad/\epsilon)}{\log \kappa}$, we can ensure that the first term is
at most $\epsilon$.  The residual error term can be controlled by
requiring that the sample size $\numobs$ is sufficiently large, and
then choosing the truncation level $\kdim$ appropriately.  We provide
a concrete illustration of this procedure in the following section,
where we analyze the case of Gaussian output HMMs.  In particular, we
can see that the residual error is of the same order as for the MLE
and that the required initialization radius is optimal up to
constants. Let us emphasize here
that $\kdim$ as well as the truncated operators are purely theoretical
objects which were introduced for the analysis.


\section{Concrete results for the Gaussian output HMM}
\label{sec:normal_HMM}

We now return to the concrete example of a Gaussian output HMM, as
first introduced in Section~\ref{SecWithSpecialCase}, and specialize
our general theory to it.  Before doing so, let us make some
preliminary comments about our notation and assumptions.  Recall that
our Gaussian output HMM is based on $\nstates = 2$ hidden states,
using the transition matrix from equation~\eqref{EqnExampleTrans}, and
the Gaussian output densities from equation~\eqref{EqnExampleObs}.
For convenience of analysis, we let the hidden variables $Z_i$ take
values in $\{-1,1\}$. In addition, we require that the mixing
coefficient $\mixcoef = 1- \mixcoefeps$ is bounded away from $1$ in
order to ensure that the mixing condition~\eqref{ass:mixing} is
fulfilled.  We denote the upper bound for $\mixcoef$ as
$\mixcoefbound<1$ so that $\mixcoef \leq \mixcoefbound$ and $
\mixcoefeps \geq 1 - \mixcoefbound$.  The feasible set of the
probability parameter $\probpar$ and its log odds analog $\paramtrans
= \frac{1}{2} \log \big(\frac{\probpar}{1-\probpar} \big)$ are then
given by
\begin{align}
\label{eqn:probset} 
\paramspacep = \left\{\probpar \in \real \mid
\frac{1-\mixcoefbound}{2} \leq \probpar \leq \frac{1+\mixcoefbound}{2}
\right\}, \quad \mbox{and} \quad \paramspacetrans = \Big \{\paramtrans
\in \real \; \mid \; |\paramtrans| < \underbrace{\frac{1}{2} \log
  \big( \frac{1 + \mixcoefbound}{1 - \mixcoefbound}
  \big)}_{\paramtransbound} \Big\}.
\end{align}

\subsection{Explicit form of Baum-Welch updates}

We begin by deriving an explicit form of the Baum-Welch updates for
this model.  
Using this notation, the Baum-Welch updates take the form
\begin{subequations}
\label{eq:normalHMM_EMupdate} 
\begin{align}
\paramitsamp^{t+1} &= \frac{1}{\numobs} \sum_{i =1}^{\numobs} ( 2 p(Z_i = 1
\mid x_1^{\numobs}; \paramjointit{t})-1) x_i, \\
\matprobhat^{t+1} & = \Pi_{\paramspacep} \left( \frac{1}{\numobs} \sum_{i=1}^n
  \sum_{Z_i} p(Z_{i} = Z_{i+1} \mid x_1^{\numobs}; \paramjointit{t})
\right), \text{ and } \\
\paramtranshat^{t+1} &= \frac{1}{2} \log \big(
\frac{\probparhat^{t+1}}{1 - \probparhat^{t+1}} \big),
\end{align}
\end{subequations}
where $\Pi_{\paramspacep}$ denotes the Euclidean projection onto the
set $\paramspacep$. Note that the maximization steps are carried out 
on the decomposed $Q$-functions $\qfunsampobs{\cdot}{\paramjoint^t},
\qfunsamptrans{\cdot}{\paramjoint^t}$. In addition, since we are 
dealing with a one-dimensional quantity $\paramtrans$, the projection
of the unconstrained maximizer onto the interval $\paramspacep$
is equivalent to the constrained maximizer over the feasible set 
$\paramspacep$. This step is in general not valid for higher
dimensional transition parameters. 

\subsection{Population and sample guarantees}

We now use the results from Section~\ref{sec:main_results} to
show that the population and sample-based version of the Baum-Welch
updates are linearly convergent in a ball around $\trueparamjoint$
of fixed radius.  In establishing the population-level
guarantee, the key conditions which need to be fulfilled---and the one
that are the most technically challenging to establish--- are the
$(L_{\paramobs}, L_{\paramtrans})$-FOS conditions~\eqref{EqnObsFOS},
\eqref{EqnTransFOS}. In particular, we want to show that these
conditions hold with Lipschitz constants $L_{\paramobs}, L_{\paramtrans}$ that decrease
exponentially with the separation of the mixtures. As a consequence,
we obtain that for large enough separation $\frac{L}{\lambda} <1$, i.e.
 the EM operator is contractive towards the true parameter. 

Throughout this section, we use $\plaincon_0,
\plaincon_1$ to denote universal constants and $\BIGCON_0, \BIGCON_1$ 
for quantities that do not depend on $(\|\trueparamobs\|_2, \sigma)$, but
may depend on other parameters such as $\statmin$, $\mixcoef$,
$\mixcoefbound$, and so on.  In order to ease notation, our explicit
tracking of parameter dependence is limited to the standard deviation
$\sigma$ and Euclidean norm $\|\trueparamobs\|_2$, which together
determine the signal-to-noise ratio $\SNR \defn
\frac{\|\trueparamobs\|_2^2}{\sigma^2}$ of the mixture model.  We use
the notation

We begin by stating a result for the sequence
$\{\thetatil^t\}_{t=0}^\infty$ obtained by repeatedly applying the
$\kdim$-truncated population-level Baum-Welch update operator
$\MBAR^k$.  Our first corollary establishes that this sequence is
linearly convergent, with a convergence rate $\kappa = \kappa(\eta)$
that is given by
\begin{align}
\label{EqnDefnKappa}
\kappa(\eta) & \defn \frac{\BIGCON_1 \eta^2(\SNR+1) \;
\E^{-\plaincon_2 \eta^2} }{1-\mixcoefbound^2}.
\end{align}
\begin{cors}[Population Baum-Welch]
\label{cor:simultaneous_normal_lipschitz}
Consider a two-state Gaussian output HMM that is mixing
(i.e. satisfies equation~\eqref{ass:mixing}), and with its SNR lower
bounded as $\eta^2 \geq \BIGCON$ for a sufficiently
large constant $\BIGCON$. Given the radius $r =
\frac{\|\trueparamobs\|_2}{4}$, suppose that the truncation parameter
$k$ is sufficiently large to ensure that $\BOUNDFUN(\kdim) \leq (1-
\kappa) r - \kappa \max_{\paramtrans \in
  \paramspacetrans}\norm{\paramtrans - \trueparamtrans}$.  Then for
any initialization \mbox{$\paramjointtilde^0 = (\paramobstilde^0,
  \paramtranstilde^0) \in \elltwoballr{r}{\trueparamobs} \times
  \paramspacetrans$,} the sequence $\{\thetatil^t\}_{t=0}^\infty$
generated by $\MBAR^k$ satisfies the bound
\begin{align}
\label{EqnPopContraction} 
\addnorm{\paramjointtil^t - \trueparamjoint} & \leq \kappa^t \addnorm{\paramjointtilde^0 -
\trueparamjoint} + \frac{1}{1 - \kappa} \BOUNDFUN(\kdim)
\end{align}
for all iterations $t = 1, 2, \ldots$.
\end{cors}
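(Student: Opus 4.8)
The plan is to obtain Corollary~\ref{cor:simultaneous_normal_lipschitz} as a direct specialization of Theorem~\ref{ThmPopContraction}(b). The contraction bound~\eqref{EqnPopContraction} is identical to~\eqref{EqnFinalPopBound}, so once I verify that the two-state Gaussian HMM satisfies the hypotheses of Theorem~\ref{ThmPopContraction}(b) --- namely $(\lambda_{\paramobs},\lambda_{\paramtrans})$-strong concavity~\eqref{EqnStrongConcavity} together with the $(L_{\paramobs},L_{\paramtrans})$-FOS conditions~\eqref{EqnObsFOS},~\eqref{EqnTransFOS} --- with a contraction factor $\kappa = L/\lambda$ of the form~\eqref{EqnDefnKappa}, the conclusion follows immediately. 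The substantive work is thus the verification of these conditions and the explicit computation of the constants $\lambda_{\paramobs}, \lambda_{\paramtrans}, L_{\paramobs,i}, L_{\paramtrans,i}$ as functions of the signal-to-noise ratio $\SNR$.

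First I would dispatch the strong concavity conditions. For the observation component, writing $Z_i \in \{-1,1\}$ and using the Gaussian densities~\eqref{EqnExampleObs}, the map $\paramobs \mapsto \log \obsprob{x_i}{z_i}$ is exactly quadratic with Hessian $-\frac{1}{\sigma^2} I_{\usedim}$ (using $Z_i^2 = 1$); since strong concavity of $\qfunpoptruncobs{\cdot}{\trueparamjoint}$ only sees this quadratic term, one obtains $\lambda_{\paramobs} = \sigma^{-2}$ with no further work. For the transition component the parameter $\paramtrans$ is one-dimensional, and $\paramtrans \mapsto \qfunpoptrunctrans{\paramtrans}{\trueparamjoint}$ is a smooth concave function on the bounded interval $\paramspacetrans$; a direct second-derivative computation of the expected transition log-likelihood yields a lower bound $\lambda_{\paramtrans}$ depending only on $\mixcoefbound$.

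The main obstacle is establishing the FOS conditions~\eqref{EqnObsFOS}--\eqref{EqnTransFOS} with Lipschitz constants that decay exponentially in $\SNR$. I would express each gradient through the posterior weights: for instance $\nabla_{\paramobs}\qfunpoptruncobs{\paramobs}{\paramjoint'}$ is, up to the factor $\sigma^{-2}$, the expectation of $\sum_i (w_i(\paramjoint') x_i - \paramobs)$, where $w_i(\paramjoint') = \Exs[Z_i \mid X_{i-k}^{i+k}; \paramjoint']$ is the smoothed posterior mean of the hidden state. The FOS differences then reduce to controlling how $w_i$ changes as the conditioning parameter is perturbed from $\paramjoint'$ toward $\trueparamjoint$. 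The crux of the exponential decay is that when the mixture separation $\|\trueparamobs\|_2$ is large relative to $\sigma$, the posterior over the hidden states is nearly deterministic, so its sensitivity to parameter perturbations is governed by the overlap between the two Gaussians, which scales like $\E^{-\plaincon \SNR}$. Carrying this sensitivity analysis through the forward--backward recursion that defines $w_i$, and bounding the extra polynomial factors $\SNR(\SNR+1)$ that arise from differentiating the Gaussian densities and from the chain's mixing, yields Lipschitz constants $L_{\paramobs,i}, L_{\paramtrans,i}$ of order $\frac{\SNR(\SNR+1)\E^{-\plaincon \SNR}}{1 - \mixcoefbound^2}$. This is by far the most delicate step, since in the HMM setting $w_i$ depends on a whole window of observations and does not factorize, so the posterior-sensitivity bound must be propagated across the dependent chain rather than handled observation-by-observation.

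Finally, combining these constants as in~\eqref{eq:Ldef} gives $\kappa = L/\lambda$ of exactly the form~\eqref{EqnDefnKappa}; since the exponential factor dominates the polynomial prefactor, $\kappa(\SNR) < 1$ as soon as $\SNR \geq \BIGCON$ for a sufficiently large constant, which verifies the contraction hypothesis of Theorem~\ref{ThmPopContraction}(b). The remaining hypothesis --- that $k$ is large enough that $\BOUNDFUN(\kdim) \leq (1-\kappa) r - \kappa \max_{\paramtrans \in \paramspacetrans}\norm{\paramtrans - \trueparamtrans}$ --- is assumed in the corollary and is carried over verbatim. It only remains to confirm that the strong concavity and FOS estimates hold uniformly over the admissible radius $r = \|\trueparamobs\|_2/4$; the factor $1/4$ is chosen so that, throughout $\elltwoballr{r}{\trueparamobs}$, the effective separation stays bounded below by a constant multiple of $\|\trueparamobs\|_2$ and hence the exponential estimates above remain valid. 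With all hypotheses of Theorem~\ref{ThmPopContraction}(b) in force, the claimed bound~\eqref{EqnPopContraction} is precisely its conclusion~\eqref{EqnFinalPopBound}.
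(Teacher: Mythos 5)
Your proposal is correct and, at the top level, follows the paper's own strategy: specialize Theorem~\ref{ThmPopContraction}(b), check strong concavity (the observation part is exactly quadratic; a one-dimensional second-derivative bound gives $\lambda_\paramtrans = 1-\mixcoefbound^2$ for the transition part), and prove the FOS conditions~\eqref{EqnObsFOS}--\eqref{EqnTransFOS} with constants of order $\SNR(\SNR+1)\E^{-c\SNR}$, so that $\kappa < 1$ once $\SNR \geq \BIGCON$. Where you diverge is in how the FOS step is executed. You propose differentiating the forward--backward recursion and propagating a posterior-sensitivity bound along the chain; the paper instead exploits that $p(z_{-k}^k \mid x_{-k}^k; \paramjoint)$ is an exponential family, so each gradient perturbation becomes a sum over the window of conditional covariances such as $\cov(Z_0, Z_\ell \mid X_{-k}^k)$ (equations~\eqref{EqnExpoCumHessian}), and then plays off two bounds against each other: the mixing bound $|\cov(Z_0,Z_\ell \mid X_{-k}^k)| \leq 2\mixcoef^{|\ell|}$ (Lemma~\ref{lem:expfammixing0}) and an expected bound of order $\E^{-c\SNR}$ (Lemma~\ref{lem:varsnr}), giving $\sum_\ell \min\{\cdot,\cdot\} \lesssim m \E^{-c\SNR} + \mixcoef^m/(1-\mixcoef)$, optimized at $m \asymp \SNR/\log(1/\mixcoef)$; this split is the source of the $\SNR\,\factormixing$ prefactor. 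That trade-off is the one idea your sketch leaves implicit, and it is essential: a sensitivity bound merely uniform in $\ell$ would sum to $k\,\E^{-c\SNR}$ over the $2k+1$ observations in the window, making $\kappa$ grow with $k$, which is fatal because $k$ must be taken large to make $\BOUNDFUN(\kdim)$ small. Your remark about propagation across the dependent chain is exactly what must be quantified (geometric decay in $|\ell|$) to avoid this, so your route can be completed; the exponential-family identity simply packages the two decay estimates without differentiating the recursion. Two minor items to add for completeness: the paper also verifies the boundedness condition~\eqref{EqnDensityBounded}, needed so that $\BOUNDFUN(\kdim)$ from Theorem~\ref{ThmPopContraction}(a) is available, and it normalizes the observation $Q$-function so that $\lambda_\paramobs = 1$ rather than your $\sigma^{-2}$ --- either convention works provided the $L$'s are computed with the same scaling, since only the ratio $\kappa = L/\lambda$ matters.
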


From definition~\eqref{EqnDefnKappa} it follows that as long as the
signal-to-noise ratio $\eta$ is larger than a universal constant,
the convergence rate $\kappa(\eta) < 1$.  The
bound~\eqref{EqnPopContraction} then ensures a type of contraction and
the pre-condition $\BOUNDFUN(\kdim) \leq (1 - \kappa) r - \kappa
\max_{\paramtrans \in \paramspacetrans} \norm{\paramtrans -
  \trueparamtrans}$ can be satisfied by choosing the truncation
parameter $\kdim$ large enough.  If we use a finite truncation
parameter $\kdim$, then the contraction occurs up to the error floor
given by $\BOUNDFUN(\kdim)$, which reflects the bias introduced by
truncating the likelihood to a window of size $\kdim$.  At the
population level (in which the effective sample size is infinite), we
could take the limit $\kdim \rightarrow \infty$ so as to eliminate
this bias.  However, this is no longer possible in the finite sample
setting, in which we must necessarily have $k \ll \numobs$.

\begin{cors}[Sample Baum-Welch iterates]
\label{cor:normalsim_samplesplit}
For a given tolerance $\delta \in (0,1)$,
suppose that the sample size is lower bounded as \mbox{$\numobs \geq
  \BIGCON_1 (\sigma^2 + \|\mustar\|_2^2) \usedim \log^2( \frac{
    \usedim}{\delta})$} for a sufficiently large $\BIGCON_1$.  Then
under the conditions of
Corollary~\ref{cor:simultaneous_normal_lipschitz}, with probability at
least $1 - \delta$, we have
\begin{equation}
\label{EqnGaussSampleBound}
\addnorm{\paramjointtil^t - \trueparamjoint} \leq \kappa^t
\addnorm{\widehat{\paramjoint}^0 - \trueparamjoint} + \BIGCON \frac{
  \norm{\trueparamobs}(\frac{\norm{\trueparamobs}^2}{\sigma^2} +1)
  \log^2n \sqrt{ \frac{ \usedim \log^2 (\numobs/\delta)}{\numobs} }
}{1 - \kappa}.
\end{equation}
\end{cors}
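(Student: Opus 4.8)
The plan is to instantiate the abstract sample-based guarantee of Theorem~\ref{ThmSampContraction} for the two-state Gaussian output HMM, and then to bound each of the three residual quantities $\BOUNDFUN(\kdim)$, $\BOUNDFUN_{\subsize}(\delta,\kdim)$ and $\epsilon_{\subsize}(\delta,\kdim)$ appearing in~\eqref{EqnSampleSplitContraction}. The local contraction precondition~\eqref{EqnAuxiliary} with rate $\kappa = \kappa(\eta) \in (0,1)$ is exactly what Corollary~\ref{cor:simultaneous_normal_lipschitz} supplies, given the SNR lower bound $\eta^2 \geq \BIGCON$ and the radius $r = \|\trueparamobs\|_2/4$. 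Thus the hypotheses of Theorem~\ref{ThmSampContraction} are met once the sample-size condition~\eqref{EqnConditionSamplesplit} is verified, and the corollary will then follow by reading off the residual term $\frac{1}{1-\kappa}\{2\BOUNDFUN_{\subsize}(\delta,\kdim) + \epsilon_{\subsize}(\delta,\kdim) + \BOUNDFUN(\kdim)\}$ and choosing $\kdim$ appropriately (with the stated probability $1-\delta$ recovered from the $1-2\delta$ of Theorem~\ref{ThmSampContraction} by a constant rescaling of $\delta$).

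The two bias terms are the easy part. The population truncation error follows directly from Theorem~\ref{ThmPopContraction}(a): the bound~\eqref{EqnApproxBound} gives $\BOUNDFUN(\kdim)^2 \leq c_0 \frac{\nstates^5}{\lambda\,\mixcoefeps^8 \statmin^2}(1-\mixcoefeps\statmin)^\kdim$, which for $\nstates = 2$ decays geometrically in $\kdim$. The sample-based truncation error $\BOUNDFUN_{\subsize}(\delta,\kdim)$ from~\eqref{EqnVarPhiBound} measures how close the full sample operator $\emopsampn{\subsize}{\theta}$ is to its $\kdim$-windowed version $\emopsamptruncn{\subsize}{\theta}$. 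For the Gaussian model the updates take the explicit form~\eqref{eq:normalHMM_EMupdate}, so this difference reduces to the gap between the smoothing weights $p(Z_i = 1 \mid x_1^\numobs;\theta)$ and their truncated counterparts $p(Z_i = 1 \mid x_{i-k}^{i+k};\theta)$; by the geometric decay of correlations implied by the mixing condition~\eqref{ass:mixing}, this gap is exponentially small in $\kdim$, and after averaging against the observations (whose magnitudes are controlled with high probability) I can bound $\BOUNDFUN_{\subsize}(\delta,\kdim)$ by a quantity that is again geometrically small in $\kdim$, up to a factor polynomial in $\numobs$, $\|\trueparamobs\|_2$ and $\sigma$.

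The heart of the argument is the statistical fluctuation $\epsilon_{\subsize}(\delta,\kdim) = \epsilonobs_{\subsize}(\delta,\kdim) + \epsilontrans_{\subsize}(\delta,\kdim)$ defined in~\eqref{eq:defepsmk}, which controls the deviation of the truncated sample operators from their population counterparts uniformly over $\paramspacejoint$. Using the explicit mean update in~\eqref{eq:normalHMM_EMupdate}, the quantity $\emopsamptruncobs{\paramjoint} - \emoppoptruncobs{\paramjoint}$ is an average of the centered summands $(2p(Z_i = 1 \mid x_{i-k}^{i+k};\theta) - 1)x_i$, each depending only on the window $x_{i-k}^{i+k}$. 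Since these summands are a bounded and Lipschitz functional of a $2\kdim$-window of a geometrically mixing sequence, I would not appeal to i.i.d.\ concentration; instead I would employ the blocking technique of Yu~\cite{Yu94} together with the functional concentration bounds of Nobel and Dembo~\cite{NobDem93} to reduce to roughly $\numobs/\kdim$ near-independent blocks, control the per-block sub-Gaussian parameter and Lipschitz constant (which scale with $\sigma$, $\|\trueparamobs\|_2$ and polynomially with the window size $\kdim$), and then apply a discretization/covering argument over the radius-$r$ ball in $\real^\usedim$ to obtain a bound uniform in $\paramjoint$. This yields an estimate of the form $\epsilon_{\subsize}(\delta,\kdim) \lesssim \|\trueparamobs\|_2\big(\tfrac{\|\trueparamobs\|_2^2}{\sigma^2}+1\big)\,\phi(\kdim)\,\sqrt{\usedim\log(\numobs/\delta)/\numobs}$, where $\usedim$ enters through the covering number of the ball and $\phi(\kdim)$ is a factor polynomial in the window size arising from the per-block constants.

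Finally I would choose the truncation level $\kdim \asymp \log\numobs$: this makes the two geometric-in-$\kdim$ bias terms $\BOUNDFUN(\kdim)$ and $\BOUNDFUN_{\subsize}(\delta,\kdim)$ decay polynomially in $1/\numobs$, hence negligible relative to the $\sqrt{\usedim/\numobs}$ fluctuation, while converting the factor $\phi(\kdim)$ into the poly-logarithmic prefactor; combined with the union-bound/covering contribution $\log(\numobs/\delta)$ this produces the stated dependence $\log^2\numobs \cdot \sqrt{\usedim\log^2(\numobs/\delta)/\numobs}$. The sample-size lower bound $\numobs \geq \BIGCON_1(\sigma^2 + \|\mustar\|_2^2)\usedim\log^2(\usedim/\delta)$ is precisely what guarantees~\eqref{EqnConditionSamplesplit} for this choice of $\kdim$, so the residual stays within the contraction basin and Theorem~\ref{ThmSampContraction} delivers~\eqref{EqnSampleSplitContraction}, hence~\eqref{EqnGaussSampleBound}. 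The main obstacle is the dependent-data concentration of the preceding paragraph: unlike the i.i.d.\ analysis of~\cite{BalWaiYu14}, each gradient summand is a function of an entire window of correlated observations, so establishing a sub-Gaussian tail uniformly over the parameter ball requires the blocking reduction and careful tracking of how the per-block constants grow with the window size $\kdim$---and it is exactly this window-size dependence that governs the logarithmic factors in the final rate.
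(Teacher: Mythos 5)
Your proposal reproduces the paper's overall architecture essentially step for step: invoke Theorem~\ref{ThmSampContraction} with the contraction rate $\kappa(\eta)$ supplied by Corollary~\ref{cor:simultaneous_normal_lipschitz}, bound the three residual quantities $\BOUNDFUN(\kdim)$, $\BOUNDFUN_{\subsize}(\delta,\kdim)$ and $\epsilon_{\subsize}(\delta,\kdim)$, take $\kdim \asymp \log(\numobs/\delta)$ so the truncation bias is polynomially small, and use the sample-size assumption to verify the basin condition~\eqref{EqnConditionSamplesplit}; this is exactly what the paper does through Lemmas~\ref{lem:second_tech} and~\ref{lem:first_tech}, with the blocking reduction of Lemma~\ref{lem:IB} handling the dependence. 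The genuine divergence is localized in two technical steps. First, for the fluctuation term $\epsilon_{\subsize}$ you propose per-block sub-exponential concentration at a fixed parameter followed by a covering of the radius-$r$ parameter ball (so $\usedim$ enters through $\log N \asymp \usedim \log(\cdot)$ in the union bound), whereas the paper covers only the unit sphere of \emph{directions} $u$ and controls the supremum over $\paramjoint$ by empirical-process machinery: symmetrization, concentration of the Rademacher process (Lemma~\ref{LemConcLipproc}), a Sudakov--Fernique comparison enabled by Lipschitz continuity of the posterior weights in the exponential-family parameters, and Ledoux--Talagrand contraction (Lemmas~\ref{LemContractions},~\ref{LemEventBound}). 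Your route is more elementary, but note it is not free: the Lipschitz constant of the process over the ball is random (it scales with the windowed observation magnitudes and with $\kdim$), so you must control it with high probability and tie the covering resolution to it; doing so yields the same $\sqrt{\usedim \cdot \mathrm{polylog}/\numobs}$ rate with slightly different window-size factors than the paper's $\kdim^{3/2}$. Second, for $\BOUNDFUN_{\subsize}$ you exploit the closed-form Gaussian updates directly (uniform smoothing-versus-truncated-weight gap times averaged observation norms), while the paper routes through strong concavity of the sample $Q$-functions, bounding $\|\emopsampnoparam(\theta)-\emopsamptruncnoparam(\theta)\|_2^2$ by $2\qnorm{\qfunn{\subsize}-\qfunsampextendnk{\subsize}{k}}/\lambda$; both rest on the same uniform posterior-truncation bound (Lemma~\ref{lem:twosided_approx}) plus a dependent-data concentration step, so your shortcut is legitimate for this model, though less portable beyond closed-form M-steps.
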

\noindent 

\paragraph{Remarks:} 
As a consequence of the bound~\eqref{EqnGaussSampleBound}, if we are
given a sample size $\numobs \succsim \usedim \log^2 \usedim$, then
taking $T \approx \log \numobs$ iterations is guaranteed to return an
estimate $(\paramobshat^T,\paramtranshat^T)$ with error of the order
$\sqrt{\frac{\usedim \log^6(\numobs)}{\numobs}}$.

In order to interpret this guarantee, note that in the case of
symmetric Gaussian output HMMs as in Section~\ref{sec:normal_HMM},
standard techniques can be used to show that the minimax rate of
estimating $\paramobs^*$ in Euclidean norm scales as
$\sqrt{\frac{\usedim}{\numobs}}$.  If we could compute the MLE in
polynomial time, then its error would also exhibit this scaling.  The
significance of Corollary~\ref{cor:normalsim_samplesplit} is that it
shows that the Baum-Welch update achieves this minimax risk of
estimation up to logarithmic factors.

Moreover, it should be noted that the initialization radius given here
is essentially optimal up to constants. Because of the symmetric
nature of the population log-likelihood, the all zeroes vector is a
stationary point.  Consequently, the maximum Euclidean radius of any
basin of attraction for one of the observation parameters---that is,
either $\trueparamobs$ or $-\trueparamobs$---can at most be $r =
\norm{\trueparamobs}$. Note that our initialization radius only
differs from this maximal radius by only a small constant factor.


\subsection{Simulations}

In this section, we provide the results of simulations that confirm
the accuracy of our theoretical predictions for two-state Gaussian
output HMMs. In all cases, we update the estimates for the mean vector
$\paramobshat^{t+1}$ and transition probability $\hat{\matprob}^{t+1}$
according to equation~\eqref{eq:normalHMM_EMupdate}; for
convenience, we update $\zeta$ as opposed to $\paramtrans$. 
The true parameters are denoted by $\trueparamobs$ and
$\matprob^*$.
\begin{figure}[htbp]
  \begin{center}
\includegraphics[scale = 0.45]{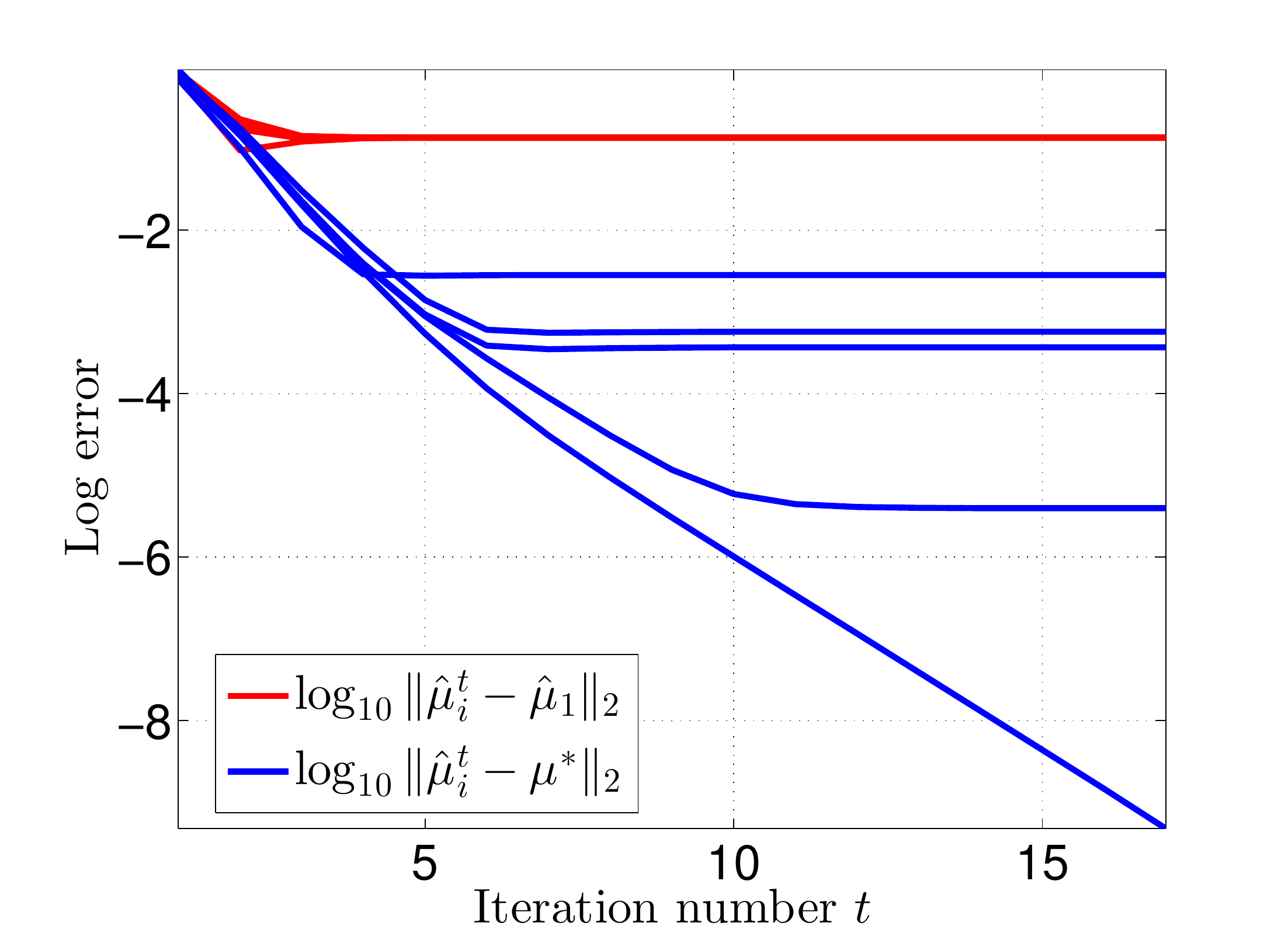}
\end{center}
\caption{Plot of the convergence of the optimization error $\log
  \|\widehat{\paramobs}_i^t - \widehat{\paramobs}_1\|_2$,
  plotted in blue, and the statistical error
  $\log\|\widehat{\paramobs}_i^t - \trueparamobs\|_2$, plotted in red,
  for 5 different initializations.  The parameter settings were $d =
  10$, $n = 1000$, $\mixcoef = 0.6$ and SNR
  $\frac{\|\trueparamobs\|_2}{\sigma} = 1.5$.  See the main text for
  further details.}
\label{fig:it}
\end{figure}

In all simulations, we fix the mixing parameter to $\mixcoef =0.6$,
generate initial vectors $\paramobshat^0$ randomly in a ball of radius
$\rad \defn \frac{\|\trueparamobs\|_2}{4}$ around the true parameter
$\trueparamobs$, and set $\widehat{\zeta}^0 = \frac{1}{2}$.  Finally,
the estimation error of the mean vector $\paramobs$ is computed as
$\log_{10}\|\hat{\mu} - \trueparamobs\|_2$. Since the transition
parameter estimation errors behave similarly to the observation
parameter in simulations, we omit the corresponding figures here.

Figure~\ref{fig:it} depicts the convergence behavior of the Baum-Welch
updates, as assessed in terms of both the optimization and the
statistical error. Here we run the Baum-Welch algorithm for a fixed
sample sequence $X_1^n$ drawn from a model with SNR $\SNR = 1.5$ and
$\zeta = 0.2$, using different random initializations in the ball
around $\trueparamobs$ with radius $\frac{\norm{\trueparamobs}}{4}$.
We denote the final estimate of the $i-$th trial by
$\paramobshat_i$. The curves in blue depict the
\emph{optimization error}---that is, the differences between the
Baum-Welch iterates $\paramobshat_i^t$ using the $i$-th
initialization, and $\paramobshat_1$.  On the other hand, the
red lines represent the \emph{statistical error}---that is, the
distance of the iterates from the true parameter $\trueparamobs$.

For both family of curves, we observe linear convergence in the first
few iterations until an error floor is reached.  The convergence of
the statistical error aligns with the theoretical prediction in upper
bound~\eqref{EqnGaussSampleBound} of
Corollary~\ref{cor:normalsim_samplesplit}. The (minimax-optimal) error
floor in the curve corresponds to the residual error and the
$\minimaxrad$--region in Figure~\ref{fig:InMMBall}.  In addition, the
blue optimization error curves show that for different
initializations, the Baum-Welch algorithm converges to \emph{different
  stationary points} $\paramobshat_i$; however, all of these points
have roughly the same distance from $\trueparamobs$.  This phenomenon
highlights the importance of the change of perspective in our
analysis---that is, focusing on the true parameter as opposed to the
MLE.  Given the presence of all these local optima in a small
neighborhood of $\trueparamobs$, the MLE basin of attraction must
necessarily be much smaller than the initialization radius guaranteed
by our theory.

Figure~\ref{fig:muerror_snr} shows how the convergence rate of the
Baum-Welch algorithm depends on the underlying SNR parameter $\SNR$;
this behavior confirms the predictions given in
Corollary~\ref{cor:normalsim_samplesplit}.  Lines of the same color
represent different random draws of parameters given a fix
SNR. Clearly, the convergence is linear for high SNR, and the rate
decreases with decreasing SNR.  

\begin{figure}[htbp]
\begin{center}
\subfigure{
  \includegraphics[scale=0.45]{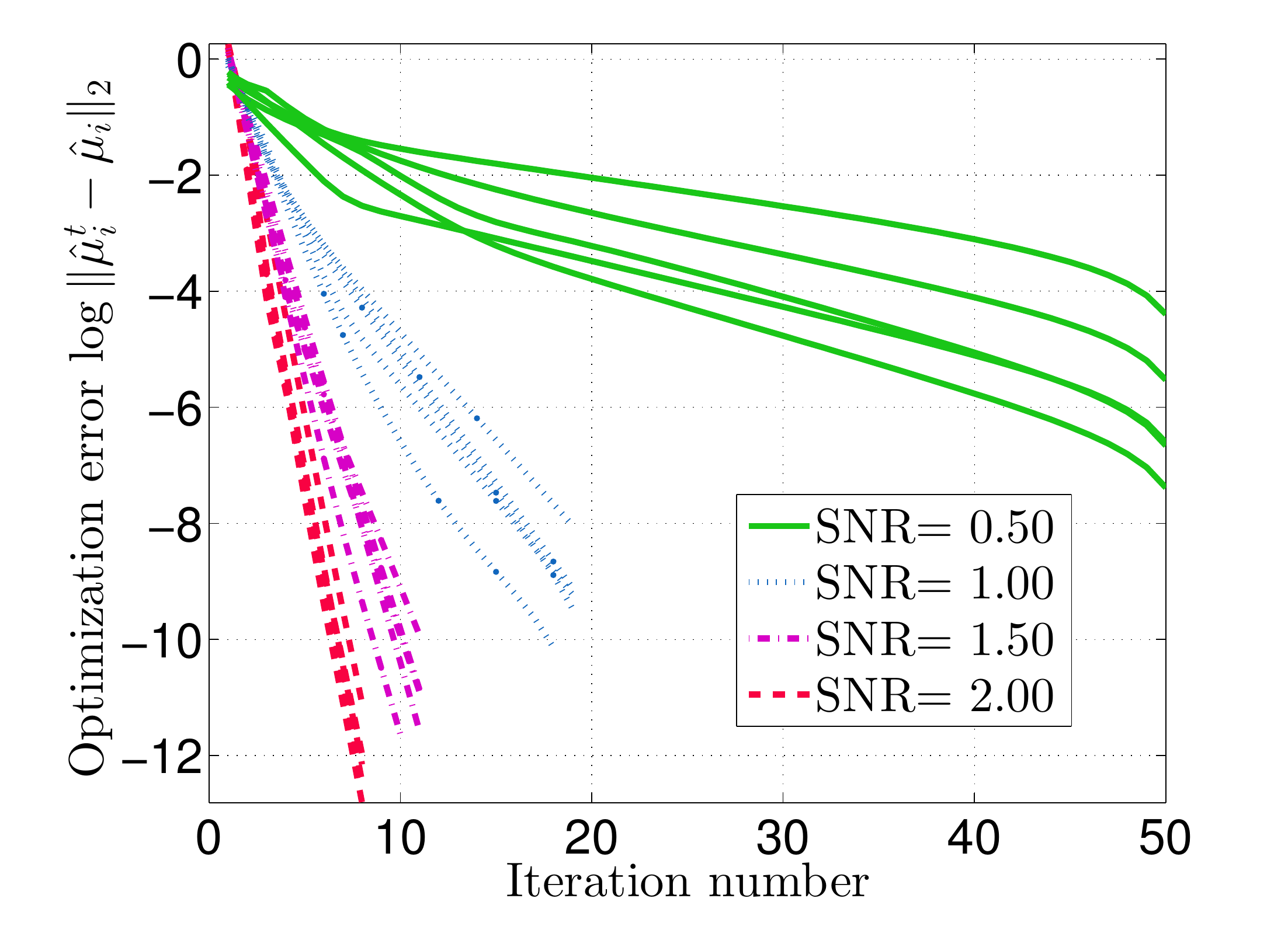}
}
\end{center}
\caption{Plot of convergence behavior for different SNR, where for
  each curve, different parameters were chosen.  The parameter
  settings are $d=10$, $n=1000$ and $\mixcoef = 0.6$.  }
\label{fig:muerror_snr}
\end{figure}


\section{Proofs}
\label{SecProofs}

In this section, we collect the proofs of our main results.  In all
cases, we provide the main bodies of the proofs here, deferring the
more technical details to the appendices.

\subsection{Proof of Theorem~\ref{ThmPopContraction}}

Throughout this proof, we make use of the shorthand $\mixcoefeff = 1 -
\mixcoefeps \statmin$. Also we denote the separate components of the
population EM operators by $\emoppop{\paramjoint} =:
(\emoppopobs{\paramjoint}, \emoppoptrans{\paramjoint})^T$ and their
truncated equivalents by $\emoppoptrunc{\paramjoint} =:
(\emoppoptruncobs{\paramjoint}, \emoppoptrunctrans{\paramjoint})^T$.  We
begin by proving the bound given in part (a).  Since $\qfun =
\lim_{\numobs \rightarrow \infty} \Exs[ Q_\numobs]$, we have
\begin{align*}
\qnorm{\qfun - \qfunpopextendk} & = \qnorm{\lim_{\numobs \rightarrow
    \infty} \Exs[ Q_\numobs] - \qfunpopextendk} \; \leq \;
\frac{C\nstates^5}{\mixcoefeps^8 \statmin^2}\mixcoefeff^k,
\end{align*}
where we have exchanged the supremum and the limit before applying
the bound~\eqref{EqnPopulationTruncateOne}.
The same holds for the separate functions $\qfun_1, \qfun_2$.

Using this bound and the fact that for $\qfun_1$ we have 
$\qfunpopobs{\emoppopobs{\paramjoint}}{\paramjoint} \geq
\qfunpopobs{\emoppoptruncobs{\paramjoint}}{\paramjoint}$, we find that
\begin{equation*}
\qfunpopobs{\emoppopobs{\paramjoint}}{\paramjoint} \geq
\qfunpoptruncobs{\emoppoptruncobs{\paramjoint}}{\paramjoint} - \frac{C
  \nstates^5}{\mixcoefeps^8 \statmin^2} \mixcoefeff^{k} .
\end{equation*}
Since $\emoppoptruncobs{\paramjoint}$ is optimal, the first-order
conditions for optimality imply that 
\begin{align*}
\inprod{\qfunpoptruncobs{\emoppoptruncobs{\paramjoint}}{\paramjoint}}
       {\theta - \emoppoptruncobs{\paramjoint}} & \leq 0 \qquad \mbox{for
         all $\paramjoint \in \elltwoballr{r}{\trueparamjoint}$.}
\end{align*}
By the strict concavity of $\qfunk{k}(\cdot|\paramjoint)$ for all
$\paramjoint$, we obtain
\begin{align*}
  \frac{C \nstates^5}{\mixcoefeps^8 \statmin^2} \mixcoefeff^{k} &\geq
  \qfunpopobs{\emoppopobs{\paramjoint}}{\paramjoint} -
  \qfunpoptruncobs{\emoppopobs{\paramjoint}}{\paramjoint} \\
& \geq \qfunpoptruncobs{\emoppoptruncobs{\paramjoint}}{\paramjoint} -
  \qfunpoptruncobs{\emoppopobs{\paramjoint}}{\paramjoint} - \frac{C
    \nstates^5}{\mixcoefeps^8 \statmin^2} \mixcoefeff^{\kdim} \\
& \geq \frac{\lambda_{\paramobs}}{2}\|\emoppopobs{\paramjoint} -
  \emoppoptruncobs{\paramjoint}\|^2_2 - \frac{C
    \nstates^5}{\mixcoefeps^8 \statmin^2} \mixcoefeff^{\kdim}
\end{align*}
and therefore $\|\emoppopobs{\paramjoint} -
\emoppoptruncobs{\paramjoint}\|^2_2 \leq 4 \frac{C \nstates^5}{\lambda
  \mixcoefeps^8 \statmin^2} \mixcoefeff^{k}$.  In particular, setting
$\paramjoint = \trueparamjoint$ and identifiability,
i.e. $\emoppopobs{\trueparamjoint} = \trueparamjoint$, yields 
\begin{equation*}
  \|\emoppoptruncobs{\trueparamjoint} - \trueparamjoint\|_2^2\leq 4 \frac{C
  \nstates^5}{\lambda_{\paramobs} \mixcoefeps^{6} \statmin^2} \mixcoefeff^{\kdim},
\end{equation*}
and the equivalent bound can be obtained for $\emoppoptrunctrans{\cdot}$
which yields the claim. \\

We now turn to the proof of part (b).  Let us suppose that the recursive
bound~\eqref{EqnAuxiliary} holds, and use it to complete the proof of
this claim.  We first show that if $\paramobstil^{t} \in \Ball_2(\rad;
\trueparamobs)$, then we must have $\paramobstil^{t+1} \in
\Ball_2(\rad; \trueparamobs)$ as well.  Indeed, if $\paramobstil^{t}
\in \Ball_2(\rad; \trueparamobs)$, then we have by triangle inequality
and the $(L_{\paramobs},L_{\paramtrans})$-FOS condition
\begin{align*}
\nonumber \norm{\emoppoptruncobs{\paramjointtil^{t}} - \trueparamobs} & \leq
 \norm{\emoppoptruncobs{\paramjointtil^t} -
  \emoppoptruncobs{\trueparamobs, \paramtranstil^t}} +
\norm{\emoppoptruncobs{\trueparamobs, \paramtranstil^t} -
  \emoppoptruncobs{\trueparamobs, \trueparamtrans}} +\norm{\emoppoptruncobs{\trueparamjoint} - \trueparamobs}\\
& \leq \frac{L_{\paramobs,1}}{\lambda_{\paramobs}}
\norm{\paramobstil^t - \trueparamobs} +
\frac{L_{\paramobs,2}}{\lambda_{\paramobs}} \norm{\paramtranstil^t -
    \trueparamtrans} + \frac{\BOUNDFUN(\kdim)}{2}\\
\nonumber & \leq \kappa (\rad + \radtrans) +
\BOUNDFUN(\kdim) \leq \rad,
\end{align*}
where the final step uses the assumed bound on $\BOUNDFUN$.  
For the joint parameter update we in turn have
\begin{align}
\label{EqnIteration}
\nonumber \addnorm{\emoppoptrunc{\paramjointtil^{t}} - \trueparamjoint} & \leq
\addnorm{\emoppoptrunc{\paramjointtil^t} - \emoppoptrunc{\trueparamjoint}} +
\addnorm{\emoppoptrunc{\trueparamjoint} - \trueparamjoint} \\
&\leq \kappa \addnorm{\paramjointtil^t - \trueparamjoint} + \BOUNDFUN(\kdim).
\end{align}
By repeatedly applying inequality~\eqref{EqnIteration} and summing the
geometric series, the claimed bound~\eqref{EqnFinalPopBound} follows.

It remains to prove the bound~\eqref{EqnAuxiliary}.  Since the vector
$\MBAR^k(\thetastar)$ maximizes the function $\theta \mapsto
\qfunpoptruncobs{\paramjoint}{\trueparamjoint}$, we have the first-order optimality
condition
\begin{align*}
\inprod{\nabla \qfunpoptruncobs{\emoppoptruncobs{\trueparamjoint}}{\thetastar}}{\emoppoptruncobs{\paramjoint} - \emoppoptruncobs{\trueparamjoint}} & \leq 0, \qquad
\mbox{valid for any $\theta$.}
\end{align*}
Similarly, we have
$\inprod{\nabla \qfunpoptruncobs{\emoppoptruncobs{\paramjoint}}{\theta}}{\emoppoptruncobs{\trueparamjoint}
  - \emoppoptruncobs{\paramjoint}} \leq 0$, and adding together these two inequalities
yields
\begin{align*}
0 & \leq \inprod{\nabla \qfunpoptruncobs{\emoppoptruncobs{\trueparamjoint}}{\trueparamjoint} -
  \nabla \qfunpoptruncobs{\emoppoptruncobs{\paramobs}}{\paramjoint}}{\emoppoptruncobs{\trueparamjoint} -
  \emoppoptruncobs{\paramjoint}}
\end{align*}
On the other hand, by the $\lambda$-strong concavity condition, we
have
\begin{align*}
 \lambda_{\paramobs} \|\emoppoptruncobs{\paramjoint} -
 \emoppoptruncobs{\trueparamjoint} \|_2^2 & \leq \inprod{\nabla
   \qfunpoptruncobs{\emoppoptruncobs{\paramjoint}}{\trueparamjoint}) -
   \nabla \qfunpoptruncobs{
     \emoppoptruncobs{\trueparamjoint}}{\thetastar}}{\emoppoptruncobs{\trueparamjoint}
   - \emoppoptruncobs{\paramjoint} }
\end{align*}
Combining these two inequalities yields
\begin{align*}
\lambda_{\paramobs} \|\emoppoptruncobs{\paramjoint} - \emoppoptruncobs{\trueparamjoint} \|_2^2 & \leq
\inprod{\nabla \qfunpoptruncobs{\emoppoptruncobs{\paramjoint}}{\trueparamjoint} - \nabla
  \qfunpoptruncobs{ \emoppoptruncobs{\paramjoint} }{\paramjoint}}{
  \emoppoptruncobs{\trueparamjoint} - \emoppoptruncobs{\paramjoint}} \\
& \leq \big[ L_{\paramobs_1} \norm{\paramobs - \trueparamobs} + L_{\paramobs_2} \norm{\paramtrans-\trueparamtrans}  \,\big] \norm{\emoppoptruncobs{\paramjoint} -
\emoppoptruncobs{\trueparamjoint}},
\end{align*}
and similarly we obtain $\lambda_{\paramtrans} \norm{\emoppoptrunctrans{\paramjoint} - \emoppoptrunctrans{\trueparamjoint}} 
 \leq \big[ L_{\paramtrans_1} \norm{\paramobs - \trueparamobs} + L_{\paramtrans_2} \norm{\paramtrans-\trueparamtrans}  \,\big]$. 
Adding both inequalities yields the claim~\eqref{EqnAuxiliary}.


\subsection{Proof of Theorem~\ref{ThmSampContraction}}

By the triangle inequality, for any step, with probability at least $1
- 2 \subprob$, we have
\begin{align*}
\addnorm{\thetahat^{t+1} - \thetastar} & \leq \addnorm{\emopsamp{\thetahat^t} -
\emopsamptrunc{\thetahat^t}} + \addnorm{\emopsamptrunc{\thetahat^t} -
\emoppoptrunc{\thetahat^t}} + \addnorm{\emoppoptrunc{\thetahat^t} - \trueparamjoint} \\
& \leq \BOUNDFUN_{\subsize}(\subprob, \kdim) +
\epsilon_{\subsize}(\subprob, \kdim) + \kappa \addnorm{\thetahat^t -
\thetastar} + \BOUNDFUN(\kdim).
\end{align*}
In order to see that the iterates do not leave $\elltwoballr{r}{\trueparamobs}$,
observe that
\begin{align}
\norm{\paramobshat^{t+1} - \trueparamobs} & \leq \norm{\emopsampobs{\paramjointhat^t} -
\emopsamptruncobs{\paramjointhat^t}} + \norm{\emopsamptruncobs{\paramjointhat^t} -
\emoppoptruncobs{\thetahat^t}} + \norm{\emoppoptruncobs{\thetahat^t} - \trueparamobs} \label{eq:hanaicecream}\\
& \leq \BOUNDFUN_{\subsize}(\subprob, \kdim) +
\epsilonobs_{\subsize}(\subprob, \kdim) + \kappa (\norm{\paramobshat^t - \trueparamobs} +
\radtrans) + \frac{\BOUNDFUN(\kdim)}{2}.\nonumber
\end{align}
Consequently, as long as $\|\paramobshat^t - \trueparamobs\|_2 \leq \rad$,
we also have $\|\paramobshat^{t+1} - \trueparamobs\|_2 \leq \rad$ whenever
\begin{align*}
\BOUNDFUN_{\subsize}(\subprob, \kdim) + \BOUNDFUN(\kdim) +
\epsilonobs_{\subsize}(\subprob, \kdim) & \leq (1 - \kappa) \, \rad -
\kappa \radtrans.
\end{align*}
Combining inequality~\eqref{eq:hanaicecream} with the equivalent bound for $\paramtrans$,
we obtain
\begin{align*}
\addnorm{\paramjointhat^t- \trueparamjoint} \leq \kappa \addnorm{\paramjointhat^{t-1} - \trueparamjoint} + 2 \BOUNDFUN_{\subsize} (\subprob, \kdim) + \epsilon_{\subsize}(\subprob, \kdim) + \BOUNDFUN(k)
\end{align*}
Summing the geometric series yields
the bound~\eqref{EqnSampleSplitContraction}.


\subsection{Proof of Corollary~\ref{cor:simultaneous_normal_lipschitz}}
\label{SecProofCorOne}

The boundedness condition (Assumption~\eqref{EqnDensityBounded}) is
easy to check since for $X \sim N(\trueparamobs, \sigma^2)$, the
quantity $\sup \limits_{\paramobs \in \Ball_2(r; \trueparamobs)} \EE
\big[ \max\{ \|X - \mu\|_2, \|X + \mu\|_2 \} \big]$ is finite for any
choice of radius $\rad < \infty$.

By Theorem~\ref{ThmPopContraction}, the $k$-truncated population EM
iterates satisfy the bound
\begin{align}
\label{EqnPopulationSeparate}
\addnorm{\paramjointtil^t -\trueparamjoint} & \leq \kappa^t \addnorm{
\paramjointtil^0 - \trueparamjoint} + \frac{1}{1-\kappa}
\BOUNDFUN(\kdim),
\end{align}
if the strong concavity~\eqref{EqnStrongConcavity} and FOS
conditions~\eqref{EqnObsFOS}, \eqref{EqnTransFOS} hold with suitable
parameters.

In the remainder of proof---and the bulk of the technical work--- we
show that:
\begin{itemize}
\item strong concavity holds with $\lambda_\paramobs = 1$ and
  $\lambda_\paramtrans = 1 - \mixcoefbound^2$;
\item the FOS conditions hold with
\begin{align*}
L_{\paramobs,1} = \constant \,  \: 
(\SNR+1)\factormixing \SNR \E^{-c \SNR}, & \quad \mbox{and} \quad
L_{\paramobs,2} = \plaincon \sqrt{\norm{\trueparamobs}^2+\sigma^2}\factormixing
\SNR \E^{-c\SNR} \\
L_{\paramtrans,1} = \plaincon \frac{1-b}{1+b} \factormixing \SNR
\E^{-c \SNR} & \quad \mbox{and} \quad L_{\paramtrans,2} = c
\sqrt{\norm{\trueparamobs}^2+\sigma^2}\factormixing \SNR \E^{-c\SNR},
\end{align*}
\end{itemize}
where $\factormixing \defn
\left(\frac{1}{\log(1/(1-\mixcoefeps))} +
\frac{1}{\mixcoefeps}\right)$.
Substuting these choices into the
bound~\eqref{EqnPopulationSeparate} and performing some algebra yields
the claim.

\subsubsection{Establishing strong concavity}
\label{sec:concavity}
We first show concavity of $\qfunpoptruncobs{\cdot}{\paramjoint'}$ and
$\qfunpoptrunctrans{\cdot}{\paramjoint'}$ separately. For strong
concavity of $\qfunpoptruncobs{\cdot}{\paramjoint'}$, observe that
\begin{align*}
\qfunpoptruncobs{\paramobsone}{\paramjointtwo} = - \frac{1}{2}\EE
\left[ p(z_0=1 \mid X_{-k}^k;\paramjointtwo) \|X_0 -
  \paramobsone\|_2^2 + ( 1- p(z_0=1|X_{-k}^k;\paramjointtwo))\|X_0 +
  \paramobsone\|_2^2 + c ]\right],
\end{align*}
where $c$ is a quantity independent of $\paramobsone$.  By inspection,
this function is strongly concave in $\paramobsone$ with parameter
$\lambda_\paramobs = 1$.

On the other hand, we have
\begin{align*}
\qfunpoptrunctrans{\paramtrans}{\paramjoint'} =
\EExcondparam{X_{-k}^k}{\trueparamjoint} \sum_{z_0, z_1} p(z_0, z_1
\mid X_{-k}^k; \paramjoint') \log \left( \frac{\E^{\paramtrans
    z_0z_1}}{\E^{\paramtrans} + \E^{-\paramtrans}} \right).
\end{align*}
This function has second derivative $\frac{\partial^2}{\partial
  \paramtrans^2} \qfunpoptrunctrans{\paramtrans}{\paramjoint'} = -4
\frac{\E^{-2\paramtrans}}{(\E^{-2\paramtrans}+1)^2}$.  As a function
of $\paramtrans \in \paramspacetrans$, this second derivative is
maximized at $\paramtrans = \frac{1}{2} \log
\big(\frac{1+b}{1-b}\big)$.  Consequently, the function
$\qfunpoptrunctrans{\cdot}{\paramjoint'}$ is strongly concave with
parameter $\lambda_\paramtrans = 1-\mixcoefbound^2$.


\subsubsection{Separate FOS conditions}

We now turn to proving that the FOS conditions in
equations~\eqref{EqnObsFOS} and~\eqref{EqnTransFOS} hold.  A key
ingredient in our proof is the fact that the conditional density
$p(z_{-\kdim}^\kdim \mid x_{-\kdim}^\kdim; \paramobs, \paramtrans)$
belongs to the exponential family with parameters $\paramtrans \in
\real$, and $\paramgamma_i \defn \frac{\inprod{\mu}{x_i}}{\sigma^2}
\in \real$ for $i=-k, \ldots, \kdim$.  (See the book~\cite{WaiJor08}
for more details on exponential families.)  In particular, we have
\begin{align}
\label{eq:specexpo}
\underbrace{p(z_{-k}^k \mid x_{-k}^k,\paramobs, \paramtrans)}_{ \defn
  p(z_{-k}^k; \paramgamma, \paramtrans)} & = \exp \left\{
\sum_{\ell=-k}^k \paramgamma_{\ell} z_{\ell} + \paramtrans \sum_{\ell =
  -k}^{k-1} z_{\ell}z_{\ell+1} - \expcum(\paramgamma, \paramtrans)
\right\},
\end{align} 
where the function $h$ absorbs various coupling terms.
Note that this exponential family is a specific case of the following
exponential family distribution
\begin{align}
\label{eq:generalexpo}
\underbrace{\generalp (z_{-k}^k \mid x_{-k}^k,\paramobs, \paramtrans)}_{ \defn
  \generalp(z_{-k}^k; \paramgamma, \paramtrans)} & = \exp \left\{
\sum_{\ell=-k}^k \paramgamma_{\ell} z_{\ell} +  \sum_{\ell =
  -k}^{k-1} \paramtrans_{\ell} z_{\ell}z_{\ell+1} - \expcum(\paramgamma, \paramtrans)
\right\}.
\end{align} 
The distribution in~\eqref{eq:specexpo} corresponds to 
\eqref{eq:generalexpo} 
with $\paramtrans_\ell = \paramtrans$ for all $\ell$
 and the
so-called partition function $\expcum$ is given by
\begin{equation*}
\expcum (\gamma, \paramtrans) = \log \sum_z \exp \left\{ \sum_{\ell=-k}^k
\paramgamma_{\ell} z_{\ell} + 
\sum_{\ell=-k}^{k-1} \paramtrans_{\ell} z_{\ell}
z_{\ell+1} \right\}.
\end{equation*}
The reason to view our distribution as a special case of the more
general one in~\eqref{eq:generalexpo} becomes clear when 
we consider the equivalence of expectations and the 
derivatives of the cumulant function
\begin{align}
 \label{EqnExpoCumGradient}
\left. \expcumgrad{\paramgamma_{\ell}}\right|_{\paramjoint'} &=
\EEzcondx{Z_{-k}^k}{x_{-k}^k}{\paramjoint'} Z_{\ell} \:\: \text{ and } \:\:
\left. \expcumgrad{\paramtrans_0} \right|_{\paramjoint'}=
\EEzcondx{Z_{-k}^k}{x_{-k}^k}{\paramjoint'} Z_0 Z_1,
\end{align}
 where we recall that $\EEzcondx{Z_{-k}^k}{x_{-k}^k}{\paramjoint'}$ is
 the expectation with respect to the distribution $\generalp(Z_{-k}^k
 \mid x_{-k}^k; \paramobs', \paramtrans')$ with $\paramtrans_\ell =
 \paramtrans'$.  Note that in the following any value $\paramjoint'$
 for $\generalp$ is taken to be on the manifold on which
 $\paramtrans_\ell = \paramtrans'$ for some $\paramtrans'$ since this
 is the manifold the algorithm works on. Also, as before, $\EE$
 denotes the expectation over the joint distribution of all samples
 $X_\ell$ drawn according to $p(\cdot; \paramjoint^*)$, in this case
 $X_{-k}^k$.

Similarly to equations~\eqref{EqnExpoCumGradient}, 
the covariances of the sufficient statistics correspond to
the second derivatives of the cumulant function
\begin{subequations}
\label{EqnExpoCumHessian}
\begin{align}
\left. \frac{\partial^2 \Phi}{\partial \paramtrans_\ell \partial
  \paramtrans_0} \right|_{\paramjoint} & = \condcov{ Z_{0}Z_1}{
  Z_{\ell} Z_{\ell+1} }{X_{-k}^k,\paramjoint} \\
 \left. \expcumhess{\paramgamma_{\ell}}{\paramgamma_0}
 \right|_{\paramjoint} &= \condcov{Z_0}{Z_{\ell}}{X_{-k}^k, \paramjoint} \\
\left. \expcumhess{\paramtrans_\ell}{\paramgamma_0} \right|_{\paramjoint} &
= \condcov{Z_0}{Z_{\ell}Z_{\ell+1}}{X_{-k}^k, \paramjoint}.
\end{align}
\end{subequations}
In the following, we adopt the shorthand
\begin{align*}
\condcov{Z_\ell}{Z_{\ell+1}}{\paramgamma', \paramtrans'} &=
\condcov{Z_\ell}{Z_{\ell+1}}{X_{-k}^k,\paramjoint'} \\ & =
\EEzcondx{Z_\ell^{\ell+1}}{X_{-k}^k}{\paramjoint'} (Z_\ell -
\EEzcondx{Z_{\ell}^{\ell+1}}{X_{-k}^k}{\paramjoint'}
Z_\ell)(Z_{\ell+1} -
\EEzcondx{Z_\ell^{\ell+1}}{X_{-k}^k}{\paramjoint'} Z_{\ell+1})
\end{align*}
where the dependence on $\paramtrans$ is occasionally omitted so as to
simplify notation.


\subsubsection{Proof of inequality~\eqref{EqnObsFOSOne}}

By an application of the mean value theorem, we have
\begin{align*}
\| \nabla_\paramobs
\qfunpoptruncobs{\paramobs}{\paramobs',\paramtrans'} -
\nabla_\paramobs
\qfunpoptruncobs{\paramobs}{\trueparamobs,\paramtrans'}\| & \leq
\underbrace{\left\| \EE \sum_{\ell = -k}^k \left. \frac{\partial^2
    \Phi}{\partial \paramgamma_\ell \partial
    \paramgamma_0}\right|_{\paramjoint = \paramjointtilde}
  (\paramgamma'_{\ell} - \paramgamma^*_{\ell}) X_0 \right\|}_{T_1}
\end{align*}
where $\paramjointtilde = \paramjointtwo +
t(\trueparamjoint-\paramjointtwo)$ for some $t\in (0,1)$.  Since
second derivatives yield covariances (see
equation~\eqref{EqnExpoCumHessian}), we can write
\begin{align*}
T_1 & = \left\| \sum_{\ell = -k}^k\left. \EE X_0 \EE \left[
  \condcov{Z_0}{Z_{\ell}}{\widetilde{\paramgamma}} \frac{\langle
    \paramobstwo- \trueparamobs,X_{\ell}\rangle}{\sigma^2}
  \right|X_0\right]\right\|_2,
\end{align*}
so that it suffices to control the expected conditional covariance.
By the Cauchy-Schwarz inequality and the fact that
$\cov(XY) \leq \sqrt{\var X} \sqrt{\var Y}$ and $\var (Z_0 \mid X)
\leq 1$, we obtain the following bound on the  expected conditional covariance
by using Lemma~\ref{lem:varreduce} (see Appendix~\ref{AppExponential})
\begin{subequations}
\begin{align}
\EE \left[ \big| \condcov{Z_0}{Z_{\ell}}{\paramgammatilde} \mid X_0
  \big| \right ] &\leq \sqrt{ \EE \left[ \var(Z_0 \mid
    \paramgammatilde)\mid X_0\right] } \sqrt{\EE \left[ \var(Z_{\ell}
    \mid \paramgammatilde) \mid X_0 \right]} \nonumber \\
%
%
\label{eq:covvar}
& \leq \sqrt{\var(Z_0 \mid \paramgammatilde_0)}.
\end{align}
Furthermore, by Lemma~\ref{lem:varsnr} and~\ref{lem:expfammixing0}
(see Appendix~\ref{AppExponential}), we have
\begin{align}
\label{eq:covmixing}
|\condcov{Z_0}{Z_{\ell}}{\paramgammatilde}| &\leq 2
\mixcoef^{\ell}, \quad \mbox{and} \quad
\left\| \EE (\var(Z_0|\paramgammatilde_0))^{1/2} X_0X_0^T
\right\|_{op} \leq C \E^{-c\SNR}.
\end{align}
\end{subequations}


From the definition of the operator norm, we have
\begin{align}
 \|\EE \cov (Z_0,Z_{\ell} \mid \paramgammatilde) X_0 X_{\ell}^T
 \big\|_{op} &= \BIGSUP \EE \cov(Z_0, Z_{\ell} \mid \paramgammatilde)
 \inprod{X_0}{v} \, \inprod{ X_{\ell}}{u} \nonumber \\
& \leq \sup_{\|v\|_2 = 1} \EE |\cov(Z_0, Z_{\ell} \mid
 \paramgammatilde)| \inprod{X_0}{v}^2 + \sup_{\|u\|_2 = 1} \EE
 |\cov(Z_0, Z_{\ell} \mid \paramgammatilde)| \inprod{X_\ell}{u}^2
 \nonumber \\
&= \| \EE X_0 X_0^T \EE \big[|\cov(Z_0, Z_{\ell} \mid
   \paramgammatilde) \mid X_0\big]\|_{op} + \|\EE X_\ell X_\ell^T \EE
 \big[ \cov(Z_0,Z_{\ell} \mid \paramgammatilde)\mid X_{\ell}
   \big]\|_{op} \nonumber \\
& \stackrel{(i)}{\leq} 2\min\{\mixcoef^{|\ell|} \|\EE X_0
 X_0^T\|_{op}, \|\EE \var (Z_0 \mid \paramgammatilde_0)^{1/2} X_0
 X_0^T \|_{op}\} \nonumber \\
\label{eq:covx}
& \stackrel{(ii)}{\leq} 2 \min\{(\norm{\trueparamobs}^2 + \sigma^2)
\mixcoef^{|\ell|} , C' \E^{-c\SNR}\},
\end{align}
where inequality (i) makes use of inequalities~\eqref{eq:covvar}
and~\eqref{eq:covmixing}, and step (ii) makes use of the second
inequality in line~\eqref{eq:covmixing}.

Combining inequalities~\eqref{eq:covvar} and~\eqref{eq:covx}, we find
that
\begin{align*}
T_1 & \leq \frac{\norm{ \paramobs' - \trueparamobs}}{\sigma^2}
\sum_{\ell =-k}^k \|\EE \cov (Z_0,Z_{\ell} \mid \paramgammatilde) X_0
X_{\ell}^T \big\|_{op}\\ 
& \leq 2\frac{\norm{ \paramobs' - \trueparamobs}}{\sigma^2}
\sum_{\ell=-k}^k \min\{ (\norm{\trueparamobs}^2 + \sigma^2)
\mixcoef^{|\ell|} , C \E^{-c\SNR} \}\\ 
&\leq 4 (\SNR+1) \big(m C \E^{-c\SNR} +
\frac{\mixcoef^m}{1-\mixcoef}\big) \norm{ \paramobs' - \trueparamobs}.
\end{align*}
The last inequality follows from the proof of Corollary 1 in the
paper~\cite{BalWaiYu14} if $\SNR >C$ for some universal constant $C$.
By choosing $m = \frac{c \SNR}{\log (1/\mixcoef)}$ so as to optimize
the tradeoff, we have shown that
\begin{align*}
\| \nabla_\paramobs
\qfunpoptruncobs{\paramobs}{\paramobs',\paramtrans'} -
\nabla_\paramobs
\qfunpoptruncobs{\paramobs}{\trueparamobs,\paramtrans'}\| & \leq
L_{\paramobs,1} \|\paramobstwo - \trueparamobs\|_2,
\end{align*}
where $L_{\paramobs,1} = \constant \, \: \factorsnr \factormixing \SNR
\E^{-c \SNR}$ as claimed.


\subsubsection{Proof of inequality~\eqref{EqnObsFOSTwo}}

The same argument via the mean value theorem guarantees that
\begin{align*}
\| \frac{\partial}{\partial \paramtrans}
\qfunpoptrunctrans{\paramtrans}{\paramobs',\paramtrans'} -
\frac{\partial}{\partial \paramtrans}
\qfunpoptrunctrans{\paramtrans}{\paramobs',\trueparamtrans} \| & \leq
\left \| \EE \sum_{\ell=-k}^k \left. \frac{\partial^2 \Phi}{\partial
  \paramtrans_\ell \partial \paramgamma_0}\right|_{\paramjoint =
  \paramjointtilde} (\paramtranstwo - \trueparamtrans) X_0 \right \|_2.
\end{align*}
In order to bound this quantity, we again use the
equivalence~\eqref{EqnExpoCumHessian} and bound the expected
conditional covariance. 
Furthermore, Lemma~\ref{lem:varsnr} and \ref{lem:expfammixing0} yield
\begin{align}
\label{EqnHanaBirthday}
 \condcov{Z_0}{Z_{\ell}Z_{\ell+1}}{\paramgammatilde}
 \stackrel{(i)}{\leq} 2\mixcoef^{\ell} \quad \mbox{and} \quad \left\|
 \EE \var(Z_0\mid \paramgammatilde_0) X_0X_0^T \right\|_{op}
 \stackrel{(ii)}{\leq} \plaincon \E^{-c\eta^2}.
\end{align}
Here inequality (ii) follows by combining
inequality~\eqref{eq:varTwoSNR} from Lemma~\ref{lem:varsnr} with the
fact that $\var (Z_0 \mid \paramgammatilde_0) \leq 1$.

\begin{align*}
\| \EE X_0 \condcov{Z_0}{Z_{\ell}Z_{\ell+1}}{\paramgammatilde} \|_2 &=
\sup_{\|u\|_2=1} \EE \langle X_0,u\rangle
\condcov{Z_0}{Z_{\ell}Z_{\ell+1}}{\paramgammatilde}\\
& \leq \sup_{\norm{u}=1} \EE |\langle X_0,u\rangle| \EE
\big[|\condcov{Z_0}{Z_{\ell}Z_{\ell+1}}{\paramgammatilde}| \mid
  X_0\big] \\ 
& \stackrel{(iii)}{\leq} \sup_{\norm{u}=1} \EE |\langle X_0,u\rangle|
\min\{\mixcoef^{|\ell|}, (\var(Z_0 \mid \paramgammatilde_0))^{1/2} \}
\\
& \stackrel{(iv)}{\leq} \min\{ \sup_{\norm{u}=1} \sqrt{ \EE \langle
  X_0,u\rangle^2} \mixcoef^{|\ell|}, \sup_{\norm{u}=1}\sqrt{\EE
  \langle X_0,u\rangle^2 \var(Z_0\mid \paramgammatilde_0))} \} \\
& \stackrel{(v)}{\leq} \min\{ \mixcoef^{|\ell|} \sqrt{ \| \EE
  X_0X_0^T\|_{op}}, \sqrt{\|\EE \var(Z_0 \mid \paramgammatilde_0)
  X_0X_0^T\|_{op}}\}\\ 
& \stackrel{(vi)}{\leq} \min\{ \mixcoef^{|\ell|} \sqrt{
  \norm{\trueparamobs}^2+\sigma^2 }, C \E^{-c\SNR}\}
\end{align*}
where step (iii) uses inequality~\eqref{EqnHanaBirthday}; step (iv)
follows from the Cauchy-Schwarz inequality; step (v) follows from the
definition of the operator norm; and step (vi) uses
inequality~\eqref{EqnHanaBirthday} again.

Putting together the pieces, we find that
\begin{align*}
\left\| \EE \sum_{\ell =-k}^k \frac{\partial^2 \Phi}{\partial
  \paramtrans_\ell \partial \paramgamma_0} X_0 \right\|_2
|\paramtranstwo - \trueparamtrans| & \leq \sum_{\ell=-k}^k \left\| \EE
X_0 \EE [\condcov{Z_0}{Z_{\ell} Z_{\ell+1}}{\paramgammatilde}\mid X_0]
\right\|_2 |\paramtranstwo-\trueparamtrans|\\
& \leq 4 \sqrt{\norm{\trueparamobs}^2 + \sigma^2} \left(\plaincon \, m
\, \E^{-c\SNR}+ \frac{\mixcoef^m}{1-\mixcoef}\right) |\paramtrans' -
\trueparamtrans|.
\end{align*}
Setting $m = \frac{c \SNR }{\log (1/\mixcoef)}$ to balance the
tradeoff between the two terms, we find that
inequality~\eqref{EqnObsFOSTwo} holds with $L_{\paramobs,2} =
\plaincon \factormixing \sqrt{\norm{\trueparamobs}+\sigma^2} \SNR
\E^{-c\SNR}$, as claimed.


\subsubsection{Proof of inequality~\eqref{EqnTransFOSOne}}

By the same argument via the mean value theorem, we find that
\begin{align*}
\big\| \frac{\partial}{\partial \paramtrans}\qfunpoptrunctrans{\paramtrans}{\paramtrans',\paramobs'} -
\frac{\partial}{\partial \paramtrans}
\qfunpoptrunctrans{\paramtrans}{\paramtrans',\trueparamobs} \big\| &
\leq \Big| \EE \sum_{\ell = -k}^k \left. \frac{\partial^2
  \Phi}{\partial \paramgamma_\ell \partial
  \paramtrans_0}\right|_{\paramjoint= \paramjointtilde} \frac{\langle
  \paramobstwo - \trueparamobs, X_\ell\rangle}{\sigma^2} \Big|.
\end{align*}
Equation~\eqref{EqnExpoCumHessian} guarantees that $\frac{\partial^2
  \Phi}{\partial \paramgamma_\ell \partial \paramtrans_0} =
\condcov{Z_0Z_1}{Z_\ell}{\gamma}$.  Therefore, by similar
arguments as in the proof of inequalities~\eqref{EqnObsFOS}, we
have
\begin{align*}
T & \defn \Big| \sum_{\ell = -k}^k \EE \langle \paramobstwo -
\trueparamobs, X_{\ell}\rangle \EE [\condcov{Z_0
    Z_1}{Z_{\ell}}{\paramgammatilde_\ell,\paramtrans'}| X_{\ell}]
\Big| \\
& \leq \Big | \sum_{\ell=-k}^k\EE |\langle \paramobstwo -
\trueparamobs, X_{\ell}\rangle| \min\{ \mixcoef^{|\ell|}, (\var (Z_{\ell}
\mid \paramgammatilde_{\ell},\paramtrans'))^{1/2}\} \Big| \\
& \leq \Big| \sum_{\ell=-k}^k \min \big \{\mixcoef^{|\ell|} , \sqrt{\EE
  \var( Z_{\ell} \mid \paramgammatilde_{\ell},\paramtrans') } \big \}
\sqrt{ \EE \langle \paramobstwo - \trueparamobs, X_{\ell}\rangle^2 }
\Big|\\
& \leq \sqrt{\|\trueparamobs\|^2_2 + \sigma^2} \big( m \plaincon \,
  \E^{-c\SNR} + 2\sum_{\ell=m+1}^k \mixcoef^{\ell} \big).
\end{align*}
where we have used inequality~\eqref{eq:varSNR} from
Lemma~\ref{lem:expfammixing0}.  Finally, setting $m =
\frac{c\SNR}{\log(1/\mixcoef)}$ yields that the FOS condition holds
with $L_{\paramtrans,2} = c \sqrt{\norm{\trueparamobs}+\sigma^2}
\factormixing \SNR \E^{-c\SNR}$, as claimed.


\subsubsection{Proof of inequality~\eqref{EqnTransFOSTwo}}

By the same mean value argument, we find that
\begin{align*}
\| \frac{\partial}{\partial \paramtrans}
\qfunpoptrunctrans{\paramtrans}{\paramtrans',\paramobs'} -
\frac{\partial}{\partial \paramtrans}
\qfunpoptrunctrans{\paramtrans}{\trueparamtrans,\paramobs'} \big\| &
\leq \Big| \EE \sum_{\ell = -k}^k \left. \frac{\partial^2
  \Phi}{\partial \paramtrans_\ell \partial
  \paramtrans_0}\right|_{\paramjoint = \paramjointtilde}
(\paramtranstwo - \trueparamtrans) \Big|.
\end{align*}
By the exponential family view in equality~\eqref{EqnExpoCumHessian}
it suffices to control the expected conditional covariance.
Lemma~\ref{lem:varsnr} and \ref{lem:expfammixing0} guarantee that
\begin{align}
\label{eq:firstone} 
|\condcov{Z_0 Z_1}{Z_{\ell} Z_{\ell+1}}{X_{-k}^k, \paramgammatilde}| &
\leq \mixcoef^{|\ell|}, \quad \mbox{and} \quad
\EE \var(Z_0 Z_1 \mid \paramgammatilde_0^1, \paramtranstil) \leq
\plaincon \, \frac{1 + b}{1-b}\; \E^{-c\SNR}.
\end{align}
Furthermore, the Cauchy-Schwarz inequality combined with the
bound~\eqref{EqnVarOwn} from Lemma~\ref{lem:varreduce} yields
\begin{align}
\EE \big| \condcov{Z_0 Z_1}{Z_{\ell} Z_{\ell+1}}{\paramgammatilde} \big|
& \leq \sqrt{\EE \var(Z_0 Z_1 \mid \paramgammatilde_{-k}^k,
  \paramtranstilde) } \sqrt{\EE \var(Z_{\ell}Z_{\ell+1} \mid
  \paramgammatilde_{-k}^k, \paramtranstilde)} \nonumber \\
& \leq \sqrt{\EE \var(Z_0 Z_1 \mid \paramgammatilde_0^1,
  \paramtranstil) } \sqrt{\EE \var(Z_{\ell} Z_{\ell+1} \mid
  \paramgammatilde_{\ell}^{\ell+1}, \paramtranstil)} \nonumber \\
\label{EqnBoundCondCov}
& \leq \EE \var(Z_0 Z_1 \mid \paramgammatilde_0^1,
\paramtranstil).
\end{align}
Combining the bounds~\eqref{eq:firstone} and~\eqref{EqnBoundCondCov}
yields
\begin{align*}
\left |\sum_{\ell=-k}^k \EE \frac{\partial^2 \Phi}{\partial
  \paramtrans_{\ell} \partial \paramtrans_0} (\paramtranstwo -
\trueparamtrans) \right| & \leq \sum_{\ell=-k}^k \big|\EE
\condcov{Z_0Z_1}{Z_{\ell}Z_{\ell+1}}{\paramgammatilde_{-k}^k,\paramtranstil} \big| \;
|\paramtranstwo - \trueparamtrans| \\
& \leq \sum_{\ell=-k}^k \min \left\{
  \mixcoef^{|\ell|},\EE
  \var(Z_0Z_1\mid \paramgammatilde_0^1, \paramtranstil) \right\}
  |\paramtranstwo - \trueparamtrans| \\
& \leq 2 \left(\plaincon \frac{1+b}{1-b} \, m \E^{-c \SNR} +
  \sum_{l=m+1}^k \mixcoef^{\ell} \right)
  |\paramtranstwo-\trueparamtrans| \\
& \leq 2 \plaincon \frac{1 + b}{1-b} \factormixing \SNR \E^{-c \SNR}
  |\paramtranstwo-\trueparamtrans|
\end{align*}
where the final inequality follows by setting $m=
\frac{c\SNR}{\log(1/\mixcoef)}$. Therefore, the FOS condition holds
with $L_{\paramtrans,1} = \plaincon \frac{1-b}{1+b} \factormixing \SNR
\E^{-c \SNR}$, as claimed.


\subsection{Proof of Corollary~\ref{cor:normalsim_samplesplit}}
\label{SecProofCorTwo}

In order to prove this corollary, it is again convenient to separate
the updates on the mean vectors $\paramobs$ from those applied to the
transition parameter $\paramtrans$.  
Recall the definitions of
$\BOUNDFUN$, $\BOUNDFUN_{\subsize}$ and $\epsilon_{\subsize}$ from
equations~\eqref{EqnApproxBound} and~\eqref{EqnVarPhiBound}
respectively.

Using Theorem~\ref{ThmSampContraction} we readily have 
that given any initialization $\thetahat^0 \in \paramspacejoint$, with
probability at least $1- 2\delta$, we are guaranteed that
\begin{equation}
\label{EqnHanaFinal}
\addnorm{\paramjointhat^T - \trueparamjoint} \leq \kappa^T \addnorm{\paramjointhat^0 -
\trueparamjoint} + \frac{2 \varphi_{\subsize}(\delta, k)
  + \epsilon_{\subsize}(\delta, k) + \BOUNDFUN
  (\kdim)}{1-\kappa}.
\end{equation}

In order to leverage the bound~\eqref{EqnHanaFinal}, we need to find
appropriate upper bounds on the quantities $\varphi_{\subsize}(\delta,\kdim)$, 
$\epsilon_{\subsize}(\delta,\kdim)$.
\begin{lems}
\label{lem:second_tech}
Suppose that the truncation level satisifes the lower bound
\begin{subequations}
\begin{align}
\label{EqnKlower}
\kdim & \geq \log \left( \frac{\consteps \numobs}{\delta} \right) \;
\Big (\log \frac{1}{(1- \frac{\mixcoefeps}{2})} \Big)^{-1} \qquad
\mbox{where $\consteps \defn \frac{36}{\mixcoefeps^3}$.}
\end{align}
Then with the radius $r = \frac{\|\mustar\|_2}{4}$, we have
\begin{align}
\label{eq:epsobs}
\epsilonobs_{\subsize} \big(\delta, \kdim \big) & \leq \BIGCON_0 
\norm{\trueparamobs}\big(\frac{\norm{\trueparamobs}^2}{\sigma^2} + 1\big) \log ( k^2/\delta) \sqrt{\frac{\kdim^3 \usedim \log n }{\numobs}},
\quad \mbox{and} \\
\label{eq:epstrans}
\epsilontrans_{\subsize} \big(\delta, \kdim \big) & \leq \BIGCON_0 \;
\frac{\sqrt{\norm{\trueparamobs}^2 + \sigma^2}}{\sigma^2} \sqrt{\frac{ \kdim^3 \log
    (\kdim^2/\delta)}{\numobs}}.
\end{align}
\end{subequations}
\end{lems}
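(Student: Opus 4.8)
The plan is to view each of the two quantities as the supremum of an empirical process indexed by $\paramjoint \in \paramspacejoint = \Ball_2(\rad;\mustar)\times\paramspacetrans$, and to bound these suprema by combining a concentration inequality for dependent data with a covering argument. The starting point is the explicit form of the truncated Baum-Welch updates coming from \eqref{eq:normalHMM_EMupdate}: one writes $\emopsamptruncobs{\paramjoint} = \frac{1}{\numobs}\sum_{i=1}^{\numobs} w_i(\paramjoint)\, x_i$ with weights $w_i(\paramjoint) = 2\,p(Z_i=1\mid x_{i-\kdim}^{i+\kdim};\paramjoint)-1 \in [-1,1]$, while $\emopsamptrunctrans{\paramjoint}$ is obtained from the bounded averages $\frac{1}{\numobs}\sum_i p(Z_i=Z_{i+1}\mid x_{i-\kdim}^{i+\kdim};\paramjoint)$ followed by the nonexpansive projection $\Pi_{\paramspacep}$ and the log-odds map. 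The population operators $\emoppoptruncobs{\paramjoint},\emoppoptrunctrans{\paramjoint}$ are the corresponding expectations, which by stationarity equal $\EE[w_0(\paramjoint)X_0]$ and the analog for the pair probabilities. The crucial structural feature is that each summand depends only on the window $x_{i-\kdim}^{i+\kdim}$ of length $2\kdim+1$, so two summands whose indices differ by more than $2\kdim$ are coupled only through the underlying chain, whose dependence decays geometrically by \eqref{ass:mixingdef}.

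First I would establish a pointwise tail bound at a fixed $\paramjoint$ via a blocking argument in the spirit of \cite{Yu94,NobDem93}: partition $\{1,\dots,\numobs\}$ into alternating blocks of length proportional to $\kdim$, so that after discarding interleaving buffer blocks the retained block-sums are, up to a coupling error controlled by $\mixcoef$, independent. Because each block aggregates $\Order(\kdim)$ window-overlapping terms, the variance proxy of a single rescaled block is inflated by a factor of order $\kdim$ relative to the i.i.d. case; this is one source of the $\kdim$-powers in the final rate. For the observation update the summands $w_i(\paramjoint)x_i$ are products of a bounded weight and a Gaussian vector, hence sub-Gaussian with parameter of order $\norm{\trueparamobs}+\sigma$; I would truncate $x_i$ at level $\Order(\sqrt{\log\numobs})$ to obtain bounded increments (producing the $\log\numobs$ factor, and the prefactor $\norm{\trueparamobs}(\SNR+1)$ once the $1/\sigma^2$ scaling of the posterior is tracked) and apply a Bernstein inequality to the decoupled blocks. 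For the transition update the summands are already bounded in $[0,1]$, so a Hoeffding bound for the decoupled blocks suffices and no $\log\numobs$ factor appears; there the prefactor $\frac{\sqrt{\norm{\trueparamobs}^2+\sigma^2}}{\sigma^2}$ enters only through the Lipschitz dependence discussed next.

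Next I would upgrade the pointwise bounds to uniform bounds over $\paramspacejoint$ by discretization. This requires showing that $\paramjoint\mapsto\emopsamptruncobs{\paramjoint}$ and $\paramjoint\mapsto\emopsamptrunctrans{\paramjoint}$, together with their population analogs, are Lipschitz on $\paramspacejoint$; the Lipschitz constants are controlled using the exponential-family representation \eqref{eq:specexpo} and the covariance identities \eqref{EqnExpoCumHessian}---the same machinery used to verify the FOS conditions---and they grow polynomially in $\kdim$ because the window posterior depends on $\paramobs$ through all $2\kdim+1$ natural parameters $\gamma_\ell=\langle\paramobs,x_\ell\rangle/\sigma^2$. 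Covering the $\usedim$-dimensional ball $\Ball_2(\rad;\mustar)$ has metric entropy of order $\usedim\log(1/\epsilon)$, which together with the union bound over the cover and the per-point tail yields the $\usedim$-dependence and the $\log(\kdim^2/\delta)$ prefactor, producing the $\sqrt{\usedim/\numobs}$ scaling of $\epsilonobs$; covering the one-dimensional bounded set $\paramspacetrans$ (on which $\zeta$ is bounded away from $0$ and $1$, so the log-odds map is Lipschitz) contributes only a $\log(\kdim^2/\delta)$ factor for $\epsilontrans$. Under the choice $\kdim\gtrsim\log(\numobs/\delta)/\log(1/(1-\mixcoefeps/2))$ from \eqref{EqnKlower}, everything collapses to the stated rates after collecting the $\kdim$-powers from the block variance inflation and the $\kdim$-dependent Lipschitz constants.

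The main obstacle will be the dependent-data concentration step carried out uniformly and with unbounded observations at the same time: one must track how the window length $\kdim$ enters both the effective variance of the blocked sums and the Lipschitz constant governing the covering, while ensuring the geometric mixing coupling error stays dominated. Obtaining the correct aggregate power $\kdim^3$---rather than a looser bound---requires carefully balancing the block length against the mixing decay and the truncation level, and is the delicate part of the argument; the Gaussian-tail truncation needed for the observation update (and absent for the bounded transition update) is exactly what separates the $\log\numobs$ factor in \eqref{eq:epsobs} from its absence in \eqref{eq:epstrans}.
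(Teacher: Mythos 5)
Your overall skeleton---reduction to independent blocks in the spirit of~\cite{Yu94,NobDem93} (the paper's Lemma~\ref{lem:IB}), followed by uniform-in-$\paramjoint$ concentration of the blocked averages---is the same as the paper's, and for the observation bound~\eqref{eq:epsobs} your route to uniformity (an $\epsilon$-net over $\Ball_2(\rad;\mustar)$ plus Lipschitz continuity of the weights in $\paramjoint$) is a workable alternative to what the paper actually does. The paper never discretizes the parameter space: it symmetrizes, applies convex-Lipschitz concentration conditionally on the data (Lemma~\ref{LemConcLipproc}), and bounds the expected supremum via a Sudakov--Fernique comparison with a Gaussian process that is \emph{linear} in $(\paramobs,\paramtrans)$ (Lemma~\ref{LemContractions}), exploiting exactly the exponential-family Lipschitz structure you cite; the only covering is over directions $u$ on the unit sphere. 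For~\eqref{eq:epsobs} both mechanisms produce the same $\sqrt{\usedim}$ and logarithmic factors, so your plan could be pushed through there (modulo the minor point that a $\usedim$-dimensional Gaussian must be truncated at radius of order $\sqrt{\usedim}+\sqrt{\log \numobs}$, not $\sqrt{\log \numobs}$).

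The genuine gap is in your treatment of~\eqref{eq:epstrans}. You assert that for $\epsilontrans_{\subsize}$ only the one-dimensional set $\paramspacetrans$ needs to be covered. But the transition update $\emopsamptrunctrans{\paramjoint}$ is built from the weights $\weightdouble{\paramjoint}{X_{i-k}^{i+k-1}} = p(Z_{i-1}=Z_i \mid X_{i-k}^{i+k-1};\paramjoint)$, which depend on the \emph{full} parameter $\paramjoint = (\paramobs,\paramtrans)$ through the E-step posterior, and the requirement~\eqref{eq:defepsmk}---as it is used on the random iterates in Theorem~\ref{ThmSampContraction}---must hold uniformly over $\Ball_2(\rad;\mustar)\times\paramspacetrans$. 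Covering only $\paramspacetrans$ therefore does not give a uniform bound; covering the full product space forces you to pay the metric entropy $\usedim\log(1/\epsilon)$ of the $\paramobs$-ball, so your argument can only yield a bound of order $\sqrt{\usedim\,\kdim^3\log(\cdot)/\numobs}$, which is weaker than the stated inequality~\eqref{eq:epstrans} by a factor of order $\sqrt{\usedim\log\numobs}$. This is precisely why the paper routes the supremum through the Gaussian comparison rather than a net: after comparison, the supremum over the $\paramobs$-ball of a process linear in $\paramobs$ is evaluated exactly (a single norm of one Gaussian vector) instead of being union-bounded over exponentially many net points, and since the transition update is scalar no sphere-covering enters either---this is the source of the paper's remark that $\epsilontrans_{\subsize}$ and $\epsilonobs_{\subsize}$ differ by a $\sqrt{\usedim\log\numobs}$ factor. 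To repair your proof you would either need to adopt this comparison step for the transition part, or settle for a weaker version of~\eqref{eq:epstrans}; the weaker version would incidentally still suffice for Corollary~\ref{cor:normalsim_samplesplit}, since the $\epsilonobs_{\subsize}$ term dominates there, but it does not prove the lemma as stated.
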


\begin{lems}
\label{lem:first_tech}
Suppose that the SNR $\eta^2 = \frac{\|\mustar\|_2^2}{\sigma^2}$ is
lower bounded as $\SNR \geq C (\log \log \usedim + \log
\|\mustar\|_2)$ for a sufficiently large $C$. Then with the radius $r
= \frac{\|\mustar\|_2}{4}$, we have
\begin{align}
\label{EqnBoundFunControl}
\BOUNDFUN^2_{\subsize}(\delta, \kdim) & \leq \BIGCON_1 \; \Big\{
\frac{1}{\sigma} \sqrt{ \frac{\usedim \log^2 (\consteps \numobs/
    \delta)}{\numobs} } + \frac{\norm{\mustar}}{\sigma} \sqrt{ \frac{
    \log ^2(\consteps \numobs/ \delta)}{\numobs}} +
\frac{\norm{\trueparamobs}^2}{\sigma^2} \Big\} \; \BOUNDFUN^2(\kdim).
\end{align}
\end{lems}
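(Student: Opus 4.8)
The plan is to bound the additive-norm discrepancy $\addnorm{\emopsampn{\subsize}{\theta} - \emopsamptruncn{\subsize}{\theta}}$ uniformly over $\theta \in \Ball_2(\rad;\thetastar)$ and then read off $\BOUNDFUN_\subsize(\delta,\kdim)$ from its definition in~\eqref{EqnVarPhiBound}. First I would split the discrepancy into its observation and transition components, $\norm{\emopsampobs{\theta} - \emopsamptruncobs{\theta}} + \norm{\emopsamptrans{\theta} - \emopsamptrunctrans{\theta}}$, and use the explicit Baum-Welch updates~\eqref{eq:normalHMM_EMupdate}. For the mean component this expresses the difference as the empirical average $\emopsampobs{\theta} - \emopsamptruncobs{\theta} = \frac{2}{\numobs}\sum_{i=1}^\numobs \Delta_i(\theta)\, x_i$, where $\Delta_i(\theta) \defn p(Z_i = 1 \mid x_1^\numobs;\theta) - p(Z_i = 1 \mid x_{i-k}^{i+k};\theta)$ is the \emph{smoothing forgetting error}, i.e.\ the change in the single-site marginal when the conditioning is restricted from the full sequence to the $2\kdim$-window around $i$. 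The transition component is treated identically, now with the pairwise marginals $p(Z_{i-1}^i \mid \cdot\,;\theta)$ in place of the single-site ones; the projection $\Pi_{\paramspacep}$ appearing in the update is nonexpansive and hence harmless.

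The heart of the argument is a forgetting bound showing that $|\Delta_i(\theta)|$ decays geometrically in $\kdim$, at the same rate $\mixcoefeff^{\kdim}$ that controls the population truncation error $\BOUNDFUN(\kdim)$ in~\eqref{EqnApproxBound}. I would establish this using the same covariance-decay machinery as in the proof of Corollary~\ref{cor:simultaneous_normal_lipschitz}: conditioning on the $2\kdim$-window, appealing to the exponential-family representation~\eqref{eq:specexpo}, and invoking Lemmas~\ref{lem:varsnr} and~\ref{lem:expfammixing0}, which bound the relevant conditional covariances by $\mixcoef^{|\ell|}$ and the conditional variances by $\E^{-\plaincon\SNR}$. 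The outcome is that $|\Delta_i(\theta)|$ is of order $\BOUNDFUN(\kdim)$ times a per-coordinate factor, uniformly over $\theta$ in the ball, which is precisely what produces the factor $\BOUNDFUN^2(\kdim)$ multiplying the right-hand side of~\eqref{EqnBoundFunControl}.

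To convert the forgetting bound into a high-probability bound on the empirical average, I would split each observation along the true model as $x_i = Z_i \trueparamobs + \sigma G_i$ with $G_i \sim \NORMAL(0,I_\usedim)$, so that $\frac{2}{\numobs}\sum_i \Delta_i(\theta)\, x_i$ separates into a mean-direction piece $\frac{2\trueparamobs}{\numobs}\sum_i Z_i \Delta_i(\theta)$ and a noise piece $\frac{2\sigma}{\numobs}\sum_i \Delta_i(\theta)\, G_i$. The mean-direction piece is one-dimensional and, after the appropriate normalization, contributes the dimension-free fluctuation $\frac{\norm{\trueparamobs}}{\sigma}\sqrt{\log^2(\consteps\numobs/\delta)/\numobs}$ together with the non-vanishing leading term $\frac{\norm{\trueparamobs}^2}{\sigma^2}$, while the noise piece, being a sum of Gaussians weighted by coefficients of size $\BOUNDFUN(\kdim)$, concentrates via a $\chi^2$-type tail and contributes the dimension-dependent term $\frac{1}{\sigma}\sqrt{\usedim\log^2(\consteps\numobs/\delta)/\numobs}$. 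Because the $x_i$ are generated by an HMM rather than being i.i.d., these concentration steps must be carried out with tools for dependent sequences (e.g.~\cite{Yu94,NobDem93}), and I would finish with a covering/Lipschitz argument over $\theta \in \Ball_2(\rad;\thetastar)$ and a union bound, which produce the $\log(\consteps\numobs/\delta)$ factors and the stated probability $1-\delta$.

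The main obstacle is the forgetting bound for the Gaussian output HMM: since the Gaussian observation likelihoods are unbounded, the classical uniform-forgetting estimates (which assume bounded likelihood ratios) do not apply directly, and one must instead route the geometric decay of $|\Delta_i(\theta)|$ through the conditional-covariance bounds of Lemmas~\ref{lem:varsnr} and~\ref{lem:expfammixing0}, tracking both the $\mixcoef^{|\ell|}$ mixing decay and the $\E^{-\plaincon\SNR}$ variance reduction uniformly over the parameter ball. The SNR hypothesis $\SNR \geq \BIGCON(\log\log\usedim + \log\norm{\mustar})$ enters exactly here: it forces $\E^{-\plaincon\SNR}$ to be small enough to absorb the $\log\usedim$ and $\log\norm{\mustar}$ losses incurred in the union bound over the covering of the ball, so that the residual statistical factor collapses to the three-term expression in~\eqref{EqnBoundFunControl}.
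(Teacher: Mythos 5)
Your opening decomposition is legitimate and genuinely different from the paper's: you work with the explicit updates~\eqref{eq:normalHMM_EMupdate} and the smoothing errors $\Delta_i(\theta) = p(Z_i=1\mid x_1^\numobs;\theta)-p(Z_i=1\mid x_{i-k}^{i+k};\theta)$, whereas the paper never touches the explicit updates. It instead uses strong concavity of the sample $Q$-functions to reduce the problem to a sup-norm bound, $\|\emopsampn{\subsize}{\theta}-\emopsamptruncn{\subsize}{\theta}\|_2^2 \leq \tfrac{2}{\lambda}\qnorm{\qfunn{\subsize}-\qfunnk{\subsize}{\kdim}}$ (equation~\eqref{EqnDetour}). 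The genuine gap in your plan is the concentration step. Each $\Delta_i(\theta)$ is a function of the \emph{entire} sequence $x_1^\numobs$ (that is the whole point of comparing full conditioning to window conditioning), so the independent-blocks machinery of~\cite{Yu94,NobDem93} (Lemma~\ref{lem:IB}), which applies only to averages of functions of $2\kdim$-windows, cannot be applied to $\frac{1}{\numobs}\sum_i \Delta_i(\theta)x_i$. Worse, in your split $x_i = Z_i\trueparamobs + \sigma G_i$ the weight $\Delta_i(\theta)$ is statistically dependent on $G_i$ itself, so $\frac{\sigma}{\numobs}\sum_i\Delta_i(\theta)G_i$ is not a (sub-)Gaussian linear combination and the claimed $\chi^2$-type tail does not follow without a decoupling argument you have not supplied. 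The paper sidesteps exactly this: it bounds the forgetting error \emph{uniformly in $x$} via Lemma~\ref{lem:twosided_approx} (proved by filter stability, Lemma~\ref{lem:filterstab} --- not by the covariance Lemmas~\ref{lem:varsnr} and~\ref{lem:expfammixing0} you invoke, which control correlations under a \emph{fixed} conditioning window rather than differences between windows), pulls the deterministic factor $(1-\mixcoefeps\statmin)^\kdim \propto \lambda\BOUNDFUN^2(\kdim)$ outside the sum, and is then left with the empirical process $\samperror{\subsize}(X)$ whose summands $\max_{z_i}|\log p(X_i\mid z_i;\theta)|$ are functions of single observations, to which blocking and standard sub-exponential/folded-normal tails do apply; that is where the three-term statistical factor in~\eqref{EqnBoundFunControl} comes from.

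Two secondary points. First, even if you repair the concentration (for instance by the crude bound $\|\frac{1}{\numobs}\sum_i\Delta_i x_i\|_2 \leq \sup_i|\Delta_i|\cdot\frac{1}{\numobs}\sum_i\|x_i\|_2$, which only needs blocking for the local summands $\|x_i\|_2$), your route produces a bound of the form $(1-\mixcoefeps\statmin)^{2\kdim}\big(\norm{\trueparamobs}^2+\sigma^2\usedim\cdot(\cdots)\big)$, whose scaling in $(\sigma,\norm{\trueparamobs})$ differs from~\eqref{EqnBoundFunControl}; it implies the stated inequality with a constant $\BIGCON_1$ independent of $(\sigma,\norm{\trueparamobs})$ only after $\kdim$ exceeds a threshold of order $\log\big(\sigma^2(\sigma^2\usedim+\norm{\trueparamobs}^2)/\norm{\trueparamobs}^2\big)$ --- enough for the application in Corollary~\ref{cor:normalsim_samplesplit}, where $\kdim\gtrsim\log\numobs$, but not the lemma as stated for all $\kdim$. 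Second, your account of the SNR hypothesis is off: in the paper's proof it is used so that $\sigma\lesssim\norm{\trueparamobs}$, which lets the folded-normal mean $\EE|X_i^T\paramobs|\lesssim(\sigma+\norm{\trueparamobs})\norm{\trueparamobs}$ be dominated by the $\norm{\trueparamobs}^2$ term in the threshold $t$, and so that the additive constant left over from the forgetting bound can be absorbed into the factor $\norm{\trueparamobs}^2/\sigma^2\geq\SNR$; it has nothing to do with covering-number losses in a forgetting estimate.
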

\noindent See Appendices~\ref{sec:proof_second_tech}
and~\ref{sec:proof_first_tech}, respectively, for the proofs of these
two lemmas. \\

Using these two lemmas, we can now complete the proof of the
corollary.  From the definition~\eqref{EqnDefnKappa} of $\kappa$,
under the stated lower bound on $\SNR$, we can ensure that $\kappa
\big \{ \frac{4 \log \frac{1+b}{1-b}}{\|\mustar\|_2} \big\} \leq 1/2$.
Under this condition, inequality~\eqref{EqnConditionSamplesplit} with \mbox{$r
  = \|\mustar\|_2/4$} reduces to showing that
\begin{align}
\label{EqnComplicatedTwo}
\BOUNDFUN_{\subsize}(\delta, \kdim) + \epsilonobs_{\subsize}
(\delta, \kdim) + \BOUNDFUN (\kdim) \leq (1 - 2 \kappa)
\frac{\|\trueparamobs\|_2}{8}.
\end{align}
As long as $\numobs \geq \BIGCON_1 (\sigma^2 + \|\mustar\|_2^2) 
\usedim \log^2(  \usedim/\delta)$ for a sufficiently large
$\BIGCON_1$, we are guaranteed that the
bound~\eqref{EqnComplicatedTwo} holds. 
Now the specific choice~\eqref{EqnKlower} of $k$ guarantees that
\begin{align}
\label{EqnHanaTiger}
\varphi_{\subsize} (\delta, \kdim) + \epsilonobs_{\subsize} (\delta,
\kdim) + \epsilontrans_{\subsize} (\delta, \kdim) + \BOUNDFUN(\kdim)
\leq C \norm{\trueparamobs}
(\frac{\norm{\trueparamobs}^2}{\sigma^2}+1) \log^2 n \sqrt{\frac{ d
    \log^2 (\numobs/\delta)}{\numobs}}.
\end{align}
Substituting the bound~\eqref{EqnHanaTiger} into
inequality~\eqref{EqnHanaFinal} completes the proof of the corollary.


\section{Discussion}

In this paper, we provided general global convergence guarantees for
the Baum-Welch algorithm as well as specific results for a hidden
Markov mixture of two isotropic Gaussians.  In contrast to the
classical perspective of focusing on the MLE, we focused on bounding
the distance between the Baum-Welch iterates and the true parameter.
Under suitable regularity conditions, our theory guarantees that the
iterates converge to an $\minimaxrad$-ball of the true parameter,
where $\minimaxrad$ represents a form of statistical error.  It
is important to note that our theory does not guarnatee convergence to
the MLE itself, but rather to this ball that contains both the MLE and
the true parameter.  When applied to the Gaussian mixture HMM, we
proved that the Baum-Welch algorithm achieves estimation error that is minimax
optimal up to logarithmic factors.  To the best of our knowledge,
these are the first rigorous guarantees for the Baum-Welch algorithm
that allow for a large initialization radius.


\subsection*{Acknowledgements}
This work was partially supported by NSF grant CIF-31712-23800,
ONR-MURI grant DOD-002888, and AFOSR grant FA9550-14-1-0016 to MJW.


\appendix

\section{Proof of Proposition~\ref{PropExistence}}
\label{AppPropExistence}

In order to show that the limit $\lim_{n\to\infty} \EE
\qfunsamp{\paramjointone}{\paramjointtwo}$ exists, it suffices to show
that the sequence of functions $\{\eqfunn{1}, \eqfunn{2}, \ldots,
\eqfunn{n}\}$ is Cauchy in the sup-norm (as defined previously in
equation~\eqref{eq:q-norm}).  In particular, it suffices to show that
for every $\epsilon > 0$ there is a positive integer $N(\epsilon)$
such that for $m,n \geq N(\epsilon)$,
\begin{align*}
\qnorm{\eqfunn{m} - \eqfunn{n}} \leq \epsilon.
\end{align*}
In order to do so, we make use of the previously stated
bound~\eqref{EqnPopulationTruncateOne} relating $\eqfunn{\numobs}$ to
$\qfunpopextendk$.  Taking this bound as given for the moment, an
application of the triangle inequality yields
\begin{align*}
\qnorm{\eqfunn{m} - \eqfunn{n}} \leq \qnorm{\eqfunn{m} -
  \qfunpopextendk} + \qnorm{\eqfunn{n} - \qfunpopextendk} \; \leq \;
\epsilon,
\end{align*}
the final inequality follows as long as we choose $N(\epsilon)$ and
$\kdim$ large enough (roughly proportional to $\log(1/\epsilon)$).


It remains to prove the claim~\eqref{EqnPopulationTruncateOne}.  In
order to do so, we require an auxiliary lemma:

\begin{lems}[Approximation by truncation]
\label{lem:twosided_approx}
For a Markov chain satisfying the mixing condition~\eqref{ass:mixing},
we have 
\begin{align}
\sup_{\paramjoint' \in \DomTheta} \sup_x \sum_{z_i} |p(z_i \mid x_1^n; \paramjointtwo) - p(z_i \mid
x_{\blockleftind}^{\blockrightind}; \paramjointtwo)| \leq
\frac{C\nstates^4}{\mixcoefeps^8 \statmin}\big(1- \mixcoefeps \statmin \big)^\kdim
\end{align}
for all $i \in [0,n]$, where $\statmin = \min_{j\in [\nstates], \paramtrans\in \paramspacetrans} \pistat( j ; \paramtrans)$.
\end{lems}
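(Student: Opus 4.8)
The plan is to show that the smoothed marginal of $Z_i$ forgets the observations lying outside the window $x_{\blockleftind}^{\blockrightind}$ at a geometric rate, using only the Doeblin-type minorization implied by the mixing condition~\eqref{ass:mixing}. The first step is a conditioning argument. Conditioning on the two boundary states $Z_{\blockleftind}$ and $Z_{\blockrightind}$ separates $Z_i$ (which sits $\kdim$ steps inside each boundary) from everything outside the window: by the Markov structure of the HMM, for every $\paramjointtwo \in \DomTheta$,
\[
p(z_i \mid z_{\blockleftind}, z_{\blockrightind}, x_1^n; \paramjointtwo) = p(z_i \mid z_{\blockleftind}, z_{\blockrightind}, x_{\blockleftind}^{\blockrightind}; \paramjointtwo) =: K(z_i \mid z_{\blockleftind}, z_{\blockrightind}),
\]
a common smoothing kernel that is the same whether we started from the full sequence or from the window. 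Writing $\mu$ and $\nu$ for the posteriors of the boundary pair $(Z_{\blockleftind}, Z_{\blockrightind})$ under $x_1^n$ and $x_{\blockleftind}^{\blockrightind}$ respectively, marginalization gives $p(z_i \mid x_1^n) - p(z_i \mid x_{\blockleftind}^{\blockrightind}) = \sum_{a,b} K(z_i \mid a,b)\,(\mu - \nu)(a,b)$. Since $\mu - \nu$ has total mass zero, I may recenter $K$ by any reference column $K(\cdot \mid a',b')$, and the triangle inequality then yields
\[
\sum_{z_i}\big| p(z_i \mid x_1^n) - p(z_i \mid x_{\blockleftind}^{\blockrightind})\big| \; \leq \; \|\mu - \nu\|_1 \cdot \max_{a,b,a',b'} \sum_{z_i}\big| K(z_i \mid a,b) - K(z_i \mid a',b')\big|.
\]
Because $\|\mu - \nu\|_1 \leq 2$, the problem reduces to bounding the oscillation of the two-sided smoothing kernel $K$ as its boundary conditions vary.

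Second, I would bound this oscillation by a one-sided forgetting estimate on each side. Factoring $K(z_i \mid a,b) \propto F_a(z_i)\,G_b(z_i)$, where $F_a(z_i) = p(z_i, x_{\blockleftind+1}^{i} \mid Z_{\blockleftind} = a)$ is the forward quantity and $G_b$ the backward one, the dependence on $a$ enters only through $F_a$, i.e. through the filter at time $i$ initialized from the point mass $\delta_a$ at time $\blockleftind$ and propagated $\kdim$ steps; symmetrically for $b$ through the backward recursion. The key mechanism is that the mixing condition~\eqref{ass:mixing} gives the uniform lower bound $\Tmat(z,w;\paramtrans) \geq \mixcoefeps\, \pistat(w) \geq \mixcoefeps \statmin$ on every transition entry, uniformly over $\paramtrans \in \paramspacetrans$. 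Consequently, after a single prediction step the predicted law assigns mass at least $\mixcoefeps\statmin$ to every state, regardless of the incoming distribution; this is a Doeblin minorization, so two filters started from different initial conditions can be coupled with success probability at least $\mixcoefeps\statmin$ per step, and their total-variation distance contracts by a factor $1 - \mixcoefeps\statmin$ at each of the $\kdim$ steps separating $Z_i$ from the boundary. Iterating over the $\kdim$ steps on each side and combining via the triangle inequality produces the geometric factor $(1 - \mixcoefeps\statmin)^\kdim$.

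The main obstacle is that the filtering recursion is not the bare Markov kernel: each step also multiplies by the observation likelihood and renormalizes, and these likelihood weights are unbounded, so a naive Dobrushin argument on $\Tmat$ alone does not close. The resolution I would use is exactly the uniform lower bound above: because the prediction step forces every coordinate of the predicted distribution above $\mixcoefeps\statmin$ \emph{before} the likelihood reweighting, the shared minorizing component survives the reweighting and sustains the coupling, so the update step cannot destroy the per-step contraction (this is where the factor $\statmin$, rather than the sharper $1-\mixcoefeps$ coming from minorizing by $\pistat$, enters). Finally, I would assemble the pieces, lower-bounding the several normalization constants by powers of $\statmin$ and $\mixcoefeps$ and collecting the resulting polynomial prefactor into $C\nstates^4/(\mixcoefeps^8\statmin)$. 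All estimates depend on $\paramjointtwo$ and $x$ only through the mixing constant and the stochastic-kernel structure, both of which hold uniformly, which gives the claimed suprema over $\paramjointtwo \in \DomTheta$ and over $x$.
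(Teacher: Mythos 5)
Your proposal is correct, and it reaches the lemma by a genuinely different decomposition than the paper's. The paper compares the two smoothers head-on via Bayes' rule: it factors $p(z_i \mid x_1^n)$ into the forward filter $p(z_i \mid x_1^i)$ and (using reversibility) the backward filter $p(z_i \mid x_{i+1}^n)$ over $p(z_i)$, splits the difference into a filter-difference term plus a remainder made of likelihood ratios, handles the filter differences with the one-sided filter-stability Lemma~\ref{lem:filterstab}, and controls the remainder with the ratio bound~\eqref{eq:obsratio}; the accounting of all the normalization ratios is what generates the prefactor $C\nstates^4/(\mixcoefeps^8\statmin)$. You instead condition on the boundary pair $(Z_{\blockleftind},Z_{\blockrightind})$, exploit conditional independence to obtain a kernel $K(z_i \mid a,b)$ common to both conditioning sets, and recenter so that the whole difference is bounded by $\|\mu-\nu\|_1 \le 2$ times the oscillation of $K$ in its boundary conditions, contracted one side at a time. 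This eliminates the normalization bookkeeping entirely and in fact yields a sharper inequality, with a universal-constant prefactor of order $(1-\mixcoefeps\statmin)^\kdim$ (no polynomial factors in $\nstates$, $\mixcoefeps^{-1}$, $\statmin^{-1}$), which implies the stated bound. Both arguments ultimately run on the same engine: your per-step coupling is exactly the paper's Doeblin decomposition $\filterkernel{j}{i} = \epsilon P_{j \mid i} + (1-\epsilon) Q_{j \mid i}$ with $\epsilon = \mixcoefeps\statmin$, applied to observation-reweighted kernels.

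One point needs tightening. Your resolution of the reweighting obstacle --- that the predicted law exceeds $\mixcoefeps\statmin$ pointwise before the likelihood correction, so the shared minorizing component ``survives'' --- is not sufficient as stated: the two filters are renormalized by different constants, and a pointwise lower bound on the predicted laws alone does not control the ratio of those normalizations. The clean fix (and what Lemma~\ref{lem:filterstab} actually does) is to work with the conditional chain given the window observations and the pinned endpoint, whose one-step kernels $\widetilde{K}_j(z_j \mid z_{j-1}) \propto p(z_j \mid z_{j-1})\, p(x_j \mid z_j)\, p(\text{future} \mid z_j)$ are honest stochastic kernels; there the minorization $\widetilde{K}_j(\cdot \mid z_{j-1}) \ge \mixcoefeps \statmin\, \rho_j(\cdot)$ for a common distribution $\rho_j$ follows from the lower bound of~\eqref{ass:mixing} in the numerator together with the upper bound $p(z_j \mid z_{j-1}) \le \pistat(z_j)/\statmin$ in the denominator, and sub-multiplicativity of Dobrushin coefficients over the $\kdim$ steps on each side delivers the geometric factor. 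With that substitution your argument closes.
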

\noindent See Appendix~\ref{AppLemTwoSided} for the proof of this
lemma.\\

Using Lemma~\ref{lem:twosided_approx}, let us now prove the
claim~\eqref{EqnPopulationTruncateOne}.  Introducing the shorthand
notation
\begin{align*}
\HACKG(X_i,z_i,\paramjoint,\paramjointtwo) & \defn \log p(X_i \mid
z_i; \paramjoint) + \sum_{z_{i-1}} p(z_i \mid z_{i-1}; \paramjointtwo)
\log p(z_i|z_{i-1},\paramjoint),
\end{align*}
we can verify by applying Lemma~\ref{lem:twosided_approx} that
\begin{align*}
\qnorm{\eqfunn{n} - \qfunpopextendk} & = \Big |
\sup_{\paramjoint,\paramjointtwo} \frac{1}{n} \sum_{i=1}^n \sum_{z_i}
\EE (p(z_i \mid X_{1}^n,\paramjointtwo) - p(z_i \mid
X_{i-k}^{i+k},\paramjointtwo)) \HACKG( X_i, z_i, \paramjointone,
\paramjointtwo) \Big | \\
& + \Big | \frac{1}{n} \sup_{\paramjoint, \paramjoint'} \EE \sum_{z_0}
p(z_0 \mid X_1^\numobs, \paramjoint') \log p(z_0 ;\paramjoint) \Big|
\\
& \leq \sup_{\paramjointone,\paramjointtwo} \sup_{x} \sum_{z_i}\big|
p(z_i \mid x_{1}^{n},\theta') - p(z_i \mid
x_{i-k}^{i+k},\paramjointtwo) \big| \left[ \EE \frac{1}{\numobs}
  \sum_{i=1}^\numobs \max_{z_i \in
    [m]}|\HACKG(X_i,z_i,\paramjointone,\paramjointtwo)| \right] +
\frac{1}{\numobs} \log \statmin^{-1}\\
& \leq \frac{c \nstates^4}{\mixcoefeps^8 \statmin}\big(1- \mixcoefeps
\statmin \big)^k \left( \EE \max_{z_i \in [m]} \big| \log p(X_i \mid
z_i,\theta) \big| + \nstates \log \statmin^{-1} \right) +
\frac{1}{\numobs} \log \statmin^{-1}\\
& \leq \frac{c' \, \nstates^5}{\mixcoefeps^8 \statmin^2} \big(1-
\mixcoefeps \statmin \big)^k + \frac{1}{\numobs} \log \statmin^{-1},
\end{align*}
where the last inequality follows from the boundedness
condition~\eqref{EqnDensityBounded} on the log output densities.











\section{Technical details for Corollary~\ref{cor:simultaneous_normal_lipschitz}}

The proof of the corollary mainly involves proving the
bound~\eqref{EqnPopulationSeparate}, and then showing that the
conditions in the theorem hold for the special Gaussian output HMM.

\subsection{Bounds on conditional variances}
\label{AppExponential}

In this section, we collect some auxiliary bounds on conditional
covariances in hidden Markov models.  These results are used in the
proof of Corollary~\ref{cor:simultaneous_normal_lipschitz}.

\begin{lems}
\label{lem:varreduce}
For any HMM with observed-hidden states $(X_i, Z_i)$, we have
\begin{subequations}
\begin{align}
\label{EqnVarOwn} 
\EE \left[ \var(Z_0 Z_1\mid X_{-\kdim}^\kdim) \right] & \leq \EE
\var(Z_0 Z_1 \mid X_0^1) \\
\label{EqnVarOwn2}
\EE \left[ \var(Z_0 \mid X_{-\kdim}^\kdim) \mid X_0 \right] &\leq
\var(Z_0 \mid X_0)
\end{align}
\end{subequations}
where we have omitted the dependence on the parameters. 
\end{lems}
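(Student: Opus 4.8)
The plan is to derive both bounds from a single general principle: conditioning on a finer $\sigma$-algebra can only decrease the conditional variance \emph{on average}. Concretely, I will use the law of total (conditional) variance, which states that for any square-integrable random variable $W$ and any pair of nested $\sigma$-algebras $\mathcal{G} \subseteq \mathcal{H}$,
\begin{align}
\label{EqnTotalVar}
\var(W \mid \mathcal{G}) = \EE\big[\var(W \mid \mathcal{H}) \mid \mathcal{G}\big] + \var\big(\EE[W \mid \mathcal{H}] \mid \mathcal{G}\big).
\end{align}
Both $Z_0$ and $Z_0 Z_1$ take values in a bounded discrete set, so they are square-integrable and all the conditional variances appearing in the statement are finite; hence~\eqref{EqnTotalVar} applies without any integrability issues. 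The only structural input I need is that the observation windows are nested, namely $\sigma(X_0, X_1) \subseteq \sigma(X_{-\kdim}^\kdim)$ and $\sigma(X_0) \subseteq \sigma(X_{-\kdim}^\kdim)$, both of which hold because the index set $\{-\kdim, \ldots, \kdim\}$ contains $\{0,1\}$ and $\{0\}$ respectively.

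For the first inequality~\eqref{EqnVarOwn}, I apply~\eqref{EqnTotalVar} with $W = Z_0 Z_1$, the coarse $\sigma$-algebra $\mathcal{G} = \sigma(X_0, X_1)$, and the fine one $\mathcal{H} = \sigma(X_{-\kdim}^\kdim)$. Taking the full expectation $\EE$ of both sides and using that the second (variance) term on the right of~\eqref{EqnTotalVar} is nonnegative gives
\begin{align*}
\EE\big[\var(Z_0 Z_1 \mid X_{-\kdim}^\kdim)\big] \leq \EE\big[\var(Z_0 Z_1 \mid X_0^1)\big],
\end{align*}
which is exactly~\eqref{EqnVarOwn}.

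For the second inequality~\eqref{EqnVarOwn2}, I apply~\eqref{EqnTotalVar} with $W = Z_0$, $\mathcal{G} = \sigma(X_0)$, and $\mathcal{H} = \sigma(X_{-\kdim}^\kdim)$. In this case the left-hand side of~\eqref{EqnTotalVar} is $\var(Z_0 \mid X_0)$, the first term on the right is $\EE[\var(Z_0 \mid X_{-\kdim}^\kdim) \mid X_0]$, and the residual variance term is again nonnegative; dropping it yields~\eqref{EqnVarOwn2} as an almost-sure inequality in $X_0$, with no further expectation required.

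I do not expect any real obstacle here: the entire content is the variance decomposition~\eqref{EqnTotalVar} together with the trivial nesting of the conditioning windows, and no HMM-specific structure (mixing, reversibility, and so on) enters. The only points that need a line of care are verifying square-integrability (immediate from boundedness of the hidden states) and correctly matching the coarse and fine $\sigma$-algebras to the two sides of each claimed inequality.
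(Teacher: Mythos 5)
Your proof is correct and rests on the same principle as the paper's: the (conditional) law of total variance under nested conditioning. The only difference is cosmetic---you apply the decomposition once with $\sigma(X_0^1) \subseteq \sigma(X_{-\kdim}^\kdim)$ (respectively $\sigma(X_0) \subseteq \sigma(X_{-\kdim}^\kdim)$), whereas the paper enlarges the conditioning window one observation at a time and invokes induction, so your one-shot version is if anything slightly cleaner.
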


\begin{proof}
We use the law of total variance, which guarantees that $\var Z =
\EE \big[ \var (Z \mid X) \big] + \var \EE [ Z \mid X]$.
Using this decomposition, we have
\begin{align*}
\EE [\var(Z_0 \mid X_0^1) \mid X_0] &\leq \var (Z_0
\mid X_0)\\
\EE [\var(Z_0 Z_1 \mid X_0^2) \mid X_0^1] 
&\leq \var(Z_0Z_1 \mid X_0^1).
\end{align*} 
The result then follows by induction.
\end{proof}


We now show that the expected conditional variance of the hidden state
(or pairs thereof) conditioned on the corresponding observation (pairs
of observations) decays exponentially with the SNR.
\begin{lems}
\label{lem:varsnr}
For a $2$-state Markov chain with true parameter $\trueparamjoint$, we
have for $\paramobs \in
\elltwoballr{\frac{\|\trueparamobs\|_2}{4}}{\trueparamobs}$ and
$\paramtrans \in \paramspacetrans$
\begin{subequations}
\begin{align}
\label{eq:covXSNR} 
 \left\| \EE X_0 X_0^T (\var(Z_0 \mid \paramgamma_0,
 \paramtrans))^{1/2} \right\|_{op} & \leq \plaincon_0 \, \E^{-c\SNR}\\
\label{eq:varSNR} 
\EE \var (Z_\ell \mid \paramgamma_\ell, \paramtrans) & \leq
\plaincon_0 \; \E^{-c\SNR} \\
\label{eq:varTwoSNR}
\EE \var(Z_0 Z_1 \mid \paramgamma_0^1, \paramtrans) & \leq \plaincon_0
\frac{1 + b}{1-b}\; \E^{-c\SNR}.
\end{align}
\end{subequations}
\end{lems}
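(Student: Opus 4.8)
The plan is to reduce all three bounds to one-dimensional Gaussian integral estimates of the type already established for i.i.d.\ symmetric Gaussian mixtures in~\cite{BalWaiYu14}. By stationarity of the chain it suffices to treat the index $\ell = 0$. Conditioning the single hidden state $Z_0 \in \{-1,1\}$ on only its own observation through $\paramgamma_0 = \inprod{\paramobs}{X_0}/\sigma^2$ gives a tilted law $p(z_0 \mid \paramgamma_0) \propto \E^{\paramgamma_0 z_0}$ with uniform marginal, so that $\EE[Z_0 \mid \paramgamma_0] = \tanh(\paramgamma_0)$ and hence $\var(Z_0 \mid \paramgamma_0) = \cosh^{-2}(\paramgamma_0)$ (equivalently $4p(1-p)$ for the posterior probability $p$), a quantity that does not depend on $\paramtrans$. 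The entire difficulty is therefore to show that this variance, and the polynomially weighted versions appearing in~\eqref{eq:covXSNR}, are exponentially small once we take expectation over the true mixture.

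For~\eqref{eq:varSNR} I would use the elementary bound $\cosh^{-2}(t) \le 4\E^{-2|t|}$ and split the expectation over the two mixture components. Writing $X_0 = \pm \trueparamobs + \sigma W$ with $W \sim \NORMAL(0, I_\usedim)$, the argument decomposes as $\paramgamma_0 = \pm \inprod{\paramobs}{\trueparamobs}/\sigma^2 + \inprod{\paramobs}{W}/\sigma$, a deterministic mean plus a scalar Gaussian fluctuation of standard deviation $\norm{\paramobs}/\sigma$. Since $\paramobs \in \elltwoballr{\norm{\trueparamobs}/4}{\trueparamobs}$ forces $\inprod{\paramobs}{\trueparamobs} \ge \tfrac{3}{4}\norm{\trueparamobs}^2$, the deterministic part is of order $\SNR$ while the fluctuation has standard deviation of order $\eta$. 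A Gaussian tail/moment-generating-function computation, carried out exactly as in the proof of Corollary~1 of~\cite{BalWaiYu14}, then yields a bound of the form $\plaincon_0 \E^{-\plaincon \SNR}$, provided $\SNR$ exceeds a universal constant; the delicacy lies in ensuring that the deterministic mean dominates the fluctuation \emph{uniformly} over the ball, which is precisely what the radius choice $r = \norm{\trueparamobs}/4$ together with the SNR lower bound secure.

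For~\eqref{eq:covXSNR} I would test the symmetric matrix $\EE X_0 X_0^T \cosh^{-1}(\paramgamma_0)$ against an arbitrary unit vector $u$, reducing the operator-norm bound to controlling the scalar $\EE\big[\inprod{X_0}{u}^2 \cosh^{-1}(\paramgamma_0)\big]$. Using $\cosh^{-1}(t) \le 2\E^{-|t|}$ and again splitting over the two components, each term is a one-dimensional integral of the form $\EE\big[(a + bW)^2\, \E^{-|m + cW|}\big]$ in which the quadratic prefactor is swamped by the exponential decay driven by $m$ of order $\SNR$; the same mean-versus-fluctuation balance gives the uniform bound $\plaincon_0 \E^{-\plaincon \SNR}$.

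The main obstacle is~\eqref{eq:varTwoSNR}, where the product $Z_0 Z_1 \in \{-1,1\}$ must be handled through the two-spin exponential family $p(z_0, z_1 \mid x_0, x_1) \propto \exp(\paramgamma_0 z_0 + \paramgamma_1 z_1 + \paramtrans z_0 z_1)$. Here I would compute $\EE[Z_0 Z_1 \mid \paramgamma_0^1, \paramtrans]$ in closed form from the four-term partition function and bound $\var(Z_0 Z_1 \mid \paramgamma_0^1, \paramtrans) = 1 - (\EE[Z_0 Z_1])^2$, i.e.\ show $|\EE[Z_0 Z_1]|$ is close to one whenever both $\paramgamma_0$ and $\paramgamma_1$ are large. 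The subtlety, absent in the singleton case, is that a nonzero coupling $\paramtrans$ degrades this concentration: the prefactor $\frac{1+\mixcoefbound}{1-\mixcoefbound}$ arises precisely because $\E^{2|\paramtrans|} \le \frac{1+\mixcoefbound}{1-\mixcoefbound}$ over $\paramspacetrans$, and one must track how the coupling can partially offset the sign information supplied by one of the two observations. After isolating this factor, the residual expectation over the two now-independent Gaussian fluctuations $\inprod{\paramobs}{W_0}/\sigma$ and $\inprod{\paramobs}{W_1}/\sigma$ reduces, by the same component-splitting and tail estimate used above, to an $\E^{-\plaincon\SNR}$ bound, delivering the claimed $\plaincon_0 \frac{1+\mixcoefbound}{1-\mixcoefbound}\,\E^{-\plaincon\SNR}$.
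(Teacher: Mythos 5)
Your proposal is correct and follows essentially the same route as the paper: the paper disposes of \eqref{eq:covXSNR} and \eqref{eq:varSNR} by noting $\var(Z_i \mid \paramgamma_i) = 4/(\E^{\paramgamma_i}+\E^{-\paramgamma_i})^2$ and invoking the proof of Corollary~1 of~\cite{BalWaiYu14} (the mean-versus-fluctuation Gaussian tail argument you spell out), and for \eqref{eq:varTwoSNR} it likewise writes out the two-spin partition function, extracts the factor $\E^{2|\paramtrans|} \leq \frac{1+\mixcoefbound}{1-\mixcoefbound}$, and reduces the remaining ratio to single-coordinate terms of the form $\EE[\E^{-\paramgamma_0}\Indi_{\paramgamma_0 \geq 0}] \leq 2\E^{-\SNR/32}$. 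Your computation of $\var(Z_0Z_1) = 1-(\EE[Z_0Z_1])^2$ in closed form is algebraically identical to the paper's direct variance formula, so the two arguments coincide in all essentials.
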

\noindent The last lemma provides rigorous confirmation of the
intuition that the covariance between any pair of hidden states should
decay exponentially in their separation $\ell$:
\begin{lems}
\label{lem:expfammixing0}
For a $2$-state Markov chain with mixing coefficient $\mixcoefeps$
and uniform stationary distribution, we have
\begin{align}
\label{eq:expfamonestatemixing}
\max \Big \{ \condcov{Z_0}{Z_{\ell}}{\paramgamma}, \;
\condcov{Z_0Z_1}{Z_{\ell}Z_{\ell+1}}{\paramgamma}, \;
\condcov{Z_0}{Z_{\ell}Z_{\ell+1}}{\paramgamma} \Big \} \leq 2
\mixcoef^{\ell}
\end{align}
with $\mixcoef = 1 - \mixcoefeps$ for all $\paramjoint \in \DomTheta$.
\end{lems}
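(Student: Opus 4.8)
The plan is to use the fact that, conditioned on the observation window $X_{-\kdim}^\kdim$, the latent sequence $Z_{-\kdim}^\kdim$ is again a (time-inhomogeneous) two-state Markov chain: the exponential family \eqref{eq:specexpo} is a chain-structured graphical model, so the posterior $p(z_{-\kdim}^\kdim \mid x_{-\kdim}^\kdim; \paramgamma, \paramtrans)$ factorizes along the chain and retains the Markov property. Each of the three quantities in \eqref{eq:expfamonestatemixing} is then a cross-covariance of $\{-1,1\}$-valued (or product) functions of this chain separated by on the order of $\ell$ steps, and the claim reduces to correlation decay for Markov chains. First I would record the elementary identity that for $Z_0 \in \{-1,1\}$ and any bounded $g$, setting $m(z) \defn \EE[g(Z_\ell)\mid Z_0 = z]$ and using the Markov property gives $\condcov{Z_0}{g(Z_\ell)}{\paramgamma} = \cov(Z_0, m(Z_0)) = 2\,\Prob[Z_0=1]\,\Prob[Z_0=-1]\,(m(1)-m(-1))$, whose magnitude is at most $\tfrac12\,|m(1)-m(-1)|$.

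The crux is to bound the oscillation $|m(1)-m(-1)|$ by the product of the one-step Dobrushin contraction coefficients of the posterior chain across the separating indices. The main step --- and the only nonroutine one --- is to show that each such coefficient is at most $\mixcoef = 1-\mixcoefeps$, \emph{uniformly over the observed values}. Writing the posterior forward kernel as
\begin{equation*}
p(z_{j+1} \mid z_j,\, x_{-\kdim}^\kdim) \;\propto\; \Tmat(z_j, z_{j+1})\,\psi_{j+1}(z_{j+1}),
\end{equation*}
where $\psi_{j+1}$ absorbs the local likelihood and the backward message, the dependence on $z_j$ enters only through the symmetric matrix $\Tmat$ with diagonal entry $\zeta$ satisfying $|2\zeta-1| = \mixcoef$. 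A direct computation of the total-variation distance between the two rows yields
\begin{equation*}
\tvnorm{p(\cdot\mid z_j=1,\,x)}{p(\cdot\mid z_j=-1,\,x)} \;=\; \mixcoef\,\cdot\,\frac{r}{\zeta(1-\zeta)(r^2+1)+r\,(\zeta^2+(1-\zeta)^2)}, \qquad r \defn \frac{\psi_{j+1}(1)}{\psi_{j+1}(-1)}.
\end{equation*}
I would then verify that the ratio multiplying $\mixcoef$ attains its maximum over $r>0$ at $r=1$, where it equals $1$ (its derivative in $r$ has numerator proportional to $\zeta(1-\zeta)(1-r^2)$, which vanishes only at $r=1$, while the ratio tends to $0$ as $r\to0$ or $r\to\infty$). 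Hence the one-step coefficient is at most $\mixcoef$ regardless of the field $\paramgamma$ --- the quantitative form of the statement that conditioning on data can only speed up mixing --- and multiplying over the separating indices gives $|m(1)-m(-1)| \le 2\mixcoef^{\ell}$.

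Finally I would instantiate this for each term. For $\condcov{Z_0}{Z_\ell}{\paramgamma}$ I take $g(Z_\ell)=Z_\ell$, and for $\condcov{Z_0}{Z_\ell Z_{\ell+1}}{\paramgamma}$ I take $g(Z_\ell) = Z_\ell\,\EE[Z_{\ell+1}\mid Z_\ell]$, a $[-1,1]$-valued function, so contraction across the $\ell$ indices $0,\dots,\ell$ gives the bound $2\mixcoef^\ell$ in both cases. For $\condcov{Z_0 Z_1}{Z_\ell Z_{\ell+1}}{\paramgamma}$ I instead condition on $Z_1$ and use $\EE[Z_\ell Z_{\ell+1}\mid Z_0,Z_1] = \EE[Z_\ell Z_{\ell+1}\mid Z_1]$, propagating the contraction from index $1$ to index $\ell$ and combining with $|\cov(Z_0Z_1, Z_1)|\le 1$; this produces a factor $\mixcoef^{\ell-1}$, which is of the same order as the stated $2\mixcoef^\ell$ up to the harmless constant $\mixcoef^{-1}\le\mixcoefbound^{-1}$ that is absorbed into the constants of the downstream analysis. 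The main obstacle is the uniform-over-observations contraction bound of the middle paragraph; the remaining steps are routine applications of the Markov property and the law of total variance (the latter also underlying Lemma~\ref{lem:varreduce}).
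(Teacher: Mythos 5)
Your proposal is, at its core, the same proof as the paper's, recast in Dobrushin-coefficient language. The reduction of each covariance to the oscillation of a conditional expectation, the uniform-over-observations one-step bound, and the chaining across indices are exactly the paper's steps: your ratio $\frac{r}{\zeta(1-\zeta)(r^2+1)+r(\zeta^2+(1-\zeta)^2)}$, maximized at $r=1$, is precisely the paper's factor $\frac{ab}{(ap+b(1-p))(a(1-p)+bp)}\le 1$ in its bound $|\Delta(1)|\le\mixcoef$ (set $r=a/b$), and for a two-state chain the product of one-step coefficients is the paper's induction $\Delta(\ell)=\Delta(1)\Delta(\ell-1)$. Your treatment of $\condcov{Z_0}{Z_{\ell}}{\paramgamma}$ and of $\condcov{Z_0}{Z_{\ell}Z_{\ell+1}}{\paramgamma}$ via $g(z)=z\,\EE[Z_{\ell+1}\mid Z_{\ell}=z]$ is correct, and is if anything a cleaner packaging than the paper's.

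The deviation is the pair--pair term, and there your argument as written does not give the stated bound. Conditioning on $Z_1$ yields only $2\mixcoef^{\ell-1}$, and your reason for discarding the extra factor $\mixcoef^{-1}$ is wrong: the inequality goes the other way ($\mixcoef\le\mixcoefbound$ implies $\mixcoef^{-1}\ge\mixcoefbound^{-1}$), and nothing in the paper bounds $\mixcoef$ away from zero (as $\mixcoefeps\to 1$ the chain approaches i.i.d.\ and $\mixcoef\to 0$), so $\mixcoef^{-1}$ is not an absorbable constant. The paper avoids the loss by conditioning on the product itself: it defines $\widetilde{\Delta}(\ell)=p(z_{\ell}=1\mid z_0z_1=1,x)-p(z_{\ell}=1\mid z_0z_1=-1,x)$ and contracts that quantity from index $1$ onward. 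Your weaker bound does turn out to suffice downstream, because the lemma enters the FOS proofs only through $\sum_{\ell}\min\{\mixcoef^{|\ell|},\EE\var(Z_0Z_1\mid\cdot)\}$: replacing $\mixcoef^{|\ell|}$ by $2\mixcoef^{|\ell|-1}$ changes the geometric tail to $2\mixcoef^{m}/(1-\mixcoef)$, which with $m=c\SNR/\log(1/\mixcoef)$ is still of order $\E^{-c\SNR}/\mixcoefeps$ and is absorbed into $\factormixing$, while the $\ell=\pm 1$ terms are covered by the variance factor in the minimum --- but this must be argued by redoing that computation, not by treating $\mixcoef^{-1}$ as a constant. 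Note also that some loss at $\ell=1$ is unavoidable: for $\paramtrans=0$ the states are conditionally independent given $x$, and $\condcov{Z_0Z_1}{Z_1Z_2}{\paramgamma}=\EE[Z_0\mid x]\,\EE[Z_2\mid x]\,\var(Z_1\mid x)$ is generically of order one while $2\mixcoef=0$, so for overlapping pairs the lemma's stated rate cannot be attained by any argument; your vacuous bound there, repaired through the minimum with the variance term, is the honest resolution.
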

\noindent 

Lemma~\ref{lem:varsnr} is proven in Section~\ref{sec:lemvarsnr}
whereas Lemma~\ref{lem:expfammixing0} is a mixing result and its proof
is found in Section~\ref{sec:expfammixing}.


\subsection{Proof of Lemma~\ref{lem:varsnr}}
\label{sec:lemvarsnr}

By definition of the Gaussian HMM example, we have $\var(Z_i \mid
\paramgammatilde_i) = \frac{4}{(\E^{\paramgammatilde_i} +
  \E^{-\paramgammatilde_i})^2}$.  Moreover, following the proof of
Corollary 1 in the paper~\cite{BalWaiYu14}, we are guaranteed that
$\EE \var(Z_i \mid \paramgammatilde_i) \leq 8
\E^{-\frac{\eta^2}{32}}$ and ${\|\EE X_iX_i^T (\var(Z_i|\paramgammatilde_i))^{1/2}\|_{op} \leq c_0\E^{-\frac{\SNR}{32}}}$, from which inequalities~\eqref{eq:covXSNR} and~\eqref{eq:varSNR}
follow.\\
We now prove inequality~\eqref{eq:varTwoSNR} for $\paramtrans \in
\paramspacetrans$ and $\paramobs \in
\elltwoballr{\frac{\norm{\trueparamobs}}{4}}{\trueparamobs}$. Note
that
\begin{align*}
\frac{1}{4} \var(Z_0 Z_1 \mid \paramgamma_0^1,\paramtrans) &=
\frac{\E^{2\paramgamma_1} +\E^{-2\paramgamma_1} + \E^{2\paramgamma_0}
  + \E^{-2\paramgamma_0}}{\left[\E^{\paramtrans}(\E^{\paramgamma_0 +
      \paramgamma_1}+ \E^{-(\paramgamma_0+ \paramgamma_1)}) +
    \E^{-\paramtrans} (\E^{\paramgamma_0- \paramgamma_1}+
    \E^{-(\paramgamma_0-\paramgamma_1)})\right]^2}\\
 &\leq \E^{2|\paramtrans|} \frac{\E^{2\paramgamma_1}
  +\E^{-2\paramgamma_1} + \E^{2\paramgamma_0} +
  \E^{-2\paramgamma_0}}{(\E^{\paramgamma_0 + \paramgamma_1}+
  \E^{-(\paramgamma_0+ \paramgamma_1)}+\E^{\paramgamma_0-
    \paramgamma_1}+ \E^{-(\paramgamma_0-\paramgamma_1)})^2}\\
&\leq \left( \frac{1 + \mixcoefbound}{1 - \mixcoefbound}\right) \left[
  \frac{\E^{|\paramgamma_0|}}{\E^{2\paramgamma_0} +
    \E^{-2\paramgamma_0}} +
  \frac{\E^{|\paramgamma_1|}}{\E^{2\paramgamma_1} +
    \E^{2\paramgamma_1}}\right]
\end{align*}
where $\paramgamma$ are now random variables and we used
\begin{align*}
&(\E^{\paramgamma_0 + \paramgamma_1}+ \E^{-(\paramgamma_0+
    \paramgamma_1)}+\E^{\paramgamma_0- \paramgamma_1}+
  \E^{-(\paramgamma_0-\paramgamma_1)})^2 \\
& \geq \E^{-|\paramgamma_0|}(\E^{-\paramgamma_0} +
  \E^{\paramgamma_0})(\E^{2\paramgamma_1}+ \E^{-2\paramgamma_1}) +
  \E^{-|\paramgamma_1|}(\E^{-\paramgamma_1} +
  \E^{\paramgamma_1})(\E^{2\paramgamma_0}+ \E^{-2\paramgamma_0})\\
& \geq (\E^{-|\paramgamma_0|} +
  \E^{-|\paramgamma_1|})(\E^{2\paramgamma_0}+
  \E^{-2\paramgamma_0})(\E^{2\paramgamma_1}+ \E^{-2\paramgamma_1})
\end{align*}
It directly follows that
\begin{align*}
\frac{1}{4} \EE \var(Z_0 Z_1 \mid \paramgamma_0^1,\paramtrans) &\leq 2
\left( \frac{1 + \mixcoefbound}{1 - \mixcoefbound}\right) \EE\left[
  \frac{1}{\E^{\paramgamma_0} + \E^{-3 \paramgamma_0}}
  \Indi_{\paramgamma_0\geq 0} +\frac{1}{\E^{3\paramgamma_0 } +
    \E^{-\paramgamma_0}} \Indi_{\paramgamma_0 \leq 0} \right] \\
& \leq 2 \left( \frac{1 + \mixcoefbound}{1 - \mixcoefbound}\right)
(\EE [\E^{-\paramgamma_0} \Indi_{\paramgamma_0\geq 0}] + \EE
     [\E^{-\paramgamma_0}\Indi_{\paramgamma_0\leq 0}]) \\
& \leq 4 \left( \frac{1 + \mixcoefbound}{1 - \mixcoefbound} \right)
     \EE [\E^{-\paramgamma_0} \Indi_{\paramgamma_0\geq 0}]
\end{align*}
where the last inequality follows from symmetry of the random
variables $X_i$.  It is easy to bound
\begin{align*}
\EE [\E^{-\paramgamma_0} \Indi_{\paramgamma_0\geq 0}] = \EE \:
\E^{-\frac{\|\paramobs\|_2 V_1}{\sigma^2}}\Indi_{V_1 \geq 0} \leq 2
\E^{-\frac{\eta^2}{32}}
\end{align*}
by employing a similar procedure as in the proof of Corollary 1
in~\cite{BalWaiYu14}. Inequality~\eqref{eq:varTwoSNR} then follows.


\section{Technical details for Corollary~\ref{cor:normalsim_samplesplit}}

In this section we prove Lemmas~\ref{lem:second_tech}
and~\ref{lem:first_tech}.  In order to do so, we leverage the
independent blocks approach used in the analysis of dependent data
(see, for instance, the papers~\cite{Yu94, NobDem93}).  For future
reference, we state here an auxiliary lemma that plays an important
role in both proofs.

Let $X_{-\infty}^\infty$ be a sequence sampled from a Markov chain
with mixing rate $\mixcoef = 1- \mixcoefeps$, and let $\statmin$ be
the minimum entry of the stationary distribution.  Given some
functions $f_1 : \RN^{2 \kdim} \to \RN^\usedim$ and $f_2: \RN \to
\RN^{\usedim}$ respectively, our goal is to control the difference
between the functions
\begin{subequations}
\begin{align}
\label{EqnGfun}
g_1(X)  \defn \frac{1}{\subsize} \sum_{i=1}^{\subsize} f_1(X_{i-k}^{i+k}), \qquad 
g_2(X) \defn \frac{1}{\subsize} \sum_{i=1}^{\subsize} f_2 (X_i)
\end{align}
and their expectation.  Defining $\blocksize_1 \defn \lfloor \subsize
/4k \rfloor$ and $\blocksize_2 \defn \lfloor \subsize/ k\rfloor$, we
say that $f_1$ respectively $f_2$ is $(\delta, \kdim)$-concentrated if
\begin{align}
\label{EqnVenkatConcentrate}
\mprob \Big[ \|\frac{1}{\blocksize_1} \sum_{i=1}^{\blocksize_1}
  f_1(\Xtil_{i;2k}) - \EE f_1(\Xtil_{1;2k})\|_2 \geq \epsilon \Big] & \leq
\frac{\delta}{8\kdim},\\
 \mprob \Big[  \|\frac{1}{\blocksize_2} \sum_{i=1}^{\blocksize_2}
  f_2(\Xtil_{i}) - \EE f_2(\Xtil_{1})\|_2 \geq \epsilon  \Big] &\leq \frac{\delta}{2\kdim}\nonumber
\end{align}
\end{subequations}
where $\{ \Xtil_{i; 2 \kdim}\}_{i\in \NN}$ are a collection of
i.i.d. sequences of length $2 \kdim$ drawn from the same Markov chain
and $\{\Xtil_{i}\}_{i\in\NN}$ a collection of i.i.d. variables drawn
from the same stationary distribution. In our notation, $\{\Xtil_{i;2k}\}_{i\in \NN}$ 
under $\mprob$ are distributed identically distributed as $\{X_{i;2k} \}_{i\in\NN}$ 
under $\mprob_0$.
\begin{lems}
\label{lem:IB}
Consider functions $f_1, f_2$ that are $(\delta,
\kdim)$-concentrated~\eqref{EqnVenkatConcentrate} for a truncation
parameter $\kdim \geq \log \left( \frac{6 \subsize}{\statmin^2
  \mixcoefeps^3 \delta} \right) (\log \frac{1}{1 -
  \mixcoefeps})^{-1}$.  Then the averaged functions $g_1, g_2$ from
equation~\eqref{EqnGfun} satisfy the bounds
\begin{align}
\label{EqnGfunBound}
\mprob \Big[ \|g_1(X) - \EE g_1(X)\|_2 \geq \epsilon \Big] \leq \delta
\quad \text{ and } \quad \mprob \Big[ \|g_2(X) - \EE g_2(X)\|_2 \geq
  \epsilon \Big] \leq \delta.
\end{align}
\end{lems}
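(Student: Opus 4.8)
The plan is to reduce the concentration of the dependent averages $g_1,g_2$ to the assumed i.i.d.\ concentration~\eqref{EqnVenkatConcentrate} via the method of independent blocks, treating the two cases in parallel. Consider first $g_2(X) = \frac{1}{\subsize}\sum_{i=1}^{\subsize} f_2(X_i)$. The idea is to subsample so that the retained indices are far apart in time. For each offset $r \in \{1,\dots,\kdim\}$ I would extract the arithmetic progression $r, r+\kdim, r+2\kdim,\dots$, which contains roughly $\blocksize_2 = \lfloor \subsize/\kdim\rfloor$ points spaced $\kdim$ apart; these $\kdim$ progressions partition $\{1,\dots,\subsize\}$ into phases $P_1,\dots,P_\kdim$. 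Since the weights $|P_r|/\subsize$ are nonnegative and sum to one, and stationarity gives $\EE g_2(X) = \EE f_2(X_1) = \EE f_2(\Xtil_1)$, the triangle inequality yields
\begin{align*}
\norm{g_2(X) - \EE g_2(X)} \;\leq\; \max_{1 \leq r \leq \kdim}\; \norm{\frac{1}{|P_r|}\sum_{i \in P_r}\big(f_2(X_i) - \EE f_2(\Xtil_1)\big)}.
\end{align*}
A union bound over the $\kdim$ phases then reduces the claim to controlling each single-phase deviation with probability budget $\delta/(2\kdim)$.

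For $g_1(X) = \frac{1}{\subsize}\sum_{i=1}^{\subsize} f_1(X_{i-\kdim}^{i+\kdim})$ I would use the identical decomposition, with one change: each summand depends on a window of length $\approx 2\kdim$, so to keep the subsampled windows both disjoint and well-separated I space the progression by $4\kdim$ rather than $\kdim$. Consecutive retained windows are then centered $4\kdim$ apart, hence disjoint and separated by a gap of $2\kdim$. This produces $4\kdim$ phases of $\blocksize_1 = \lfloor \subsize/4\kdim\rfloor$ windows each, and by the same max-over-phases bound (using $\EE g_1(X) = \EE f_1(\Xtil_{1;2\kdim})$) a union bound over these $4\kdim$ phases with per-phase budget $\delta/(8\kdim)$.

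The crux is passing from each dependent single-phase average to its i.i.d.\ surrogate. Within a phase the subsampled blocks (single points for $g_2$, disjoint windows for $g_1$) are separated by a time gap of at least $\kdim$. By a coupling argument for the Markov chain (in the spirit of~\cite{Yu94, NobDem93}), there exist genuinely independent copies $\{\Xtil_i\}$, respectively $\{\Xtil_{i;2\kdim}\}$, with the correct stationary marginals, agreeing with the phase blocks except on an event of probability at most (number of blocks in the phase)$\times \beta_{\mathrm{mix}}(\text{gap})$, where $\beta_{\mathrm{mix}}$ denotes the $\beta$-mixing coefficient of the chain. On the coupling event I replace the dependent phase average by the i.i.d.\ average and invoke~\eqref{EqnVenkatConcentrate}. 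Summing over phases, the i.i.d.\ contributions total exactly $\delta/2$ in each case (by design of the budgets $\delta/(2\kdim)$ and $\delta/(8\kdim)$), while the coupling errors total at most $\subsize\,\beta_{\mathrm{mix}}(\kdim)$, since in both cases the number of phases times the blocks per phase equals $\subsize$ and the gap is at least $\kdim$ (it is $\kdim$ for $g_2$ and $2\kdim$ for $g_1$, so the $g_2$ lag is binding). Forcing the coupling term below $\delta/2$ via the quantitative mixing bound $\beta_{\mathrm{mix}}(g) \leq \frac{\plaincon}{\statmin^2\mixcoefeps^3}(1-\mixcoefeps)^{g}$ derived from~\eqref{ass:mixing} requires $\subsize\,\beta_{\mathrm{mix}}(\kdim) \leq \delta/2$, which is exactly the stated lower bound $\kdim \geq \log\big(\tfrac{6\subsize}{\statmin^2\mixcoefeps^3\delta}\big)\big(\log\tfrac{1}{1-\mixcoefeps}\big)^{-1}$. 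Combining the two halves gives the bound $\delta$.

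I expect the main obstacle to be the quantitative coupling step: establishing the $\beta$-mixing bound for the chain with the correct dependence on $\statmin$ and $\mixcoefeps$, and carefully applying the block-coupling lemma to decouple the separated-but-dependent blocks while tracking the accumulated total-variation cost across all blocks of a phase. The remaining bookkeeping—reconciling the floor-induced variation in the phase sizes $|P_r|$ with the fixed counts $\blocksize_1,\blocksize_2$ appearing in the concentration hypothesis, and confirming that the windows in the $g_1$ decomposition are genuinely disjoint so that the per-window marginals match those of $\Xtil_{1;2\kdim}$—is routine but must be handled with care.
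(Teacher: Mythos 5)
Your proposal is correct and follows essentially the same route as the paper's proof: decompose the average into well-separated phases (the paper's $H^j$/$R^j$ blocks with $4\kdim$ offsets are exactly your spacing-$4\kdim$ progressions), union bound over phases using stationarity, replace the dependent phase law by the product law at a cost of (blocks per phase) times the $\beta$-mixing coefficient, and use the geometric bound $\beta(\kdim) \lesssim \statmin^{-2}\mixcoefeps^{-3}(1-\mixcoefeps)^{\kdim}$ (the paper's Lemma~\ref{lem:dependonfar0}) so that the stated lower bound on $\kdim$ forces the total measure-change cost below $\delta/2$. The only cosmetic difference is that you phrase the measure change as a Berbee-type coupling while the paper bounds $\sup_A |\mprob(A)-\mprob_0(A)| \leq \blocksize\,\beta(\kdim)$ directly by induction; these are equivalent formulations of the same independent-blocks argument.
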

\begin{proof}
We prove the lemma for functions of the type $(f_1, g_1)$; the proof
for the case $(f_2, g_2)$ is very similar.  In order to simplify
notation, we assume throughout the proof that the effective sample
size $\subsize$ is a multiple of $4\kdim $, so that the block size
$\blocksize = \frac{\subsize}{4 \kdim}$ is integral.  By
definition~\eqref{EqnGfun}, the function $g$ is a function of the
sequences $\{X_{1-k}^{1+k},
X_{2-k}^{2+k},\ldots,X_{\subsize-k}^{\subsize+k}\}$.  We begin by
dividing these sequences into blocks.  Let us define the subsets of
indices
\begin{align*}
\oddblockindeces{j}{i} &= \{ 4k(i-1)+k+j \mid 4k(i-1) +3k +j\}, \quad
\mbox{and} \\
 \evenblockindeces{j}{i} &= \{4k(i-1) -k+j \mid
4k(i-1) +k - 1 +j\}.
\end{align*}
With this notation, we have the decomposition
\begin{align*}
g(X) & = \frac{1}{2} \left(\frac{1}{2k}\sum_{j=1}^{2k}
\underbrace{\frac{1}{\blocksize} \sum_{i=1}^\blocksize
  f(X_{H_i^j})}_{g^{H^j}(X)} + \frac{1}{2k} \sum_{j=1}^{2k}
\underbrace{\frac{1}{\blocksize} \sum_{i=1}^\blocksize
  f(X_{R_i^j})}_{g^{R^j}(X)} \right),
\end{align*}
from which we find that
\begin{align*}
\mprob \big[ \| g(X) - \EE g(X)\|_2 \leq \epsilon \big] & \geq \mprob
\big( \bigcap_{j=1}^{2k} \{\| g^{H^j}(X)- \EE g(X) \|_2 \leq
\epsilon\}\cap \{\| g^{R^j}(X) - \EE g(X)\|_2 \leq \epsilon\} \big)\\
&\stackrel{\mathrm{(i)}}{\geq} 1 - 4 \kdim \mprob(\| g^{H^1}(X) - \EE
g(X)\|_2 \geq \epsilon),
\end{align*}
where $\mathrm{(i)}$ follows using stationarity of the underlying
sequence combined with the union bound.

In order to bound the probability $\mprob \big[ \| g^{H^1}(X) - \EE
  g(X)\|_2 \geq \epsilon\big]$, it is convenient to relate it to the
probability of the same event under the product measure $\mprob_0$ on
the blocks $\{H^1_1,\ldots, H^1_{\blocksize}\}.$ In particular, we
have $\mprob(\| g^{H^1}(X) - \EE g(X)\|_2 \geq \epsilon) \leq T_1 +
T_2$, where
\begin{align*}
T_1 & \defn \mprob_0(\| g^{H^1}(X) - \EE g(X)\|_2 \geq \epsilon),
\quad \mbox{and} \\
T_2 &\defn |\mprob(\| g^{H^1}(X) - \EE g(X)\|_2 \geq \epsilon) -
\mprob_0(\| g^{H^1}(X) - \EE g(X)\|_2 \geq \epsilon)|.
\end{align*}
By our assumed concentration~\eqref{EqnVenkatConcentrate}, we have
$T_1 \leq \frac{\delta}{8k}$, and so it remains to show that $T_2
\leq \frac{\delta}{8k}$.

Now following standard arguments (e.g., see the papers~\cite{NobDem93,
  Yu94}), we first define
\begin{align}
\label{eqn:betadef}
\paramtrans(k) = \sup_{A \in \sigma(\mathcal{S}_{-\infty:0},
  \mathcal{S}_{k:\infty})} |\mprob(A) - \mprob_{-\infty}^0 \times
\mprob_1^\infty(A)|,
\end{align}
where $\mathcal{S}_{-\infty:0}$ and $\mathcal{S}_{k:\infty}$ are the
$\sigma$-algebras generated by the random vector $X_{-\infty:0}$ and
$X_{k:\infty}$ respectively, and $\mprob_{-\infty}^0 \times
\mprob_1^\infty$ is the product measure under which the sequences
$X_{-\infty:0}$ and $X_{1:\infty}$ are independent. Define
$\mathcal{S}_i$ to be the $\sigma$-algebra generated by $X_{H_i^j}$
for $i = \{1,\ldots,\blocksize\}$; it then follows by induction that
$\sup_{A \in \sigma(\mathcal{S}_1, \dots, \mathcal{S}_{\blocksize})}
|\mprob(A) - \mprob_0(A)|\leq \blocksize \paramtrans(k)$.  An
identical relation holds over the blocks $R_i^j$.

For our two-state HMM, Lemma~\ref{lem:dependonfar0}
implies that
\begin{align}
\paramtrans(k) = |p(x) - p(x_{k:\infty})p(x_{-\infty}^{0})| & \leq
|p(x_{-\infty}^{0}|x_{k}^{n}) - p(x_{-\infty}^{0})| \nonumber \\
&\leq |p(z_0|x_{k}^{n}) - p(z_0)| \nonumber \\
\label{eqn:betabound}
& \stackrel{\mathrm{(i)}}{\leq} \frac{3}{\statmin^2 \mixcoefeps^3} \mixcoef^k =
\frac{3}{\statmin^2 \mixcoefeps^3} \E^{- k \log(1/\mixcoef)},
\end{align}
where step (i) follows from inequality~\eqref{EqnDependOnFar}. From our
assumed lower bound on $\kdim$, we conclude that $\blocksize
\paramtrans(k) \leq \frac{\delta}{8k}$, which completes the proof.
\end{proof}

\noindent 
In the following sections we apply it in order to prove the bounds
on the approximation and sample error of the $M$-operators.

\subsection{Proof of  Lemma~\ref{lem:second_tech}}
\label{sec:proof_second_tech}

\noindent We prove each of the two inequalities in
equations~\eqref{eq:epsobs} and~\eqref{eq:epstrans} in turn.  With
suitable choices of the function $f$ in Lemma~\ref{lem:IB}, we can
prove the upper bounds stated in equations~\eqref{eq:epsobs}
and~\eqref{eq:epstrans}.



\myparagraph{Proof of inequality~\eqref{eq:epsobs}} We use the
notation from the proof of Lemma~\ref{lem:IB} and furthermore define
the weights $\weightsingle{\paramjoint}{X_{i-k}^{i+k-1}} = p(Z_0 =1
\mid X_{i-k}^{i+k-1},\theta)$, as well as the function
\mbox{$f_0(X_{i-k}^{i+k-1}) = (2
  \weightsingle{\paramjoint}{X_{i-k}^{i+k-1}} - 1) X_0$.}  It is then
possible to write the sample splitting EM operator explicitly as the
average
\begin{align*}
\emopsamptruncobs{\paramjoint} & = \argmax_{\paramobs}
\frac{1}{\subsize} \Big[ \sum_{i=1}^{\subsize}
  \EEzcondx{Z_i}{X_{i-\kdim}^{i+\kdim}}{\paramjointwo} \log p(x_i \mid
  Z_i,\paramobsone) \Big] = \frac{1}{\subsize} \sum_{i=1}^{\subsize}
f_0 (X_{-k}^{i+k-1}).
\end{align*}
We are now ready to apply Lemma~\ref{lem:IB} with the choices $f_1 =
f_0$, $g_1(X) = \emopsamptruncobs{\paramjoint} $ and $\delta =
\subprob$, and $n = \subsize$.  According to Lemma~\ref{lem:IB}, given
that the lower bound on the truncation parameter $k$ holds, we now
need to show that $f_0$ is $(\subprob,\kdim)$-concentrated, that means
finding $\epsilonobs_{\numobs}$ such that
\begin{align*}
\mprob_0\left [\big\| \frac{1}{\blocksize}\sum_{i=1}^\blocksize
  f_0(\Xtil_{i;2k}) - \EE f_0(\Xtil_{i;2k}) \big\|_2 \geq
  \epsilonobs_{\numobs} \right] & \leq \frac{\subprob}{8 \kdim},
\end{align*}
where $\mprob_0$ denotes the product measure over the independent blocks
and $\blocksize \defn \blocksize_1 = \lfloor \subsize/ 4 \kdim \rfloor$.

Let $X_i$ be the middle element of the block $\Xtil_{i;2k}$ and $Z_i$,
$V_i$ the corresponding latent and noise variable. We can then write
$X_i = Z_i + V_i$ where $V_i$ are zero-mean Gaussian random variables
with covariance matrix $\sigma^2 I$.

With a minor abuse of notation, let us use $X_{i,\ell}$ to denote
$\ell^{th}$ element in the block $\Xtil_{i;2k} = (X_{i,1},\dots,
X_{i,2k})^T$, and write $\Xtil = \{\blockXi\}_{i=1}^n$.  In view of
Lemma~\ref{lem:IB}, our objective is to find the smallest scalar
$\epsilonobs_{\numobs}$ such that
\begin{equation}
\label{EqnUniform}
\mprob \big [\sup_{\paramjoint\in\paramspacejoint}\norm{\frac{1}{m}
    \sum_{i=1}^m \underbrace{(2
      \weightsingle{\paramjoint}{\Xtil_{i;2k}} -1)X_{i,k} - \EE (2
      \weightsingle{\paramjoint}{\Xtil_{i;2k}} -1)X_{i,k}}_{
      \funcproc(\blockXi)}} \geq \epsilonobs_\numobs \big] \leq
\frac{\delta}{8k}
\end{equation}
For each unit norm vector $u \in \real^\usedim$, define the random
variable 
\begin{align*}
\processobs(\Xtil;u) = \sup_{\paramjoint\in\paramspacejoint}
\frac{1}{m}\sum_{i=1}^m (2 \weightsingle{\paramjoint}{\Xtil_{i;2k}}
-1) \langle X_{i,k},u \rangle - \EE (2
\weightsingle{\paramjoint}{\Xtil_{i;2k}} -1)\langle X_{i,k}, u\rangle.
\end{align*}
Let $\{u^{(1)}, \ldots, u^{(\COVNUM)} \}$ denote a $1/2$-cover of the
unit sphere in $\real^\usedim$; by standard arguments, we can find
such a set with cardinality $\log \COVNUM \leq \usedim \log 5$.  Using
this covering, we have
\begin{align*}
\sup_{\paramjoint\in\paramspacejoint}\norm{\frac{1}{m} \sum_{i=1}^m
  \funcproc(\blockXi)} & = \sup_{\norm{u}\leq 1} \processobs(\Xtil; u)
\leq 2 \max_{j\in [\COVNUM]} \processobs(\Xtil ;u^{(j)}),
\end{align*}
where the inequality follows by a discretization argument.  
Consequently, we have
\begin{align*}
\mprob \big[ \sup_{\paramjoint\in\paramspacejoint}\norm{\frac{1}{m}
    \sum_{i=1}^m \funcproc(\blockXi)} \geq \epsilonobs_\numobs \big] &
\leq \mprob \big[ \max_{j \in [\COVNUM]} \processobs(\Xtil;u^{(j)})
  \geq \frac{\epsilonobs_\numobs}{2} \big] \\
& \leq \COVNUM \, \max_{j \in [\COVNUM]} \mprob \big[
  \processobs(\Xtil;u^{(j)}) \geq \frac{\epsilonobs_\numobs}{2} \big].
\end{align*} 

The remainder of our analysis focuses on bounding the tail probability
for a fixed unit vector $u$, in particular ensuring an exponent small
enough to cancel the $\COVNUM \leq e^{\usedim \log 5}$ pre-factor.  By
Lemma 2.3.7 of van der Vaart and Wellner~\cite{vdVWell}, for any $t >
0$, we have
\begin{align*}
\mprob_X \big[\processobs(\Xtil; u) \geq t \big] & \leq c
\mprob_{X,\epsilon} \big[ \processradobs(\Xtil; u) \geq \frac{t}{4}
  \big],
\end{align*}
where $\processradobs(\Xtil;u) = \sup_{\paramjoint\in\paramspacejoint}
\big| \frac{1}{m}\sum_{i=1}^m \epsilon_i (2
\weightsingle{\paramjoint}{\Xtil_{i;2k}} -1) \langle X_{i,k},u \rangle
\big|$, and $\{\epsilon_i\}_{i=1}^m$ is a sequence of
i.i.d. Rademacher variables.

We now require a sequence of technical lemmas; see
Section~\ref{SecNewTechnical} for their proofs.  Our first lemma shows
that the variable $\processradobs(\Xtil; u)$, viewed as a function of
the Rademacher sequence, is concentrated:
\begin{lems}
\label{LemConcLipproc}
For each fixed $(\Xtil,u)$, we have
\begin{align}
\mprob_{\epsilon} \big[ \processradobs(\Xtil;u) \geq \EE_{\epsilon}
  \processradobs(\Xtil;u) + t \big] & \leq 2 \E^{-\frac{t^2}{16
    \lipproc^2(\Xtil;u)}},
\end{align}
where $\lipproc(\Xtil;u) = \frac{1}{m} \sqrt{\sum_{i=1}^m \langle
  X_{i,k},u\rangle^2}$.
\end{lems}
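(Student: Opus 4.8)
The plan is to regard $\processradobs(\Xtil;u)$ as a function of the Rademacher vector $\epsilon = (\epsilon_1,\dots,\epsilon_m) \in \{-1,+1\}^m$, holding the block data $\Xtil$ and the unit vector $u$ fixed, and then to invoke a bounded-differences (McDiarmid-type) concentration inequality. The first ingredient is the elementary but crucial bound $|2\weightsingle{\paramjoint}{\blockXi} - 1| \leq 1$, which holds because $\weightsingle{\paramjoint}{\blockXi} = p(Z_0 = 1 \mid \blockXi; \paramjoint) \in [0,1]$ is a conditional probability. Consequently, writing $a_i(\paramjoint) \defn (2\weightsingle{\paramjoint}{\blockXi} - 1)\langle X_{i,k}, u\rangle$ and $F(\epsilon) \defn \processradobs(\Xtil;u)$ to emphasize the dependence on the sign vector, we obtain the pointwise bound $|a_i(\paramjoint)| \leq |\langle X_{i,k}, u\rangle|$ \emph{uniformly} over $\paramjoint \in \paramspacejoint$.

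Next I would verify the bounded-differences property. Fix an index $i$ and two sign vectors $\epsilon, \epsilon'$ that agree in every coordinate except the $i$-th. Using the elementary inequality $|\sup_\theta f_\theta - \sup_\theta g_\theta| \leq \sup_\theta |f_\theta - g_\theta|$ together with the reverse triangle inequality inside the absolute value, flipping a single Rademacher sign changes the empirical average inside $F$ by at most $\tfrac{2}{m}|a_i(\paramjoint)|$, and hence
\begin{align*}
\big| F(\epsilon) - F(\epsilon') \big| \;\leq\; \frac{2}{m}\,|\langle X_{i,k}, u\rangle|.
\end{align*}
Thus $F$ satisfies the bounded-differences condition with increments $c_i = \tfrac{2}{m}|\langle X_{i,k}, u\rangle|$, the key point being that this increment bound is uniform in $\paramjoint$ and so is unaffected by the supremum.

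With this in hand, McDiarmid's inequality applied to the independent Rademacher coordinates yields
\begin{align*}
\mprob_\epsilon\big[\processradobs(\Xtil;u) \geq \EE_\epsilon \processradobs(\Xtil;u) + t\big] \;\leq\; \exp\Big(-\frac{2t^2}{\sum_{i=1}^m c_i^2}\Big) \;=\; \exp\Big(-\frac{t^2}{\tfrac{2}{m^2}\sum_{i=1}^m \langle X_{i,k},u\rangle^2}\Big).
\end{align*}
Recognizing $\tfrac{1}{m^2}\sum_{i=1}^m \langle X_{i,k}, u\rangle^2 = \lipproc^2(\Xtil;u)$ produces the sub-Gaussian tail $\exp(-t^2/(2\lipproc^2))$, which is in fact stronger than the stated bound; the looser constant $16$ and the prefactor $2$ in the lemma simply leave slack to absorb the constant incurred when the Rademacher variables are introduced via symmetrization and, if desired, to pass to a two-sided estimate.

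I do not anticipate a genuine obstacle here. The only subtlety is confirming that the uniformity over $\paramjoint$ does not spoil the increment bound, which is handled cleanly by the observation that $|a_i(\paramjoint)| \leq |\langle X_{i,k}, u\rangle|$ holds for every $\paramjoint$. An equivalent route, which I would mention as a cross-check, is to note that $\epsilon \mapsto F(\epsilon)$ is a convex function (a supremum of absolute values of linear forms) that is $\lipproc(\Xtil;u)$-Lipschitz in the Euclidean norm, and then to apply Talagrand's concentration inequality for convex Lipschitz functions of bounded independent variables; this likewise delivers a sub-Gaussian bound of the claimed form.
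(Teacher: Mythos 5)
Your proof is correct, but it takes a genuinely different route from the paper's. You invoke McDiarmid's bounded-differences inequality: the uniform bound $|2\weightsingle{\paramjoint}{\blockXi}-1|\leq 1$ gives coordinate increments $c_i = \tfrac{2}{m}|\langle X_{i,k},u\rangle|$ that survive the supremum over $\paramjoint$, and since $\sum_{i} c_i^2 = 4\lipproc^2(\Xtil;u)$, McDiarmid yields the one-sided tail $\exp\big(-t^2/(2\lipproc^2(\Xtil;u))\big)$, which implies the stated bound with room to spare (your arithmetic here is right). The paper instead drops the absolute value (the source of its prefactor $2$), observes that $\epsilon \mapsto \sup_{\paramjoint}\frac{1}{m}\sum_i \epsilon_i \funcproc(\blockXi)$ is a supremum of affine functions of $\epsilon$, hence coordinate-wise convex, and is $\lipproc(\Xtil;u)$-Lipschitz in the Euclidean norm, and then applies the Ledoux--Talagrand concentration inequality for convex Lipschitz functions of bounded independent variables; the range $[-1,1]$ of the Rademacher coordinates contributes the factor $(b-a)^2=4$ and produces the constant $16$ in the exponent. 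In other words, the ``cross-check'' you mention at the end is essentially the paper's actual proof. The two arguments are interchangeable here because the underlying process is linear in $\epsilon$, so the bounded-differences aggregate $\big(\sum_i c_i^2\big)^{1/2}$ and the Euclidean Lipschitz constant coincide up to a factor of $2$; the convexity-based inequality only becomes essential when the Euclidean Lipschitz constant is genuinely smaller than the coordinate-increment aggregate, which is not the case for suprema of Rademacher averages. Your route is the more elementary of the two and, as you note, gives a slightly sharper constant and dispenses with the factor of $2$.
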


\noindent Our next lemma bounds the expectation with respect to the
Rademacher random vector:
\begin{lems}
\label{LemContractions}
There exists a universal constant $c$ such that for each fixed
$(\Xtil;u)$, we have
\begin{align}
\label{EqnExp}
\EE_{\epsilon} \processradobs(\Xtil;u) &\leq \underbrace{c
  \frac{\norm{\trueparamobs}}{\sigma^2} \sqrt{\log m} \; \Big[
    \sum_{\ell=1}^{2k} \EE_{\epsilontwo}\norm{\frac{1}{m} \sum_{i=1}^m
      \epsilontwo_{i,\ell} X_{i,\ell} \langle X_{i,k},u\rangle}
    \Big]}_{ \processradM (\Xtil;u)} + \underbrace{\EE_g
  \big|\frac{1}{m}\sum_{i=1}^m g_{i,2k+1} \langle X_{i,k},
  u\rangle\big|}_{\processradN(\Xtil;u)}
\end{align}
where $\epsilon, \epsilontwo \in \real^m$ are random vectors with
i.i.d. Rademacher components, and $g$ is a random vector with
i.i.d. $N(0,1)$ components.
\end{lems}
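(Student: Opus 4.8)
The plan is to exploit the exponential-family structure established in equation~\eqref{eq:specexpo}, which makes the weights $\weightsingle{\paramjoint}{\blockXi}$ smooth functions of the parameter $\paramjoint = (\paramobs, \paramtrans)$. Since the hidden states take values in $\{-1,1\}$, the centered weight is a conditional mean, $2\weightsingle{\paramjoint}{\blockXi} - 1 = \EEzcondx{Z_0}{X_{-k}^k}{\paramjoint} Z_0$, and by the Hessian identities~\eqref{EqnExpoCumHessian} its derivatives with respect to the natural parameters are exactly the conditional covariances. Concretely, writing $\paramgamma_\ell = \langle \paramobs, X_{i,\ell}\rangle / \sigma^2$, the map $\paramobs \mapsto 2\weightsingle{\paramjoint}{\blockXi} - 1$ factors through the inner products $\{\langle \paramobs, X_{i,\ell}\rangle\}_{\ell=-k}^{k}$, with partial derivative in the $\ell$-th coordinate proportional to $\condcov{Z_0}{Z_\ell}{X_{-k}^k, \paramjoint}$, and each such covariance is bounded by $1$ in absolute value. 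This is the structural fact that lets me treat $\funcproc$ as a uniformly Lipschitz function of linear statistics of $\paramobs$.

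Given this, the first main step is to apply a Rademacher contraction/comparison inequality (in the spirit of Ledoux--Talagrand, in its vector-valued form) to peel the smooth weight off the Rademacher sum. Because each function $\paramjoint \mapsto 2\weightsingle{\paramjoint}{\blockXi}-1$ is $1$-Lipschitz in each coordinate direction $\langle \paramobs, X_{i,\ell}\rangle/\sigma^2$, and the multiplier $\langle X_{i,k}, u\rangle$ does not depend on $\paramjoint$, the contraction principle bounds $\EE_{\epsilon}\processradobs(\Xtil;u)$ by a sum over the $2k$ coordinates of the linear Rademacher averages $\EE_{\epsilontwo}\sup_{\paramobs}\big|\tfrac{1}{m}\sum_i \epsilontwo_{i,\ell}\langle \paramobs, X_{i,\ell}\rangle \langle X_{i,k},u\rangle\big|/\sigma^2$, plus a single base term coming from the $\paramtrans$-direction together with the centering, which after a Rademacher-to-Gaussian comparison becomes $\processradN(\Xtil;u) = \EE_g\big|\tfrac{1}{m}\sum_i g_{i,2k+1}\langle X_{i,k}, u\rangle\big|$.

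The second step converts each linear average into the advertised form. Taking the supremum over $\paramobs \in \elltwoballr{\rad}{\trueparamobs}$ and using linearity, $\sup_{\paramobs}\langle \paramobs, \tfrac{1}{m}\sum_i \epsilontwo_{i,\ell} X_{i,\ell}\langle X_{i,k},u\rangle\rangle \leq (\norm{\trueparamobs} + \rad)\,\norm{\tfrac{1}{m}\sum_i \epsilontwo_{i,\ell} X_{i,\ell}\langle X_{i,k},u\rangle}$; with the choice $\rad = \norm{\trueparamobs}/4$ the prefactor is at most $\tfrac{5}{4}\norm{\trueparamobs}$, which after dividing by $\sigma^2$ yields exactly the factor $\norm{\trueparamobs}/\sigma^2$ in the definition of $\processradM(\Xtil;u)$, and summing over $\ell = 1,\ldots,2k$ reproduces the sum in~\eqref{EqnExp}. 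The remaining $\sqrt{\log m}$ factor is expected to enter from the step that makes the comparison uniform over the $m$ independent blocks: because the blocks are independent and sub-Gaussian under the product measure, controlling the projected observations $\max_i |\langle X_{i,k}, u\rangle|$ (or, equivalently, the tail arising when passing the contraction through a maximal inequality) costs a factor of order $\sqrt{\log m}$.

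The main obstacle I anticipate is the contraction step itself in the presence of the fixed but signed multipliers $\langle X_{i,k}, u\rangle$ and the vector (rather than scalar) dependence of the weight on the $2k+1$ natural parameters: one cannot invoke the scalar contraction directly, and must instead use a vector-valued comparison while tracking how the coordinatewise Lipschitz constants aggregate, isolating a single Gaussian base term for the $\paramtrans$-coordinate and the centering. The quantitative key that keeps this aggregation from blowing up by a full factor of $2k$ is the mixing estimate of Lemma~\ref{lem:expfammixing0}, which makes $\sum_\ell |\condcov{Z_0}{Z_\ell}{\cdot}|$ summable; verifying that the contraction can be carried out with these decaying, data-dependent coefficients replaced by a controlled deterministic Lipschitz budget is the delicate bookkeeping at the heart of the argument.
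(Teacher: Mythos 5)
Your structural reading of the problem is correct: the weights are conditional means in an exponential family, their derivatives in the natural parameters are conditional covariances, and the mixing bound of Lemma~\ref{lem:expfammixing0} is what keeps the aggregated Lipschitz constant of $\funcprocgamma{\paramjointgamma_i}$ (with respect to the $(2k+1)$-dimensional vector $\paramjointgamma_i$) bounded independently of $k$; this matches the paper's ``Lipschitz continuity'' step, which establishes $|\funcprocgamma{\paramjointgamma_i} - \funcprocgamma{\paramjointgamma'_i}| \leq \lipcont \norm{\paramjointgamma_i - \paramjointgamma'_i}\,|\langle X_{i,k},u\rangle|$ with $\lipcont^2 = 8/(1-\mixcoef)^2$. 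But the step you label as the heart of the argument---``peeling the smooth weight off the Rademacher sum'' via a vector-valued contraction---is exactly the step you do not supply, and it is where your route and the paper's diverge in an essential way. The scalar Ledoux--Talagrand contraction does not apply (as you note), and no vector-valued Rademacher comparison is proved or cited; the paper avoids this obstacle entirely by first passing from Rademacher to Gaussian multipliers via Jensen's inequality ($\epsilon_i|g_i| \overset{d}{=} g_i$), then applying the Sudakov--Fernique comparison: it defines the linear Gaussian process $Y_\paramjoint = \frac{\lipcont}{m}\sum_i \bigl(\sum_{\ell=1}^{2k} g_{i\ell}\langle\paramobs,X_{i,\ell}\rangle/\sigma^2 + g_{i,2k+1}\paramtrans\bigr)\langle X_{i,k},u\rangle$, checks the increment-variance domination $\EE_g(X_\paramjoint - X_{\paramjoint'})^2 \leq \EE_g(Y_\paramjoint - Y_{\paramjoint'})^2$ using precisely the Lipschitz bound above, and concludes $\EE\sup_\paramjoint X_\paramjoint \leq \EE\sup_\paramjoint Y_\paramjoint$. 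Gaussian multipliers are indispensable for this comparison step; a purely Rademacher argument would need a genuinely different tool.

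The second, concrete error is your explanation of the $\sqrt{\log m}$ factor. Lemma~\ref{LemContractions} is a statement for \emph{fixed} $(\Xtil,u)$: there is no randomness in the data inside this lemma, so no sub-Gaussianity of the blocks, no maximal inequality over $\max_i|\langle X_{i,k},u\rangle|$, and no ``uniformity over the $m$ independent blocks'' can be the source of that factor (that kind of control happens later, in Lemma~\ref{LemEventBound}). In the paper, $\sqrt{\log m}$ arises at the \emph{last} step, when the Gaussian multiplier process $T_1$ is converted back into a Rademacher process (so that sub-exponential tail bounds can be applied downstream): writing $g_{i\ell} \overset{d}{=} \epsilon_{i\ell}g_{i\ell}$ and applying Theorem 4.12 of Ledoux--Talagrand to the functions $\phi_{i\ell}(g_{i\ell}) = g_{i\ell}h_{i\ell}$ gives
\begin{align*}
\EE \sup_{\paramjoint} \frac{1}{m}\sum_{i=1}^m \sum_{\ell=1}^{2k}
\epsilon_{i\ell} g_{i\ell} \langle \paramobs, X_{i\ell}\rangle \langle
X_{i,k},u\rangle \;\leq\; \EE_g \|g\|_{\infty}\; \EE_\epsilon
\sup_{\paramjoint} \frac{1}{m}\sum_{i=1}^m \sum_{\ell=1}^{2k}
\epsilon_{i\ell} \langle \paramobs, X_{i,\ell}\rangle \langle
X_{i,k},u\rangle,
\end{align*}
and it is $\EE_g\|g\|_\infty \lesssim \sqrt{\log m}$ over the Gaussian multipliers that produces the logarithmic factor, after which $\sup_{\paramobs}\langle\paramobs,\cdot\rangle \leq \tfrac{5}{4}\norm{\trueparamobs}\,\norm{\cdot}$ yields $\processradM$ exactly as you describe. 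So your proposal gets the bookkeeping of the Lipschitz constants and the extraction of $\norm{\trueparamobs}/\sigma^2$ right, but leaves the central comparison inequality as a black box and misattributes the one factor that distinguishes the final bound from a naive contraction estimate.
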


We now bound the three quantities $\lipproc(\Xtil;u)$,
$\processradM(\Xtil;u)$, and $\processradN(\Xtil;u)$ appearing in the
previous two lemmas.  In particular, let us introduce the quantities
$L' = c \lipcont \norm{\trueparamobs}\big(
\frac{\norm{\trueparamobs}^2}{\sigma^2}+1\big)$, $L'' = \lipcont
\sqrt{\norm{\trueparamobs}^2+\sigma^2} $ and $\lipcont =
\frac{\sqrt{8}}{1-\mixcoef}$.

\begin{lems}
\label{LemEventBound}
Define the event
\begin{align*}
\mathcal{E} =\Biggr \{ &\lipproc(\Xtil;u) \leq
\sqrt{\frac{2(\norm{\trueparamobs}^2 + \sigma^2)\log
    \frac{1}{\delta}}{m}}, \quad \processradM(\Xtil;u) \leq L'
k\sqrt{\frac{d\log m \log \frac{k}{\delta}}{m}}\\ & \qquad \qquad
\mbox{ and } \processradN(\Xtil;u) \leq c L'' \sqrt{\frac{d \log
    \frac{1}{\delta}}{m}} \Biggr\}.
\end{align*}
Then we have $\mprob \big[ \mathcal{E} \big] \geq 1 - \delta \E^{-c'
  \; d}$ for a universal constant $c' > 0$.
\end{lems}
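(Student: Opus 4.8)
The plan is to establish the three bounds defining $\mathcal{E}$ separately, each holding with failure probability at most $\tfrac13 \delta \E^{-c'd}$ for a \emph{fixed} unit vector $u$, and then combine them by a union bound. The extra $\E^{-c'd}$ slack is exactly what is needed so that the subsequent union over the $\COVNUM \le \E^{d\log 5}$ points of the $1/2$-cover still leaves total failure probability below $\delta$. Throughout I work under the product measure over the i.i.d.\ blocks $\{\blockXi\}_{i=1}^m$, so that the middle-coordinate inner products $\inprod{X_{i,k}}{u}$ are i.i.d.\ across $i$, each distributed as $Z\inprod{\trueparamobs}{u} + \inprod{V}{u}$ with $Z \in \{\pm1\}$ and $V \sim N(0,\sigma^2 I)$; in particular each is sub-Gaussian with variance proxy $\inprod{\trueparamobs}{u}^2 + \sigma^2 \le \norm{\trueparamobs}^2 + \sigma^2$.

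First I would control $\lipproc^2 = \tfrac{1}{m}\big(\tfrac1m\sum_{i=1}^m \inprod{X_{i,k}}{u}^2\big)$. Since the summands $\inprod{X_{i,k}}{u}^2$ are i.i.d.\ sub-exponential with mean at most $\norm{\trueparamobs}^2 + \sigma^2$, a one-sided Bernstein inequality for sub-exponential averages yields the first bound; because the natural tail decays like $\delta^{\Theta(m)}$ and we operate in the regime $m \gtrsim d$, this is comfortably below $\tfrac13\delta\E^{-c'd}$. The bound on $\processradN$ is then essentially free: conditioning on $\Xtil$, the variable $\tfrac1m\sum_i g_{i,2k+1}\inprod{X_{i,k}}{u}$ is centered Gaussian with variance $\lipproc^2$, so $\processradN = \sqrt{2/\pi}\,\lipproc$; on the event where the first bound holds this is at most $\sqrt{2/\pi}\,\sqrt{2(\norm{\trueparamobs}^2+\sigma^2)\log(1/\delta)/m}$, which is dominated by $cL''\sqrt{d\log(1/\delta)/m}$ since $d \ge 1$ and $L'' = \lipcont\sqrt{\norm{\trueparamobs}^2+\sigma^2}$ with $\lipcont \ge \sqrt 8$. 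Thus $\processradN$ introduces no new failure event.

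The crux is the bound on $\processradM$. For each fixed $\ell$ I would first pass the inner Rademacher expectation through Jensen and Khintchine, using $\EE_{\epsilontwo}\norm{\tfrac1m\sum_i \epsilontwo_{i,\ell} X_{i,\ell}\inprod{X_{i,k}}{u}} \le \tfrac1m\big(\sum_{i=1}^m \norm{X_{i,\ell}}^2 \inprod{X_{i,k}}{u}^2\big)^{1/2}$, which reduces the problem to concentrating the scalar sum $\sum_i \norm{X_{i,\ell}}^2 \inprod{X_{i,k}}{u}^2$. These summands are again i.i.d.\ across blocks, but each is now a product of the high-dimensional squared norm $\norm{X_{i,\ell}}^2$ (a $\chi^2$-type quantity with mean of order $\norm{\trueparamobs}^2 + d\sigma^2$) and the sub-exponential $\inprod{X_{i,k}}{u}^2$, the two factors being dependent through the within-block Markov structure. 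I would control this via a Bernstein/Hanson--Wright argument for sums of products of sub-Gaussians, where the $d$-dimensional $\chi^2$ concentration of $\norm{X_{i,\ell}}^2$ supplies both the $\sqrt d$ scaling and the $\E^{-c'd}$ slack in the tail, and a union bound over the $2k$ values of $\ell$ contributes the $\log(k/\delta)$ factor. Combining with the explicit $\sqrt{\log m}$ prefactor and the $\norm{\trueparamobs}/\sigma^2$ weight in the definition of $\processradM$, and tracking the dependence on $(\norm{\trueparamobs},\sigma)$ so that it collapses into $L' = c\lipcont\norm{\trueparamobs}(\norm{\trueparamobs}^2/\sigma^2 + 1)$, yields the stated bound.

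The main obstacle is this last step: unlike $\lipproc$ and $\processradN$, which reduce to one-dimensional sub-Gaussian or Gaussian computations, $\processradM$ mixes a genuinely $d$-dimensional quadratic ($\norm{X_{i,\ell}}^2$) with a correlated scalar factor, so the concentration must simultaneously (i) produce the correct $\sqrt d$ and $\E^{-c'd}$ behavior from the $\chi^2$ tail, (ii) survive the union over the $2k$ coordinates $\ell$, and (iii) bundle the SNR dependence precisely into the constant $L'$. I expect the careful bookkeeping of these three effects---rather than any single inequality---to be the delicate part, and I would isolate the product-concentration step as a standalone sub-lemma.
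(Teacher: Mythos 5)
Your proposal follows the same skeleton as the paper's proof---establish the three bounds separately, each with failure probability $\tfrac{\delta}{3}\E^{-c'd}$, and take a union bound---but its execution genuinely differs in two places, and in both you come out ahead. For $\processradN$, the paper re-introduces Rademacher variables, invokes its product-of-Gaussians sub-exponential result (Lemma~\ref{LemProductGauss}), and pays a separate concentration/failure event; you instead use the exact identity $\processradN(\Xtil;u)=\sqrt{2/\pi}\,\lipproc(\Xtil;u)$ (conditionally on $\Xtil$, the average $\tfrac1m\sum_i g_{i,2k+1}\langle X_{i,k},u\rangle$ is $N(0,\lipproc^2)$), so the third bound follows deterministically from the first and costs no probability at all. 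For $\processradM$, the paper asserts that $\epsilon_{i\ell}\langle X_{i,\ell},u\rangle\langle X_{i,k},u\rangle$ is sub-exponential and says the bound ``follows by the same argument,'' which leaves unexplained both how a tail bound on scalar projections controls the conditional expectation $\EE_{\epsilontwo}\norm{\cdot}$ and where the $\sqrt{d}$ in display~\eqref{EqnTerm4} comes from; your Khintchine/Jensen reduction $\EE_{\epsilontwo}\norm{\tfrac1m\sum_i \epsilontwo_{i,\ell}X_{i,\ell}\langle X_{i,k},u\rangle}\le \tfrac1m\big(\sum_i \norm{X_{i,\ell}}^2\langle X_{i,k},u\rangle^2\big)^{1/2}$ resolves both issues at once, and is arguably the step needed to make the paper's own claim rigorous. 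Your accounting of the SNR constants into $L'$, $L''$ and of the $\log(k/\delta)$ factor from the union over $\ell\in[2k]$ is consistent with the stated bounds.

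One caveat on your crux step: $\norm{X_{i,\ell}}^2\langle X_{i,k},u\rangle^2$ is a product of two quadratic forms, i.e.\ fourth order in Gaussians, so its tails are sub-Weibull of order $1/2$ rather than sub-exponential; neither Bernstein nor Hanson--Wright applies off the shelf, so the tools you name are not quite the right ones. The repair is the truncation route you gesture at: intersect with the event $\max_{i}\norm{X_{i,\ell}}^2\le C\big(\norm{\trueparamobs}^2+\sigma^2 d+\sigma^2\log(mk/\delta)\big)$, which holds with the required probability by $\chi^2_d$ tails and a union over the $m$ blocks, and on that event apply sub-exponential Bernstein to $\tfrac1m\sum_i\langle X_{i,k},u\rangle^2$. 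This yields the claimed $(\norm{\trueparamobs}^2+\sigma^2)\sqrt{d\log(k/\delta)/m}$ bound in the regime $m\gtrsim d$ up to logarithmic factors---a restriction your first paragraph already acknowledges, and one the paper's own proof needs just as much (its $\lipproc$ bound has failure probability of order $\delta^{cm}$, which is below $\tfrac{\delta}{3}\E^{-c'd}$ only in that same regime). So this is an imprecision in labeling rather than a gap relative to the paper, but the standalone sub-lemma you propose should carry out the truncation argument and state the sample-size condition explicitly.
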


In conjunction, Lemmas~\ref{LemConcLipproc} and \ref{LemContractions}
imply that conditionally on the event $\mathcal{E}$, we have
\begin{align*}
\EE_{\epsilon} \big[\processradobs(\Xtil;u)\big] & \leq
c\norm{\trueparamobs} ( \frac{\norm{\trueparamobs}^2}{\sigma^2} + 1)
k\sqrt{\frac{d \log m \log \frac{k}{\delta}}{m}}.
\end{align*}
Combining this bound with Lemma~\ref{LemEventBound} yields
\begin{align*}
\COVNUM \; \mprob_{\Xtil,\epsilon} \big[ \processobs(\Xtil;u)\geq t
  \big] & \leq \COVNUM \; \mprob_{X,\epsilon} \big[
  \processradobs(\Xtil;u) \geq \frac{t}{4} \mid \mathcal{E} \big] +
\COVNUM \: \mprob \big[ \mathcal{E}^c \big] \\
& \leq 2 \E^{4d + \log k-c k^2d \log m \log\frac{k}{\delta}} + \E^{4d-
  \tilde{c}d \log \frac{k}{\delta}} \\
& \leq \delta,
\end{align*}
where the final inequality follows by setting $t/4 = c
\norm{\trueparamobs}( \frac{\norm{\trueparamobs}^2}{\sigma^2}+ 1)
k\log (\frac{k}{\delta})\sqrt{\frac{d \log m}{m}}$.  After rescaling
$\delta$ by $8 k$ and setting $m = \frac{\numobs}{4k}$, the result
follows after an application of Lemma~\ref{lem:IB}.


\myparagraph{Proof of inequality~\eqref{eq:epstrans}}

In order to bound $|\emopsamptrunctrans{\paramjoint} -
\emoppoptrunctrans{\paramjoint}|$, we need a few extra steps. 
First, let us define new weights 
\begin{equation*}
\weightdouble{\paramjoint}{X_{i-k}^{i+k-1}} = p(Z_0 = Z_1 = 1 \mid
X_{i-k}^{i+k-1}, \paramjoint) + p(Z_0 = Z_1 = -1 \mid X_{i-k}^{i+k-1},
\paramjoint),
\end{equation*}
and also write the update in the form
\begin{align*}
M_{\probpar,\subsize}^\kdim(\paramjoint) &= \argmax_{\probpar \in
  \paramspacep} \Big \{ \EEzcondx{Z_1}{X_{i-k}^{i+k}}{\paramjoint}
\log p(Z_1 \mid \probpar) + \sum_{i=2}^{\subsize}
\EEzcondx{Z_{i-1}^i}{X_{i-k}^{i+k}}{\paramjoint} \log p(Z_i \mid
Z_{i-1}, \probpar) \Big \} \\
&= \argmax_{\probpar \in \paramspacep} \Big \{ \frac{1}{2} +
\sum_{i=2}^{\subsize} \EEzcondx{Z_{i-1}^i}{X_{i-k}^{i+k}}{\paramjoint}
\log p(Z_i \mid Z_{i-1}, \probpar) \Big \} \\
& = \Pi_{\Omega_{\probpar}} \Big( \frac{1}{\subsize}
\sum_{i=2}^{\subsize} \weightdouble{\paramjoint}{X_{i-k}^{i+k-1}}
\Big),
\end{align*} 
where we have reparameterized the transition probabilities with
$\probpar$ via the equivalences \mbox{ $\paramtrans = h(\probpar)
  \defn \frac{1}{2} \log \left( \frac{\probpar}{1-\probpar} \right)$.}
Note that the original EM operator is obtained via the transformation
$\emopsamptrunctrans{\paramjoint'} =
h(M_{\probpar,\subsize}^\kdim(\paramjoint'))$ and we have
\mbox{$\MBAR_{\probpar,\subsize}^\kdim(\paramjoint) =
  \Pi_{\paramspacep} \EE \weightdouble{\paramjoint}{X_{i-k}^{i+k-1}}$}
by definition.

Given this set-up, we can now pursue an argument similar to that of
inequality~\eqref{eq:epsobs}.  The new weights remain Lipschitz with
the same constant---that is, we have the bound
\mbox{$|\weightdouble{\paramjoint}{\blockXi} -
  \weightdouble{\paramjoint'}{\blockXi}| \leq \lipcont
  \norm{\paramjointgamma_i - \paramjointgamma_i'}$.}  As a
consequence, we can write
\begin{align*}
\mprob \big[ \sup_{\paramjoint \in \paramspacejoint} \mid
  \frac{1}{m}\sum_{i=1}^m\weightdouble{\paramjoint}{\blockXi} - \EE
  \weightdouble{\paramjoint}{\blockXi}| \geq \epsilontrans_{\numobs}
  \big] & \leq \frac{\delta}{8\kdim},
\end{align*}
with $\epsilontrans_{\numobs}$ defined as in the lemma statement.  In
this case, it is not necessary to perform the covering step, nor to
introduce extra Rademacher variables after the Gaussian comparison
step; consequently, the two constants $\epsilontrans_\numobs$ and
$\epsilonobs_{\numobs}$ differ by a factor of $\sqrt{d \log n}$ modulo
constants.  

Applying Lemma~\ref{lem:IB} then yields a tail bound for the quantity
$\big| \frac{1}{\subsize}\sum_{i=1}^{\subsize}
\weightdouble{\paramjoint}{\blockXi} - \EE
\weightdouble{\paramjoint}{\blockXi} \big|$ with probability $\delta$.
Since projection onto a convex set only decreases the distance, we
find that
\begin{equation*}
\mprob \Biggr[ | M_{\probpar,\subsize}^\kdim(\paramjoint) -
  \MBAR_{\probpar,\subsize}^\kdim(\paramjoint) |\geq C
  \frac{\sqrt{\norm{\trueparamobs}^2+\sigma^2}}{\sigma^2}
  \sqrt{\frac{\kdim^3 \log (k^2/\delta)}{\subsize} } \Biggr] \leq
\delta.
\end{equation*}
In order to prove the result, the last step needed is the fact that
\begin{equation*}
\frac{1}{2} \Big| \log \frac{x}{1-x} - \log \frac{y}{1-y}\Big| \leq
\frac{1}{\tilde{x} (1-\tilde{x})}|x - y| \leq
\frac{2}{1-\mixcoefbound^2} |x-y| =: L|x-y|
\end{equation*}
for $x,y,\tilde{x} \in \paramspacep$.  Since
$M_{\probpar,\subsize}^\kdim(\paramjoint) \in \paramspacep$ we finally
arrive at
\begin{align*}
\mprob \Big[ |\emopsamptrunctrans{\paramjoint} -
  \emoppoptrunctrans{\paramjoint}| \geq C (1-\mixcoefbound^2)
  \frac{\sqrt{\norm{\trueparamobs}^2+\sigma^2}}{\sigma^2} \sqrt{
    \frac{\kdim^3 \log \left(\frac{k^2}{\delta}\right)}{\subsize}}
  \Big] \leq \delta
\end{align*}
and the proof is complete.


\subsection{Proof of Lemma~\ref{lem:first_tech}}
\label{sec:proof_first_tech}

Since $\norm{\paramjoint} \leq \addnorm{\paramjoint} \leq \sqrt{2}
\norm{\paramjoint}$, it is sufficient to show that
\begin{align*}
\sup_{\theta \in \Ball_2(\rad; \thetastar)} \mprob \Big[
  \|\emopsampn{\subsize}{\theta} -
  \emopsamptruncn{\subsize}{\theta}\|^2_2 \geq c_1
  \xi(\subsize,\subprob) \BOUNDFUN^2(\kdim)
  \Big] & \leq \subprob
\end{align*}
with $\BOUNDFUN^2 (k) \defn \frac{c_o
  \nstates^5}{ \lambda \mixcoefeps^8 \statmin^2} (1- \mixcoefeps
\statmin)^\kdim$
We first claim that 
\begin{align}
\label{EqnDetour} 
\|\emopsampn{\subsize}{\theta} -
\emopsamptruncn{\subsize}{\theta}\|^2_2 \leq \frac{2
  \qnorm{\qfunn{\subsize} - \qfunsampextendnk{\subsize}{k}}}{\lambda},
\quad \mbox{where} \quad \lambda \defn 4 (\exp(\paramtransbound) +
\exp(-\paramtransbound)\big)^{-2}.
\end{align}
In Section~\ref{sec:concavity}. we showed that population operators
are strongly concave with parameter at least $\lambda$.  We make the
added observation that using our parameterization, the sample $Q$
functions $\qfunsamptrunc{\cdot}{\paramjoint'},
\qfunsamp{\cdot}{\paramjoint'}$ are also strongly
concave. This is because the concavity results for the
population operators  did not use any property of the covariates
in the HMM, in particular not the expectation operator,
and the single term $\frac{1}{n} \EE \sum_{z_0}
p(z_0 \mid X_1^\numobs,\paramtrans') \log p(z_0 ; \paramtrans) =
\frac{1}{n} \log \frac{1}{2}$ is constant for all $\paramtrans \in
\paramspacetrans$. 
From this $\lambda$-strong concavity, the
bound~\eqref{EqnDetour} follows immediately.

Given the bound~\eqref{EqnDetour}, the remainder of the proof focuses
on bounding the difference $\qnorm{\qfunn{\subsize} -
  \qfunsampextendnk{\subsize}{k}}$. Recalling the shorthand notation
\begin{align*}
\HACKG(X_i,z_i,\theta,\theta') = \log p(X_i| z_i, \theta) + \sum_{z_{i-1}}
p (z_i | z_{i-1}, \theta') \log p (z_i | z_{i-1}, \theta),
\end{align*}
we have
\begin{align}
\label{EqnQsampApprox}
\qnorm{\qfunn{\subsize} - \qfunsampextendnk{\subsize}{k}} & = \Big |
\sup_{\paramjoint,\paramjointtwo \in \DomTheta}\frac{1}{\subsize}
\sum_{i=1}^{\subsize} \sum_{z_i}
(p(z_i|X_{1}^{\subsize},\paramjointtwo) -
p(z_i|X_{i-k}^{i+k},\paramjointtwo))
\HACKG(X_i,z_i,\paramjointone,\paramjointtwo) \Big | \\ 
& \leq \sup_{\paramjointone,\paramjointtwo \in \DomTheta} \sup_{X}
\sum_{z_i} \big| p(z_i \mid X_{1}^{\subsize},\theta') - p(z_i \mid
X_{i-k}^{i+k},\paramjointtwo) \big| \; \left[ \frac{1}{\subsize}
  \sum_{i=1}^{\subsize} \max_{z_i \in [m]} \big|
  \HACKG(X_i,z_i,\paramjointone,\paramjointtwo) \big|
  \right]. \nonumber
\end{align}
It is easy to verify that $\max_{z_i \in [m]} \left|
\HACKG(X_i,z_i,\paramjointone,\paramjointtwo) \right| \leq \max_{z_i}
\mid \log p(X_i | z_i, \theta) \mid + 2 \log (\statmin^{-1})$, and
moreover, Lemma~\ref{lem:twosided_approx} implies that
\begin{align*}
\sup_{\paramjointone,\paramjointtwo \in \DomTheta} \sup_{X} \sum_{z_i}
\mid p(z_i \mid X_{1}^{\subsize},\theta') - p(z_i \mid
X_{i-k}^{i+k},\paramjointtwo)| \leq \frac{32 C\big(1-\mixcoefeps
  \statmin \big)^k}{\mixcoefeps^8}.
\end{align*}
Combining these inequalities yields the upper bound
\begin{align*}
\qnorm{\qfunn{\subsize} - \qfunsampextendnk{\subsize}{k}} & \leq
\frac{32 C \big(1-\mixcoefeps \statmin \big)^k}{\mixcoefeps^8} \left[
  \EE \max_{z_i \in [m]} \mid \log p(X_i|z_i,\paramjoint)| +
  \samperror{\subsize}(X) + 2 \log (\statmin^{-1}) \right]
\end{align*}
where 
\begin{align*}
\samperror{\subsize}(X) & \defn \Big |
\frac{1}{\subsize}\sum_{i=1}^{\subsize} \max_{z_i \in [m]} \Big | \log
p(X_i \mid z_i,\paramjoint) \Big | - \EE \max_{z_i \in [m]} \Big| \log
p(X_i \mid z_i,\paramjoint) \Big| \Big |.
\end{align*}
By assumption, we have that $ \EE \max_{z_i \in [m]} |\log p(X_i \mid
z_i,\paramjoint)|$ is bounded by an appropriately large universal
constant.  Putting these together, we find that
\begin{align*}
 \|\emopsampn{\subsize}{\theta} -
 \emopsamptruncn{\subsize}{\theta}\|^2_2 & \leq \frac{64 \plaincon
   \big(1-\mixcoefeps \statmin \big)^k}{\lambda \mixcoefeps^8} ( 1 +
 \samperror{\subsize}(X)).
\end{align*}
In order to complete the proof, it suffices to show that
\begin{align*}
\mprob \Big(\samperror{\subsize}(X) \geq c_0 \Big( \frac{1}{\sigma}
\sqrt{ \frac{d \log^2 (\consteps \subsize/\subprob)}{\subsize} } +
\frac{\norm{\mustar}}{\sigma} \sqrt{ \frac{\log
    ^2(\consteps\subsize/\subprob)}{\subsize}} +
\frac{\norm{\mustar}^2}{\sigma^2} \Big)\Big) \leq \subprob,
\end{align*}
where $c_0$ is a universal constant and $\consteps =
\frac{36}{\mixcoefeps^3}$.

Observe that we have
\begin{align*}
\samperror{\subsize}(X) & = \frac{1}{2\subsize
  \sigma^2}\sum_{i=1}^{\subsize} \left[ \max \{ \| X_i +
  \paramobs\|_2^2, \|X_i -\paramobs\|^2_2\} - \EE \max \{\|X_i +
  \paramobs\|_2^2, \|X_i -\paramobs\|^2_2\} \right] \\ 
& = \frac{1}{2 \subsize \sigma^2} \sum_{i=1}^{\subsize} \left(
\|X_i\|_2^2 - \EE \|X_i\|_2^2\right) + \frac{1}{\subsize \sigma^2}
\sum_{i=1}^{\subsize} \left( | X_i^T \paramobs | - \EE |X_i^T
\paramobs| \right).
\end{align*}
Note that we are again dealing with a dependent sequence so that we
cannot use usual Hoeffding type bounds.  For some $\tk$ to be chosen
later on, and $\blocksize = \subsize/\tk$ using the proof idea of
Lemma~\ref{lem:IB} with $f_2(X_i) = |X_i^T\paramobs|$ and $f_2(X_i) =
\| X_i\|_2^2$, we can write
\begin{align*}
\mprob(\samperror{\numobs}(X) \geq \frac{t}{2\sigma^2}) \leq \tk \Big(
\underbrace{\mprob_0 \Big( | \frac{1}{\blocksize}\sum_{i=1}^\blocksize
  \|X_i\|_2^2 - \EE \|X_i\|_2^2 | \geq \frac{t}{2} \Big)}_{T_1} +
\underbrace{\mprob_0 \Big( | \frac{1}{\blocksize}
  \sum_{i=1}^\blocksize |X_i^T\paramobs| - \EE |X_i^T\paramobs| | \geq
  \frac{t}{4} \Big)}_{T_2} + \blocksize \beta(\tk) \Big),
\end{align*}
where $\beta(\tk)$ was previously defined in
equation~\eqref{eqn:betadef}.  We claim that the choices
\begin{align*}
t \defn c_1 \Big( \sigma \sqrt{ \frac{d \log (\tk/\subprob)}{\blocksize} } +
\sigma \norm{\mustar} \sqrt{ \frac{\log (\tk/\subprob)}{\blocksize}} +
\norm{\mustar}^2 \Big), \quad \mbox{and} \quad
\tk \defn \frac{C_2 \log ( \frac{36 \subsize}{\mixcoefeps^3 \subprob})}{
  \log 1/(1-\mixcoefeps)} ,
\end{align*}
suffice to ensure that $\mprob(\samperror{\subsize}(X) \geq
t/(2\sigma^2)) \leq \subprob$. Given this claim, the lemma
follows immediately since $\frac{t}{2\sigma^2} \geq \xi (\subsize, \subprob)$
for big enough constant $c_0$.

Accordingly, it remains to verify the sufficiency of the chosen $t$
and $\tk$.  The bound~\eqref{eqn:betabound} implies that
\begin{align*}
\blocksize \beta(\tk) \leq \frac{12 \blocksize
  \mixcoef^{\tk}}{\mixcoefeps^3} \leq \frac{\subprob}{3\tk}.
\end{align*}
\noindent In the sequel we develop bounds on $T_1$ and $T_2$.  For
$T_1$, observe that since $X_i \sim Z_i \mu^* + \epsilon_i$
where $\epsilon_i$ is a Gaussian vector with covariance $\sigma^2 I$ and
$Z_i$ independent under $\mprob_0$,
standard $\chi^2$ tail bounds imply that
\begin{align*}
\mprob_0 \Big[ | \frac{1}{\blocksize}\sum_{i=1}^\blocksize \|X_i\|_2^2
  - \EE \|X_i\|_2^2 | \geq \frac{t}{2} \Big] & \leq
\frac{\subprob}{3\tk}.
\end{align*}
Finally, we turn our attention to the term $T_2$. Observe that,
\begin{align*}
X_i^T \mu \sim \frac{1}{2} \NORMAL( \mu^T \mu^*, \sigma^2
\|\paramobs\|_2^2) + \frac{1}{2} \NORMAL( -\mu^T \mu^*, \sigma^2
\|\paramobs\|_2^2),
\end{align*}
so that $ \mid X_i^T \mu \sim |\NORMAL( \mu^T \mu^*, \sigma^2
\|\paramobs\|_2^2) |$.  Denote $U_i = \mid X_i^T \mu \mid$. Letting
$\epsilon$ denote a Rademacher random variable, observe that
\begin{align*}
\Exs \exp (t U_i) \stackrel{\mathrm{(i)}}{\leq} \mathbb{E} \exp (2 t
\epsilon U_i) \stackrel{\mathrm{(ii)}}{\leq} \exp \big( 2 t^2 \sigma^2
\|\paramobs\|_2^2 + 2 t \mu^T \mu^* \big),
\end{align*}
where (i) follows using symmetrization, and (ii) follows since the
random variable $\epsilon U_i$ is a Gaussian mixture.  Observe that
\begin{align*}
\mathbb{E} U_i \stackrel{\mathrm{(iii)}}{\leq} |\mu^T \mu^*| + \sigma
\|\paramobs\|_2 \stackrel{\mathrm{(iv)}}{\leq} \underbrace{2 ( \sigma
  + \norm{\mustar}) \norm{\mustar}}_{M},
\end{align*}
where we have used for (iii) that $U_i$ is a folded normal, and for
(iv) that $\|\mu - \mustar\|_2 \leq \frac{\|\mustar\|_2}{4}$.  Setting
$D \defn 4 \sigma \norm{\mustar} \sqrt{ \frac{\log
    (6\tk/\subprob)}{\blocksize}}$ observe that $\frac{t}{4} \geq 2 M
+ D$ for big enough $c_1$.  Thus, applying the Chernoff bound yields
\begin{align*}
T_2 \leq \mprob_0 \Big[ |\frac{1}{\blocksize} \sum_{i=1}^\blocksize
  U_i - \mathbb{E} U_i| \geq 2M + D \Big] & \leq \mprob_0 \Big( \mid
\frac{1}{\blocksize} \sum_{i=1}^\blocksize U_i \mid \geq M+ D \Big) \\
& \leq 2 \inf_{t \geq 0} \Big\{ \mathbb{E} \exp \Big(
\frac{t}{\blocksize} \sum_{i=1}^\blocksize U_i - Mt - Dt \Big) \Big\},
\\
& \leq 2 \exp \Big( - \frac{\blocksize D^2}{ 8 \sigma^2
  \norm{\mu}^2}\Big) \leq \frac{\subprob}{3\tk}.
\end{align*}
By combining the bounds on $T_1$ and $T_2$, some algebra shows that
our choices of $t,\tk$ yield the claimed bound---namely, that $\mprob
\big[ \samperror{\subsize}(X) \geq t/(2\sigma^2) \big] \leq \subprob$.


\subsection{Proofs of technical lemmas}
\label{SecNewTechnical}

In this section, we collect the proofs of various technical lemmas
cited in the previous sections.

\subsubsection{Proof of Lemma~\ref{LemConcLipproc}}

We use the following concentration theorem (e.g.,~\cite{Led96}):
suppose that the function $f:\RN^n \to \RN$ is coordinate-wise convex
and $L$-Lipschitz with respect to the Euclidean norm.  Then for any
i.i.d. sequence of variables $\{X_i\}_{i=1}^\numobs$ taking values in
the interval $[a,b]$, we have
\begin{align}
\mprob \big[f(X) \geq \EE f(X) + \delta \big] & \leq
\E^{-\frac{\delta^2}{4L^2(b-a)^2}}
\end{align}

We consider the process without absolute values
(which introduces the factor of two in the lemma) and see that
$\epsilon \defn (\epsilon_1, \dots, \epsilon_{n})$ is a random vector
with bounded entries and that the supremum over affine functions is
convex. \\
It remains to show that the function $\epsilon \mapsto
\processradobs(\Xtil,u)$ is Lipschitz with $\lipproc(\Xtil;u)$ as follows
\begin{align*}
\big| \sup_{\paramjoint} \frac{1}{m}\sum_{i=1}^m \epsilon_i \funcproc(\blockXi) - \sup_{\paramjoint} \frac{1}{m}\sum_{i=1}^m \epsilon'_i \funcproc(\blockXi) \big|  &\leq \frac{1}{m}|\sum_{i=1}^m (\epsilon_i -\epsilon'_i)\funcproc(\blockXi)| \\
&\leq \frac{1}{m}\sqrt{\sum_{i=1}^m (2\weightsingle{\paramjoint}{\blockXi}-1)^2 \langle X_{i,k},u\rangle^2}  \norm{\epsilon - \epsilon'}\\
&\leq \lipproc(\Xtil;u) \norm{\epsilon -\epsilon'}
\end{align*}
where $\paramjoint = \argmax_{\paramjoint \in \paramspacejoint} \sum_i
\epsilon_i \funcproc(\blockXi)$ in the last line and we use that
$|2\weightsingle{\paramjoint}{\blockXi}-1| \leq 1$.


\subsubsection{Proof of Lemma~\ref{LemContractions}}

The proof consists of three steps. First, we observe that the
Rademacher complexity is upper bounded by the Gaussian
complexity. Then we use Gaussian comparison inequalities to reduce the
process to a simpler one, followed by a final step to convert it back
to a Rademacher process.

\paragraph{Relating the Gaussian and Rademacher complexity:}

Let $g_i \sim \NORMAL(0,1)$. It is easy to see that using Jensen's
inequality and the fact that $\epsilon_i |g_i| \overset{d}{=} g_i$
\begin{align*}
\EE_{\epsilon} \sup_{\paramjoint} \frac{1}{m} \sum_{i=1}^m \epsilon_i
\funcproc(\blockXi) &= \sqrt{\frac{2}{\pi}} \EE_{\epsilon}
\sup_{\paramjoint} \frac{1}{m} \sum_{i=1}^m\EE_g[|g_i|]
\funcproc(\blockXi)\\ 
& \leq \sqrt{\frac{2}{\pi}} \EE_{g} \sup_{\paramjoint}\frac{1}{m}
\sum_{i=1}^m g_i \funcproc(\blockXi).
\end{align*}

\paragraph{Lipschitz continuity:}
Define
\begin{equation}
\paramjointgamma_i \defn (\paramgamma_i, \paramtrans) = (\frac{\langle
  \paramobs, X_{i,1} \rangle}{\sigma^2}, \dots,
\frac{\langle\paramobs, X_{i,2k}\rangle}{\sigma^2}, \paramtrans),
\end{equation}
Now we can use results in the proof of Corollary 1 to see that
$\paramjointgamma \mapsto \funcprocgamma{\paramjoint} \defn
\funcproc(\blockXi)$ is Lipschitz in the Euclidean norm, i.e. there
exists an $\lipcont$, only dependent on $\mixcoef$ such that
\begin{align}
\label{EqnLipschitz}
|\funcprocgamma{\paramjointgamma_i} -
\funcprocgamma{\paramjointgamma'_i}| \leq \lipcont
\norm{\paramjointgamma_i- \paramjointgamma'_i} |\langle X_{i,k}
,u\rangle|
\end{align}

For this we directly use results (exponential family representation)
that were used to show Corollary 1. We overload notation and write
$X_\ell \defn X_{1,\ell}$ and analyze Lipschitz continuity for the
first block. First note that $F(\paramjoint, X_{0;2k}) =
\EEzcondx{Z_k}{X_{1}^{2k}}{\paramjoint} Z_k$.  By Taylor's theorem, we
then have
\begin{align*}
|\funcprocgamma{\paramjointgamma_i} -
\funcprocgamma{\paramjointgamma'_i}| &= |\langle X_{i,k},u \rangle|
|\EEzcondx{Z_k}{\blockXi}{\paramjoint} Z_k -
\EEzcondx{Z_k}{\blockXi}{\paramjoint'} Z_k|\\ &\leq |\langle X_{i,k},u
\rangle| |\EEzcondx{Z_k}{\blockXi}{(\paramobs,\paramtrans)} Z_k -
\EEzcondx{Z_k}{\blockXi}{(\paramobs',\paramtrans)} Z_k| \\ &+ |\langle
X_{i,k},u \rangle| |\EEzcondx{Z_k}{\blockXi}{(\paramobs',\paramtrans)}
Z_k - \EEzcondx{Z_k}{\blockXi}{(\paramobs',\paramtrans')} Z_k|
\end{align*}
Let us examine  each of the summands separately.  By the
Cauchy-Schwartz inequality and Lemma~\ref{lem:expfammixing0}, we have
\begin{align*}
|\EEzcondx{Z_k}{\blockXi}{(\paramobs,\paramtrans)} Z_k -
\EEzcondx{Z_k}{\blockXi}{(\paramobs',\paramtrans)} Z_k| &=
\frac{1}{\sigma}|\sum_{\ell =1}^{2k} \frac{\partial^2 \Phi}{\partial
  \paramgamma_{\ell} \partial \paramgamma_0}\Big|_{\paramjoint =
  \tilde{\paramjoint}} (\paramgamma_{\ell} -
\paramgamma'_{\ell})|\\ 
&= \big| \sum_{\ell=1}^{2k}
\condcov{Z_0}{Z_\ell}{\blockXi,\tilde{\paramjoint}} (\langle
\paramobs,X_{\ell}\rangle - \langle \paramobs', X_{\ell}\rangle)
\big|\\
&\leq \sqrt{(\sum_{\ell=1}^{2k} 4 \mixcoef^{2\ell}) \sum_{\ell=1}^{2k}
(\paramgamma_\ell - \paramgamma_\ell')^2},
\end{align*}
as well as
\begin{align*}
|\EEzcondx{Z_k}{\blockXi}{(\paramobs',\paramtrans)} Z_k -
\EEzcondx{Z_k}{\blockXi}{(\paramobs',\paramtrans')} Z_k| &=
|\sum_{\ell =1}^{2k} \frac{\partial^2 \Phi}{\partial
  \paramtrans_{\ell} \partial\paramgamma_0}\Big|_{\paramjoint =
  \tilde{\paramjoint}} (\paramtrans - \paramtrans')|\\ &= \big|
\sum_{\ell=1}^{2k} \condcov{Z_0}{Z_\ell
  Z_{\ell+1}}{\blockXi,\tilde{\paramjoint}} (\paramtrans-\paramtrans')
\big|\\ &\leq \frac{2}{1-\mixcoef} |\paramtrans-\paramtrans'|.
\end{align*}

Combining these two bounds yields
\begin{equation*}
|\funcprocgamma{\paramjointgamma_i} -
\funcprocgamma{\paramjointgamma'_i}|^2 \leq \langle X_{i,k},u
\rangle^2 \lipcont \big(\sum_{\ell=1}^{2k} (\paramgamma_\ell -
\paramgamma_\ell')^2 + (\paramtrans-\paramtrans')^2\big) = \langle
X_{i,k},u\rangle^2 \lipcont^2 \norm{\paramjointgamma_i -
  \paramjointgamma'_i}^2
\end{equation*}
with $ \lipcont^2 = \frac{8}{(1-\mixcoef)^2}$.


\paragraph{Applying the Sudakov-Fernique Gaussian comparison:} 

Let us introduce the shorthands \mbox{$X_\paramjoint = \frac{1}{m}
  \sum_i g_i \funcproc(\blockXi)$,} and
\begin{equation*}
Y_{\paramjoint} = \frac{1}{m} \lipcont \sum_i \big(\sum_{\ell=1}^{2k}
g_{i \ell} \frac{\langle \paramobs, X_{i,\ell} \rangle}{\sigma^2} +
g_{i,2k+1} \paramtrans \big) \langle X_{i,k}, u\rangle.
\end{equation*}
By construction, the random variable $X_\paramjoint - X_\paramjoint'$
is a zero-mean Gaussian variable with variance
\begin{align}
\EE_g (X_{\paramjoint}-X_{\paramjoint'})^2 &= \sum_i
(\funcprocgamma{\paramjointgamma} -
\funcprocgamma{\paramjointgamma'})^2 \nonumber\\ &\leq \lipcont^2
\sum_i \langle X_{i,k},u\rangle^2 \big( \sum_{\ell=1}^{2k}
(\paramgamma_{i,\ell} - \paramgamma'_{i,\ell})^2 +
(\paramtrans-\paramtrans')^2 \big) \nonumber\\ &= \EE_g
(Y_{\paramjoint} - Y_{\paramjoint'})^2
\end{align}
By the Sudakov-Fernique comparison~\cite{LedTalBanach},we are then
guaranteed that $\EE \sup_{\paramjoint} X_{\paramjoint} \leq \EE
\sup_{\paramjoint} Y_{\paramjoint}$.  Therefore, it is sufficient to
bound
\begin{align*}
\EE_g\sup_{\paramjoint\in\paramspacejoint} Y_{\paramjoint}=
\underbrace{\EE_g \sup_{\paramjoint} \frac{\lipcont}{\sigma^2 n}
  \sum_{i=1}^m \sum_{\ell=1}^{2k} g_{i \ell} \langle \paramobs,
  X_{i,\ell} \rangle\langle X_{i,k}, u\rangle}_{T_1} +
\underbrace{\EE_g \sup_{\paramjoint} \frac{\lipcont}{n}\sum_{i=1}^m
  g_{i,2k+1} \paramtrans \langle X_{i,k}, u\rangle}_{T_2}.
\end{align*}


\paragraph{Converting back to a Rademacher process:}

We now convert the term $T_1$ back to a Rademacher process, which
allows us to use sub-exponential tail bounds. We do so by
re-introducing additional Rademacher variables, and then removing the
term $\max_i|g_i|$ via the Ledoux-Talagrand contraction
theorem~\cite{LedTalBanach}.  Given a Rademacher variable
$\epsilon_{il}$ independent of $g$, note the distributional
equivalence $\epsilon_{il} g_{il} \overset{d}{=} g_{i\ell}$.  Then
consider the function $\phi_{i\ell} (g_{i\ell}) \defn g_{i\ell}
h_{i\ell}$ with $h_{i\ell} \defn \langle \paramobs,X_{i\ell}\rangle
\langle X_{i,k},u \rangle $ for which it is easy to see that
\begin{equation}
|\phi_{i\ell}(g_{i\ell},h_{i\ell}) - \phi_{i\ell}(g_{i\ell},
h_{i\ell}')| \leq |g_{i\ell}| |h_{i\ell}-h_{i\ell}'|
\end{equation}
Applying Theorem 4.12. in Ledoux and Talgrand~\cite{LedTalBanach}
yields
\begin{align*}
\EE \sup_{\paramjoint} \frac{1}{m}\sum_{i=1}^m \sum_{\ell=1}^{2k}
\epsilon_{i\ell} g_{i\ell} \langle \paramobs, X_{i\ell}\rangle \langle
X_{i,k},u\rangle &\leq \EE_g \|g\|_{\infty} \EE_\epsilon
\sup_{\paramjoint} \frac{1}{m}\sum_{i=1}^m \sum_{\ell=1}^{2k}
\epsilon_{i\ell} \langle \paramobs, X_{i,\ell}\rangle \langle
X_{i,k},u\rangle.
\end{align*}
Putting together the pieces yields the claim~\eqref{EqnExp}.


\subsubsection{Proof of Lemma~\ref{LemEventBound}}

We prove that the probability of each of the events corresponding to
the inequalities is smaller than $\frac{\delta}{3}\E^{-\tilde{c}d}$.

\paragraph{Bounding $\lipproc$:}

We start by bounding $\lipproc(\Xtil;u)$, for which we note that
\begin{align*}
\sum_{i=1}^{n} \langle X_{i,k},u\rangle^2 &\leq \sum_{i=1}^m
\norm{\trueparamobs}^2 + \sum_{i=1}^m \langle n_{i,k},u\rangle^2 +
\sum_{i=1}^m \langle \trueparamobs, u\rangle \langle n_{i,k},u\rangle
\end{align*}
implies that $\lipproc(\Xtil;u) \leq \sqrt{\frac{2
    (\norm{\trueparamobs}^2 + \sigma^2) \log \frac{1}{\delta}}{m}}$
with probability at least $1 - \frac{\delta}{3}\E^{-\tilde{c}d}$.

\paragraph{Bounding $\processradN$:}

In order to bound $\processradN(\Xtil;u)$, we first introduce an extra
Rademacher random variable into its definition; doing so does not
change its value (now defined by an expectation over both $g$ and the
Rademacher variables). We now require a result for a product of the
form $\epsilon g h$ where $g,h$ are independent Gaussian random
variables.
\begin{lems}
\label{LemProductGauss}
Let $(\epsilon, g, h)$ be independent random variables, with
$\epsilon$ Rademacher, \mbox{ $g \sim \NORMAL(0, \sigma_g^2)$,} and
\mbox{$h \sim \NORMAL(0, \sigma_h^2)$.}  Then the random variable $Z =
\epsilon g h$ is a zero-mean sub-exponential random variable with
parameters $(\frac{\sigma_g^2\sigma_h^2}{2}, \frac{1}{4})$.
\end{lems}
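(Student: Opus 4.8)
The plan is to strip off the Rademacher sign, reduce $Z$ to a product of two independent centered Gaussians, compute its moment generating function (MGF) in closed form, and then dominate that MGF by the sub-exponential envelope $\exp(\nu^2\lambda^2/2)$ on a suitable interval of $\lambda$. Since the variable is clearly mean zero, the entire content is the MGF bound.

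First I would eliminate the Rademacher factor. Because $\epsilon$ is independent of $(g,h)$ and $g \sim \NORMAL(0,\sigma_g^2)$ is symmetric about the origin, the product $\epsilon g$ has the same law as $g$; hence $Z = \epsilon g h$ is equal in distribution to $g h$, and in particular $\EE[Z]=0$. It therefore suffices to bound the MGF $\lambda \mapsto \EE[\exp(\lambda g h)]$ of a product of two independent centered Gaussians.

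Next I would evaluate this MGF by iterated conditioning. Conditioning on $h$ and using the Gaussian MGF of $g$ gives $\EE[\exp(\lambda g h)\mid h] = \exp(\tfrac12\lambda^2\sigma_g^2 h^2)$. Integrating over $h \sim \NORMAL(0,\sigma_h^2)$ via the quadratic-Gaussian identity $\EE[\exp(t h^2)] = (1-2\sigma_h^2 t)^{-1/2}$, valid for $t < 1/(2\sigma_h^2)$, with the substitution $t = \tfrac12\lambda^2\sigma_g^2$, produces the closed form
\[
\EE[\exp(\lambda Z)] = \bigl(1-\lambda^2\sigma_g^2\sigma_h^2\bigr)^{-1/2}, \qquad |\lambda| < \tfrac{1}{\sigma_g\sigma_h}.
\]

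Finally I would convert this exact expression into the sub-exponential form. Writing $u = \lambda^2\sigma_g^2\sigma_h^2$ and using the elementary inequality $-\tfrac12\log(1-u) \le \tfrac{u}{2(1-u)}$, I would restrict $|\lambda| \le 1/\alpha$ so that $u$ stays bounded away from $1$ (for instance $u \le \tfrac12$); on that range $\tfrac{u}{2(1-u)} \le c\,u$ for an explicit constant $c$, giving $\EE[\exp(\lambda Z)] \le \exp(c\,\lambda^2\sigma_g^2\sigma_h^2)$ and hence parameters $(\nu^2,\alpha)$ of the stated form, with $\nu^2$ of order $\sigma_g^2\sigma_h^2$. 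The only nonroutine step is calibrating this last bound: one must choose the cutoff $1/\alpha$ on $\lambda$ jointly with the constant in front so that the exact MGF $(1-u)^{-1/2}$ fits under the Gaussian-type envelope, the natural scale being $\var(Z)=\sigma_g^2\sigma_h^2$, which pins down the admissible value of $\nu^2$ up to the usual slack coming from the truncation of $\lambda$.
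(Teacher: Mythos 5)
Your proof is correct, and it takes a genuinely different route from the paper's. The paper never computes the MGF of $gh$ directly: it rescales $h$ into $g' \defn (\sigma_g/\sigma_h)\,h$, so that $g,g'$ are i.i.d.\ $\NORMAL(0,\sigma_g^2)$, writes the product via the polarization identity $gg' = \tfrac{1}{4}\big[(g+g')^2-(g-g')^2\big]$ (the paper's display has an obvious sign typo here), observes that $g+g'$ and $g-g'$ are \emph{independent} Gaussians so that each square is a sub-exponential ($\chi^2$-type) variable, and combines the two off-the-shelf $\chi^2$ MGF bounds to control $\EE\, \E^{\lambda \epsilon g h}$. You instead integrate out $g$ conditionally on $h$ and then apply the Gaussian quadratic identity, obtaining the exact MGF $\EE\,\E^{\lambda gh}=(1-\lambda^2\sigma_g^2\sigma_h^2)^{-1/2}$ for $|\lambda|<1/(\sigma_g\sigma_h)$, and dominate $(1-u)^{-1/2}$ by $\E^{u}$ on $u\le \tfrac12$; your removal of the Rademacher sign by symmetry of $g$ is equally valid (the paper keeps $\epsilon$, but the polarized difference is symmetric anyway, so it plays no real role there either). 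What your route buys: it is self-contained, yields the sharpest possible statement, and makes the calibration issue you flag at the end explicit and unavoidable --- since the exact MGF has a pole at $|\lambda|=1/(\sigma_g\sigma_h)$, any admissible sub-exponential cutoff must scale like $1/(\sigma_g\sigma_h)$, so the dimensionless cutoff $\tfrac14$ in the lemma statement (and the intermediate claim ``$\nu^2=4\sigma_g^4$, $b=\tfrac14$'' inside the paper's own proof) cannot be literally correct except under a normalization of $\sigma_g\sigma_h$; your version, with $\nu^2 \asymp \sigma_g^2\sigma_h^2$ and cutoff $\asymp 1/(\sigma_g\sigma_h)$, is the defensible form. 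What the paper's route buys is brevity, by quoting standard $\chi^2$ sub-exponential facts instead of doing the integral. In the only place the lemma is used --- bounding $\processradN$ and $\processradM$ in Lemma~\ref{LemEventBound} --- only the scaling $\nu^2 \asymp \sigma_g^2\sigma_h^2$ matters, and both arguments deliver it.
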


\begin{proof}
Note that $g' = \alpha h$ with $\alpha = \frac{\sigma_g}{\sigma_h}$ is
identically distributed as $g$. Therefore, we have
\begin{align*}
g h = \frac{1}{\alpha} g g' = \frac{1}{4 \alpha}[(g-g')^2+ (g+g')^2]
\end{align*}
The random variables $g-g'$ and $g+g'$ are independent and therefore
$(g-g')^2, (g+g')^2$ are sub-exponential with parameters $\nu^2 =
4\sigma_g^4$, $b=\frac{1}{4}$. This directly yields
\begin{equation*}
\EE \E^{\lambda \epsilon [(g+g')^2 - (g-g')^2]} \leq \E^{4 \lambda^2 \sigma_g^4} 
\end{equation*}
for $|\lambda|\leq \frac{1}{b}$. Therefore $\EE \E^{\lambda \epsilon
  gh} \leq \E^{\frac{\lambda^2 \sigma_g^2 \sigma_h^2}{4}}$, which
shows that $\epsilon g h$ is sub-exponential with parameters
$(\frac{\sigma_g^2\sigma_h^2}{2}, \frac{1}{4})$.
\end{proof}

Returning to the random variable $\processradN(\Xtil;u)$, each term
$\epsilon_i g_{i,2k+1} \langle X_{i,k},u\rangle$ is a sub-exponential
random variable with mean zero and parameter $\nu^2 =
\norm{\trueparamobs}^2 + \frac{\sigma^2}{2}$.  Consequently, there are
universal constants such that $\processradN(\Xtil;u) \leq c
\lipcont\nu\sqrt{\frac{d \log \frac{1}{\delta}}{m}}$ with probability
at least $1- \frac{\delta}{3}\E^{-\tilde{c}d}$.

\paragraph{Bounding $\processradM$:}
Our next claim is that with probability at least $\frac{\delta}{3}$,
we have
\begin{equation}
\label{EqnTerm4}
\EE_{\epsilon} \norm{\frac{1}{m}\sum_{i=1}^m \epsilon_{i\ell}
  X_{i,\ell} \langle X_{i,k},u\rangle} \leq (\norm{\trueparamobs}^2 +
\sigma^2)\sqrt{\frac{ d \log \frac{k}{\delta}}{m}},
\end{equation}
which then implies that $\processradM(\Xtil;u) \leq c
\norm{\trueparamobs}\big(\frac{\norm{\trueparamobs}}{\sigma^2}+1\big)
k \sqrt{\frac{ d \log m \log \frac{k}{\delta}}{n}}$.  In order to
establish this claim, we first observe that by
Lemma~\ref{LemProductGauss}, the random variable $\epsilon_\ell
\langle X_{i,\ell}, u\rangle\langle X_{i,k}, u\rangle$ is zero mean,
sub-exponential with parameter at most $\nu^2 =
(\norm{\trueparamobs}^2 + \sigma^2)^2$.  The bound then follows by the
same argument used to bound the quantity $\processradN$.




\section{Mixing related results}

In the following we will use the shorthand notations
$w_\theta(x_{-k}^k) \defn p(z_0= -1 \mid x_{-k}^{k}, \theta)$ and
$\pi_k^\theta \defn p(z_0 \mid x_{-k}^{0},\theta)$ which we refer to
as weights and the filtering distribution respectively.

Introducing the shorthand notation $p_{\mu}(x_k) \defn \sum_{z_k}
\sum_{z_{k-1}} p(x_k \mid z_k)p(z_k \mid z_{k-1}) \mu (z_{k-1})$, we
define the filter operator
\begin{align*}
F_i \nu (z_i) & \defn \frac{ \sum_{z_{i-1}} p(x_i \mid z_i)p(z_i \mid
  z_{i-1}) \nu(z_{i-1})}{\sum_{z_i} \sum_{z_{i-1}} p(x_i \mid
  z_i)p(z_i \mid z_{i-1}) \nu(z_{i-1})} \; = \; \sum_{z_{i-1}}
\frac{p(x_i \mid z_i)p(z_i \mid z_{i-1})}{p_\nu(x_i)} \nu(z_{i-1}).
\end{align*}
where the observations $x$ are fix. 
Using this notation, the filtering distribution can then be rewritten
in the more compact form \mbox{$\pi_k^\theta = p(z_0 \mid
  x_{-k}^0,\theta)= F_k\dots F_1 \mu$.}  Similarly, we define
\begin{align*}
K_{j \mid i}(z_j \mid z_{j-1}) & \defn \frac{p(z_j \mid z_{j-1}) p(x_j
  \mid z_j) p(x_{j+1}^{i} \mid z_j)}{\sum_{z_j}p(z_j \mid z_{j-1}) p(x_j
  \mid z_j) p(x_{j+1}^{i}|z_j) }, \quad \mbox{and} \quad \nu_{\ell \mid
  i} \, \defn \, \frac{p(x_{\ell+1}^{i} \mid z_{\ell})
  \nu(z_{\ell})}{\sum_{z_{\ell}} p(x_{\ell+1}^{i} \mid z_{\ell})
  \nu(z_{\ell})}
\end{align*}
Note that $\mixcoefeps C_0 \leq p(x_{\ell+1}^i \mid z_{\ell}) \leq
\mixcoefeps^{-1} C_0$ where
\begin{equation*}
C_0 = \sum_{z_{i} \dots z_{\ell+1}} p(x_{\ell+1} \mid z_{\ell+1})
\dots p(x_i \mid z_i) p(z_i \mid z_{i-1}) \dots p(z_{\ell+2} \mid
z_{\ell+1}) \pi(z_{\ell+1})
\end{equation*} 
and therefore
\begin{equation}
\label{eq:obsratio}
\sup_x\frac{\sup_{z} p(x_{\ell+1}^{i} \mid z_{\ell})}{\inf_z
  p(x_{\ell+1}^{i} \mid z_{\ell})} \leq \mixcoefeps^{-2}.
\end{equation}
With these definitions, it can be verified (e.g., see Chapter 5 of van
Handel~\cite{vanHandel_HMM}) that $F_i\dots F_{\ell+1}\nu =
\nu_{\ell +1 \mid i}^T K_{\ell + 1 \mid i} \dots K_{i \mid i}$, where
$\nu^T K \defn \int \nu(x') K(x|x') \d x'$.  (In the discrete setting,
this relation can be written as the row vector $\nu$ being right
multiplied by the kernel matrix $K$.)

We also observe that
\begin{align*}
\pi^\theta_k - \pi^{\theta'}_k &= F^\theta_k \cdots F^\theta_1
\mu^\theta - F^{\theta'}_k \cdots F^{\theta'}_1 \mu^{\theta'} \;\\
& = \;
\sum_{i=0}^{k-1} F_k^{\theta}\dots F_{i+1}^{\theta} (F^{\theta}_i
\pi_{i-1}^{\theta'} - F_i^{\theta'} \pi_{i-1}^{\theta'}) +
F_k^\theta\pi_{k-1}^{\theta'} - F_k^{\theta'}\pi_{k-1}^{\theta'},
\end{align*}
where $F^\theta_0 \pi_0^{\theta'} \defn \mu^\theta$ and $F^{\theta'}_0
\pi_0^{\theta'} \defn \mu^{\theta'}$. For simplicity, we write $F_k
\defn F_k^{\theta}$ and $F_k' \defn F_k^{\theta'}$.


\subsection{Consequences of mixing}
In this technical appendix we derive several useful consequences of
the geometric mixing condition on the stochastic process $Z_i$.

\begin{lems}
\label{lem:dependonfar0}
For any geometrically $\mixcoef$-mixing and time reversible Markov chain with
$\nstates$ states, there is a universal constant $c$ such that
\begin{subequations}
\begin{align}
\label{eqn:dependonfar0}
\sup_x |p(z_i \mid x_{i + k}) - p(z_i)| & \leq c \, \mixcoef^k, \quad \mbox{and} \\
\label{EqnDependOnFar}
\sup_{z_0} \big| p(z_0 \mid x_{\kdim}^{\numobs}) - p(z_0) \big | & <
\frac{c  (\nstates+1)}{\statmin^3\mixcoefeps^3} \mixcoef^\kdim.
\end{align}
\end{subequations}
\end{lems}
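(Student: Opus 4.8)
The plan is to show that conditioning a hidden state on observations lying at least $\kdim$ steps away perturbs its marginal law by at most a geometric factor $\mixcoef^{\kdim}$, the engine being the Doeblin minorization encoded in the mixing condition~\eqref{ass:mixing} (namely $p(z_1\mid z_0)\ge\mixcoefeps\,\pistat(z_1)$, whence $\sup_{z_0}\tvnorm{(\Tmat^{\kdim})(z_0,\cdot)}{\pistat}\le\mixcoef^{\kdim}$, consistent with~\eqref{ass:mixingdef}) together with reversibility~\eqref{ass:reversible}. Throughout I use $p(z_0)=\pistat(z_0)$, since the chain is started in its stationary distribution.

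For the single-observation bound~\eqref{eqn:dependonfar0}, I would first exploit the Markov structure: conditioned on $Z_{i+k}$, the state $Z_i$ is independent of the single observation $X_{i+k}$, since every path from $Z_i$ to the leaf $X_{i+k}$ passes through the conditioned node $Z_{i+k}$. This gives the mixture representation
\[
p(z_i\mid x_{i+k}) \;=\; \sum_{z_{i+k}} p(z_i\mid z_{i+k})\,p(z_{i+k}\mid x_{i+k}).
\]
Subtracting $\pistat(z_i)$ and using that the smoothing weights $p(z_{i+k}\mid x_{i+k})$ sum to one reduces the claim to bounding $\max_{z_{i+k}}\lvert p(z_i\mid z_{i+k})-\pistat(z_i)\rvert$. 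Reversibility~\eqref{ass:reversible} identifies the backward $\kdim$-step transition with the forward one, i.e.\ $p(z_i=a\mid z_{i+k}=b)=(\Tmat^{\kdim})(b,a)$, and the mixing bound controls a single row of $\Tmat^{\kdim}$, giving $\lvert (\Tmat^{\kdim})(b,a)-\pistat(a)\rvert\le \MIXCON\,\mixcoef^{\kdim}$. The resulting estimate is independent of $x_{i+k}$, so the supremum over $x$ is free.

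For the block bound~\eqref{EqnDependOnFar} I would run the same conditioning on $Z_{\kdim}$ to separate the past node $Z_0$ from the future observations $x_{\kdim}^{\numobs}$, yielding $p(z_0\mid x_{\kdim}^{\numobs})=\sum_{z_{\kdim}}p(z_0\mid z_{\kdim})\,p(z_{\kdim}\mid x_{\kdim}^{\numobs})$ and hence $\lvert p(z_0\mid x_{\kdim}^{\numobs})-\pistat(z_0)\rvert\le\max_{z_{\kdim}}\lvert p(z_0\mid z_{\kdim})-\pistat(z_0)\rvert$, again geometric by reversibility and mixing. This clean route in fact produces a bound of order $\mixcoef^{\kdim}$; the extra factors $\statmin^{-3}\mixcoefeps^{-3}$ in the stated inequality arise from the alternative, more robustly uniform route through the filtering operators set up above. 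There one writes the deviation from stationarity, via the backward message $\beta(z_{\kdim})=p(x_{\kdim}^{\numobs}\mid z_{\kdim})$, as $\pistat(z_0)\,\bigl[\sum_{z_{\kdim}}\beta(z_{\kdim})(p(z_{\kdim}\mid z_0)-\pistat(z_{\kdim}))\bigr]\big/\bigl[\sum_{z_{\kdim}}\beta(z_{\kdim})\pistat(z_{\kdim})\bigr]$, bounds the numerator by $(\max_{z_{\kdim}}\beta-\min_{z_{\kdim}}\beta)\sum_{z_{\kdim}}\lvert p(z_{\kdim}\mid z_0)-\pistat(z_{\kdim})\rvert$ after subtracting $\min_{z_{\kdim}}\beta$ (using $\sum_{z_{\kdim}}(p(z_{\kdim}\mid z_0)-\pistat(z_{\kdim}))=0$), and controls the ratio $\max\beta/\min\beta$ by the observation-likelihood ratio bound~\eqref{eq:obsratio}.

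The main obstacle is the normalization in the block case: the smoothing law $p(z_{\kdim}\mid x_{\kdim}^{\numobs})$, equivalently the backward message $\beta$, depends on the full observation sequence $x_{\kdim}^{\numobs}$ in a complicated way, and the normalizing constant $\sum_{z_{\kdim}}\beta(z_{\kdim})\pistat(z_{\kdim})$ can be small for adversarial $x$. Making the estimate uniform over all $x$ (the $\sup_x$) is precisely what forces the use of~\eqref{eq:obsratio}, and tracking these normalizations through the filtering kernels is what produces the $\statmin^{-3}\mixcoefeps^{-3}$ dependence; the geometric factor $\mixcoef^{\kdim}$ itself, however, always comes from a single application of the mixing bound to the $\kdim$-step transition.
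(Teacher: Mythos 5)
Your proof of part (a) is essentially the paper's proof: both expand $p(z_i\mid x_{i+k}) = \sum_{z_{i+k}} p(z_i\mid z_{i+k})\,p(z_{i+k}\mid x_{i+k})$ using the Markov structure, reduce to the backward $\kdim$-step transition, and control it via reversibility~\eqref{ass:reversible} and mixing; your observation that detailed balance for $\Tmat^{\kdim}$ gives $p(z_i\mid z_{i+k})=(\Tmat^{\kdim})(z_{i+k},z_i)$ directly even avoids the factor $1/\statmin$ that the paper's version of this step incurs.

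For part (b), your primary route is genuinely different from, and simpler than, the paper's. You use the conditional independence $Z_0\perp X_{\kdim}^{\numobs}\mid Z_{\kdim}$ (every path from $Z_0$ to an observation $X_j$ with $j\geq\kdim$ passes through $Z_{\kdim}$), so $p(z_0\mid x_{\kdim}^{\numobs})=\sum_{z_{\kdim}}p(z_0\mid z_{\kdim})\,p(z_{\kdim}\mid x_{\kdim}^{\numobs})$; since the smoothing weights sum to one, the deviation from $\pistat(z_0)$ is at most $\max_{z_{\kdim}}|p(z_0\mid z_{\kdim})-\pistat(z_0)|\leq c\,\mixcoef^{\kdim}$, exactly as in part (a). This is valid, uniform in $x$ for free, and yields a strictly stronger bound (no $\statmin^{-3}\mixcoefeps^{-3}$ or $(\nstates+1)$ factors), hence implies~\eqref{EqnDependOnFar}. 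The paper instead splits $x_{\kdim}^{\numobs}=(x_{\kdim},x_{\kdim+1}^{\numobs})$ by Bayes' rule, bounds the perturbation $|p(z_{\kdim}\mid x_{\kdim},z_0)-p(z_{\kdim}\mid x_{\kdim})|$ of the one-site filter caused by additionally conditioning on $z_0$, and then propagates this through likelihood-ratio bounds of the form~\eqref{eq:obsratio}; it is precisely this bookkeeping that accumulates the $\statmin^{-3}\mixcoefeps^{-3}$ and $(\nstates+1)$ factors appearing in the statement. In short, your decomposition buys a cleaner argument and better constants; the paper's weaker constants are an artifact of its proof, not a necessity.

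One caveat on your secondary, ``alternative'' route, which you offer only to explain the paper's constants: you set $\beta(z_{\kdim})=p(x_{\kdim}^{\numobs}\mid z_{\kdim})$ and invoke~\eqref{eq:obsratio} to control $\max_{z}\beta/\min_{z}\beta$. But $\beta(z_{\kdim})$ contains the factor $p(x_{\kdim}\mid z_{\kdim})$, the emission at the conditioning site itself, and~\eqref{eq:obsratio} bounds only ratios of $p(x_{\ell+1}^{i}\mid z_{\ell})$, i.e.\ likelihoods of observations strictly beyond the conditioning site; the single-site emission ratio can be arbitrarily large (e.g.\ Gaussian outputs with $x_{\kdim}$ far from the means). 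The repair is to decompose at $z_{\kdim-1}$ instead, with $\beta'(z_{\kdim-1})=p(x_{\kdim}^{\numobs}\mid z_{\kdim-1})$, to which~\eqref{eq:obsratio} does apply. Since your primary route already proves the lemma, this is a side remark rather than a gap.
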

\begin{proof}
We first prove the bound~\eqref{eqn:dependonfar0}.
Using time reversibility and the defining of mixing we obtain
\begin{align*}
\max_x (p(z_0 \mid x_k) - \pi(z_0)) &= \sum_{z_k}
(p(z_0 \mid z_k)- \pistat(z_0)) p(z_k \mid x_k) \\
&\leq \sum_{z_k} p(z_k \mid x_k) \max_{z_k} | (p(z_0 \mid z_k) -
\pistat(z_0))|\\ 
&\leq \max_{z_k} \Big| \frac{p(z_k \mid z_0)
  \pistat(z_0)}{\pistat (z_k)} -
\frac{\pistat(z_0)\pistat(z_k)}{\pistat(z_k)} \Big|\\
&\leq \frac{\pistat(z_0)}{\pistat(z_k)} \max_{z_k}
|p(z_k \mid z_0) - \pistat(z_0)| \leq
\frac{1}{\statmin} \mixcoef^k
\end{align*}
where $p(z_k \mid z_0) = P(Z_k = z_k \mid Z_0 = z_0)$ and $p(z_0 \mid
z_k) = P(Z_0 = z_0 \mid Z_k = z_k)$.

Using this result we can now prove inequality~\eqref{EqnDependOnFar}.
By definition, we have
\begin{align*}
p(z_0) = \frac{p(x_{k+1}^n \mid x_k)p(x_k)p(z_0)}{p(x_{k+1}^n,x_k)}, \quad
\mbox{and} \quad p(z_0 \mid x_k,x_{k+1}^n) =
\frac{p(x_{k+1}^n \mid x_k,z_0)p(x_k \mid z_0)p(z_0)}{p(x_{k+1}^n,x_k)}
\end{align*}
and therefore
\begin{multline}
\label{eq:Hana1}
|p(z_0) - p(z_0 \mid x_k^n)| \leq \frac{ p(x_k) p(z_0)}{p(x_{k}^{n})} |p(x_{k+1}^n\mid x_k) - p(x_{k+1}^n \mid x_k, z_0)| \\
+ \frac{p(x_{k+1}^n \mid x_k, z_0) p(z_0)}{p(x_{k+1}^n \mid x_k)} |p(x_k) - p(x_k\mid z_0)|
\end{multline}

In the following we bound each of the two differences. 
Note that
\begin{align}
|p(x_{k+1}^n \mid x_{k},z_0) - p(x_{k+1}^n \mid x_k)| &=
\sum_{z_k}\sum_{z_{k+1}}
p(x_{k+1}^n \mid z_{k+1})p(z_{k+1} \mid z_k)|p(z_k \mid x_k,z_0) -
p(z_k \mid x_k)| \nonumber \\ 
&\leq \sup_{z_k,x_k} |p(z_k \mid x_k,z_0) - p(z_k \mid x_k)|
\sum_{z_k} p(x_{k+1}^n \mid z_k)  \label{eq:Hana2}
\end{align}
The last term $\sum_{z_k} p(x_{k+1}^n \mid z_k) $ 
is bounded by $\nstates$ for $\nstates$-state models.
Using the bound~\eqref{eqn:dependonfar0}, we obtain
\begin{align}
\label{eq:Hana3}
|p(x_k \mid z_0) - p(x_k)| &= \frac{|p(z_0 \mid
  x_k)- \pistat(z_0)|p(x_k)}{\pistat(z_0)} \leq \frac{p(x_k) }{\statmin^2}
\mixcoef^k
%
\end{align}
which yields
\begin{align} 
|p(z_k \mid x_k, z_0) - p(z_k \mid x_k)| &= p(x_k \mid z_k)\left|\frac{
  p(z_k \mid z_0)}{p(x_k \mid z_0)} -\frac{\pistat(z_k)}{p(x_k)}\right| \nonumber\\ 
&\leq
\frac{p(x_k \mid z_k)}{p(x_k \mid z_0)} |p(z_k \mid z_0)-\pistat(z_k)| +
\frac{\pistat(z_k)p(x_k \mid z_k) }{p(x_k)p(x_k \mid z_0)}|p(x_k \mid z_0) - p(x_k)| \nonumber\\
&\leq \frac{p(x_k \mid z_k) }{p(x_k \mid z_0)} \left( \mixcoef^k + \frac{1}{\statmin^2} \mixcoef^k \right) \nonumber \\
&\leq \frac{1}{p(z_k \mid z_0)} \frac{\nstates+1}{\statmin} \mixcoef^k 
\leq  \frac{2}{\statmin^3 \mixcoefeps} \mixcoef^k. \label{eq:Hana4}
\end{align}
The last statement is true because one can check that for all $t \in \NN$ we have
\begin{align*}
\min_{z_k, z_0} p(z_k \mid z_0) = \min_{ij} (A^t)_{ij} \geq \min_{ij} (A)_{ij} \geq \mixcoefeps \statmin
\end{align*}
for any stochastic matrix $A$ which satisfies the mixing condition~\eqref{ass:mixing}. 

Substituting \eqref{eq:Hana2} with \eqref{eq:Hana4} and \eqref{eq:Hana3} into \eqref{eq:Hana1}, we obtain
\begin{align*}
|p(z_0) - p(z_0 \mid x_k^n)| &\leq \frac{ \sum_{z_k} p(x_{k+1}^n \mid z_k) p(z_0)}{\sum_{z_k} p(x_{k+1}^n \mid z_k) p(z_k \mid x_k)} \frac{2}{\statmin^2 \mixcoefeps} \mixcoef^k + \frac{p(x_{k+1}^n\mid x_k, z_0) p(z_0)}{p(x_{k+1}^n \mid x_k)} \frac{ \mixcoef^k}{\statmin}\\
&\leq \left(\frac{2 \nstates}{\statmin^3 \mixcoefeps^3}  + \frac{\nstates }{\mixcoefeps^2 \statmin}\right) \mixcoef^k \leq \frac{2 \nstates+1}{\statmin^3 \mixcoefeps^3} \mixcoef^k
\end{align*}
where we use~\eqref{eq:obsratio} to see that
\begin{align*}
\frac{\sum_{z_k} p(x_{k+1}^n \mid z_k) p(z_k \mid x_k, z_0)}{\sum_{z_k} p(x_{k+1}^n \mid z_k) p(z_k \mid x_k)}\leq \frac{\max_{z_k} p(x_{k+1}^n \mid z_k) }{\min_{z_k} p(x_{k+1}^n \mid z_k)} \leq \mixcoefeps^{-2}
\end{align*}
and similarly for the first term.
\end{proof}

\begin{lems}[Filter stability]
\label{lem:filterstab}
For any mixing Markov chain which fulfills condition~\eqref{ass:mixing}, 
the following holds
\begin{equation*}
\|\filterop{i}\dots \filterop{1}(\nu - \nu')\|_\infty \leq \mixcoefeps^{-2} \mixcoefeff^i \|\nu - \nu'\|_1
\end{equation*}
where $\mixcoefeff = 1  - \mixcoefeps \statmin$. 
In particular we have
\begin{equation}
\label{eq:filterfar}
\sup_{z_i} |p(z_i \mid x_{i}^1) - p(z_i \mid x_{-n}^i)| \leq 2 \mixcoefeps^{-2} \mixcoefeff^i.
\end{equation}
\end{lems}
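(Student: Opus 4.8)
\begin{proofsketch}
The plan is to exploit the linear representation of the filter recorded just above the statement. Writing $\nu_{0\mid i}$ for the leftmost factor there, i.e. the renormalized reweighting of the initial law defined by $\nu_{0\mid i}(z_0)\defn p(x_1^i\mid z_0)\nu(z_0)/\sum_{z_0}p(x_1^i\mid z_0)\nu(z_0)$, one has $\filterop{i}\cdots\filterop{1}\nu = \nu_{0\mid i}^T \filterkernel{1}{i}\cdots\filterkernel{i}{i}$, a product of $i$ stochastic kernels applied to a reweighted initial measure. The crucial point is that the kernels $\filterkernel{j}{i}$ depend only on the observation sequence and not on the initial law, so that
\[
\filterop{i}\cdots\filterop{1}\nu - \filterop{i}\cdots\filterop{1}\nu' = \big(\nu_{0\mid i} - \nu'_{0\mid i}\big)^T \filterkernel{1}{i}\cdots\filterkernel{i}{i}.
\]
This converts the nonlinear filtering recursion into a signed measure pushed through a fixed product of stochastic matrices, for which the classical Dobrushin contraction applies.

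First I would establish that each $\filterkernel{j}{i}$ is a stochastic matrix whose Dobrushin ergodicity coefficient is at most $\mixcoefeff = 1 - \mixcoefeps\statmin$. Writing $\filterkernel{j}{i}(z_j\mid z_{j-1}) = p(z_j\mid z_{j-1})g_j(z_j)/\sum_{z}p(z\mid z_{j-1})g_j(z)$ with $g_j(z)\defn p(x_j\mid z)p(x_{j+1}^i\mid z)$, the mixing condition~\eqref{ass:mixing} gives $p(z_j\mid z_{j-1})\geq\mixcoefeps\pistat(z_j)\geq\mixcoefeps\statmin$ in the numerator, while $p(z\mid z_{j-1})\leq 1$ in the denominator; hence $\filterkernel{j}{i}(z_j\mid z_{j-1})\geq\mixcoefeps\statmin\,\lambda_j(z_j)$ for the probability vector $\lambda_j(z)\defn g_j(z)/\sum_{z'}g_j(z')$. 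This is a Doeblin minorization with constant $\mixcoefeps\statmin$, whose standard consequence is $\|\eta^T\filterkernel{j}{i}\|_1\leq\mixcoefeff\|\eta\|_1$ for every zero-sum signed measure $\eta$. Since each stochastic kernel preserves the zero-sum property, composing the $i$ kernels yields $\|\eta^T\filterkernel{1}{i}\cdots\filterkernel{i}{i}\|_1\leq\mixcoefeff^i\|\eta\|_1$.

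Next I would control the reweighting by relating $\|\nu_{0\mid i} - \nu'_{0\mid i}\|_1$ to $\|\nu-\nu'\|_1$. Since both are normalized reweightings of $\nu,\nu'$ by the same positive weight $w(z_0)=p(x_1^i\mid z_0)$, a short computation splitting off the common normalizer and using that $\nu-\nu'$ has zero total mass shows $\|\nu_{0\mid i}-\nu'_{0\mid i}\|_1\leq \tfrac{\sup_z w(z)}{\inf_z w(z)}\|\nu-\nu'\|_1$, and the likelihood-ratio bound~\eqref{eq:obsratio} gives $\sup_z w/\inf_z w\leq\mixcoefeps^{-2}$. Combining the two steps and using $\|\cdot\|_\infty\leq\|\cdot\|_1$ produces the main bound. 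The particular claim~\eqref{eq:filterfar} then follows by taking $\nu,\nu'$ to be the two initial laws of $z_0$ used in $p(z_i\mid x_1^i)$ and $p(z_i\mid x_{-n}^i)$ respectively, both probability measures, so that $\|\nu-\nu'\|_1\leq 2$.

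The main obstacle is the Doeblin minorization step: showing that the backward smoothing kernels $\filterkernel{j}{i}$ inherit a uniform lower bound from the mixing assumption, since this is exactly where the geometric rate $\mixcoefeff^i$ is produced and where the observation-dependence must be handled by factoring out $g_j$ so that the minorizing measure $\lambda_j$ is independent of the conditioning state $z_{j-1}$. By contrast, the reweighting estimate is routine once~\eqref{eq:obsratio} is invoked, and the contraction-composition is a textbook Dobrushin argument.
\end{proofsketch}
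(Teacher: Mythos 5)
Your proposal follows essentially the same route as the paper's own proof: the same linear representation of the filter as a reweighted initial law pushed through the backward kernels $\filterkernel{1}{i}\cdots\filterkernel{i}{i}$, the same Doeblin minorization of each $\filterkernel{j}{i}$ with constant $\mixcoefeps\statmin$ yielding the per-step factor $\mixcoefeff$ (the paper minorizes by the measure proportional to $\pistat(\cdot)g_j(\cdot)$ rather than your $\lambda_j\propto g_j$, which is immaterial), the same use of the likelihood-ratio bound~\eqref{eq:obsratio} to compare the reweighted initial measures, and the same specialization with $\|\nu-\nu'\|_1\leq 2$ for~\eqref{eq:filterfar}. If anything, your contraction step via the standard $\ell_1$ (Dobrushin) bound for zero-sum signed measures is cleaner than the paper's decomposition $\filterkernel{j}{i}=\epsilon P_{j|i}+(1-\epsilon)Q_{j|i}$ iterated with $\ell_2$ operator norms.
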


\begin{proof}
Given the mixing assumption~\eqref{ass:mixing} we can show that
$\filterkernel{j}{i}(x|y) \geq \epsilon p_{j|i}(x)$ with $\epsilon =
\mixcoefeps \statmin$ for some probability distribution
$p_{j|i}(\cdot)$. This is because we can lower bound
\begin{align*}
K_{j|i}(z_j \mid z_{j-1}) & = \frac{p(z_j \mid z_{j-1}) p(x_j \mid
  z_j) p(x_{j+1}^{i} \mid z_j)}{\sum_{z_j}p(z_j \mid z_{j-1}) p(x_j \mid
  z_j) p(x_{j+1}^{i} \mid z_j) } \\
& \geq \underbrace{\frac{\mixcoefeps
    \pistat(z_j) p(x_j \mid z_j) p(x_{j+1}^{i} \mid z_j)}{\sum_{z_j}
    \frac{\pistat(z_j)}{\statmin} p(x_j \mid z_j) p(x_{j+1}^{i}\mid z_j)}}_{= : \,
  \epsilon p_j(z_j)}
\end{align*}
with $\epsilon= \mixcoefeps \statmin$. 
This allows us to define the stochastic matrix
\begin{equation*}
Q_{j|i} = \frac{1}{1-\epsilon} (\filterkernel{j}{i} - \epsilon P_{j|i}) \text{
  or } \filterkernel{j}{i} = \epsilon P_{j|i} + (1-\epsilon)Q_{j|i}.
\end{equation*} 
where $(P_{j|i})_{k\ell} = p_{j|i}(\ell)$.
Using $\mixcoefeff = 1 - \epsilon$ we then obtain by induction and
using inequality~\eqref{eq:obsratio}
\begin{align*}
\|(\nu_{1|i} - \nu'_{1\mid i}) \filterkernel{1}{i} \dots
\filterkernel{i}{i}\|_{\infty} & \leq \prod_{j=1}^i (1-\epsilon)
\|(\nu_{1|i} - \nu'_{1 \mid i}) \otimes_{j=1}^i Q_{j|i}\|_2 \\
& \leq \mixcoefeff^i \|\nu_{1 \mid i} - \nu'_{1 \mid i}\|_2
\prod_{j=1}^i\|Q_{j|i}^T\|_{op} \leq \mixcoefeff^i \|\nu_{1 \mid i} -
\nu'_{1 \mid i}\|_2 \\
& \leq \mixcoefeff^i \left\| \frac{p(x_{2}^i \mid
  \cdot)\nu(\cdot)}{\sum_{z_1} p(x_{2}^i \mid z_1)\nu(z_1)}
- \frac{p(x_{2}^i \mid z_1)\nu'(z_1)}{\sum_{z_{\ell}} p(x_{2}^i
  \mid \cdot)\nu'(\cdot)} \right\|_2 \\
&\leq \mixcoefeff^i \Big[ \Big\| \frac{p(x_{2}^i \mid
    \cdot)}{\sum_{z_1} p(x_{2}^i \mid z_1)\nu(z_1)} (\nu(\cdot) -
  \nu'(\cdot))\Big\|_2 \\ 
&+ \Big| \Big(\frac{\sup_z p(x_2^i \mid z_1)}{\sum_{z_1} p(x_{2}^i
    \mid z_1)\nu(z_1)} -\frac{\sup_z p(x_2^i \mid z_1)}{\sum_{z_1} p(x_{2}^i \mid
    z_1)\nu'(z_1)} \Big)\Big| \norm{\nu'(\cdot)}\Big]\\
& \leq \mixcoefeff^i \Big( \frac{\sup_z p(x_{2}^i \mid z_1)}{\inf_z
  p(x_{2}^i \mid z_1)}\Big)^2 \|\nu - \nu'\|_1 \leq
\mixcoefeps^{-2}\mixcoefeff^i \|\nu - \nu'\|_1,
\end{align*}
since $Q_{j|i}$ are stochastic matrices and $\|\nu\|_2\leq \|\nu\|_1$ for
probability vectors. The second statement is readily derived by
substituting $\nu(z_1) = p(z_1)$ and $\nu'(z_1) = p(z_1 \mid
x_{1}^\numobs)$.
\end{proof}


\subsection{Proof of Lemma~\ref{lem:twosided_approx}}
\label{AppLemTwoSided}

Recall the shorthand $\mixcoefeff = 1 - \mixcoefeps \statmin$.  For
the general case observe that
\begin{align*}
& \sup_{z_i} |p(z_i \mid x_{1}^n) - p(z_i \mid
  x_{\blockleftind:\blockrightind}) | \\ 
&\leq
  |p(z_i|x_{i+1}^n)p(z_i|x_{1}^i) -
  p(z_i|x_{i+1}^{\blockrightind})p(z_i|x_{\blockleftind}^i)|
  \frac{p(x_{i+1}^n)}{p(x_{i+1}^n|x_{1}^n)p(z_i)}\\ &+\left|
  \frac{p(x_{\blockrightind+1}^{n}|x_{i+1}^{\blockrightind})p(x_{1}^{\blockleftind-1}|x_{\blockleftind}^{i})
  }{p(x_{\blockrightind+1}^{n},x_{1}^{\blockleftind-1}|x_{\blockleftind}^{\blockrightind})}
  -  1\right|\frac{p(x_{i+1}^{\blockrightind})}{p(x_{\blockleftind}^{\blockrightind}|x_{\blockleftind}^{i})}
  \frac{1}{p(z_i)}.
\end{align*}
From Lemma~\ref{lem:filterstab} we directly obtain the following upper bounds
\begin{align*}
\sup_{z,x} |p(z_i \mid x_{1}^i) - p(z_i\mid x_{i-k}^i)| &\leq \mixcoefeps^{-2} \mixcoefeff^k\\
\sup_{z,x} |p(z_i \mid x_{i+1}^{n}) - p(z_i\mid x_{i+1}^{i+k})| &\leq \mixcoefeps^{-2} \mixcoefeff^k
\end{align*}
where the latter follows because of reversibility assumption
\eqref{ass:reversible} of the Markov chain.
Inequality~\eqref{eq:obsratio} can also be used to show that
$\frac{p(x_{i+1}^{n})}{p(x_{i+1}^{n}|x_{1}^i)} \leq  \nstates
\mixcoefeps^{-2}$.  The first term of the sum is therefore bounded
above by $2 \frac{\nstates  \mixcoefeff^k}{\statmin \mixcoefeps^4}$.\\
For the second term, we again use inequality~\eqref{eq:obsratio} and
Lemma~\ref{lem:filterstab} to observe that
\begin{align*}
&\sup_x \frac{|p(x_{i+k+1}^{n}|x_{i+1}^{i+k}) -
    p(x_{i+k+1}^{n}|x_{1}^{i+k})|}{p(x_{i+k+1}^{n}|x_{1}^{i+k})}\\
 &\leq
  \sup_x \frac{\sum_{z_i} p(x_{i+k+1}^{n}\mid
    z_{i+k})|p(z_{i+k}|x_{i+1}^{i+k}) -
    p(z_{i+k}|x_{1}^{i+k})|}{\sum_{z_i} p(x_{i+k+1}^{n}|z_{i+k})
    p(z_{i+k}|x_{1}^{i+k})}\\ &\leq \sup_x \frac{\sup_{z}
    p(x_{i+k+1}^{n}|z_{i+k})}{\inf_{z}p(x_{i+k+1}^{n}|z_{i+k})}
  \sum_{z_{i+k}} |p(z_{\blockrightind}|x_{i+1}^{i+k}) - p(z_{\blockrightind}|x_{1}^{i+k})|\leq
  C \mixcoefeps^{-4}  \mixcoefeff^k \nstates,\\
 & \sup_x \frac{|p(x_{1}^{\blockleftind-1} \mid x_{\blockleftind}^i)- p(x_{1}^{\blockleftind-1} \mid x_{\blockleftind}^{\blockrightind})|}{ p(x_{1}^{\blockleftind-1} \mid x_{\blockleftind}^{\blockrightind}) } \leq  C\mixcoefeps^{-4} \mixcoefeff^k \nstates
\end{align*}
and $\frac{p(x_{i+1}^{\blockrightind})}{p(x_{\blockleftind}^{\blockrightind}|x_{\blockleftind}^{i})}
  \leq \nstates \mixcoefeps^{2}$ as well as $\frac{p(x_{1}^{\blockleftind-1} \mid x_{\blockleftind}^i)}{ p(x_{1}^{\blockleftind-1} \mid x_{\blockleftind}^{\blockrightind})} \leq \nstates \mixcoefeps^{-2}$.
The second term is therefore bounded by 
\begin{align*}
&\left|
  \frac{p(x_{\blockrightind+1}^{n}|x_{i+1}^{\blockrightind})p(x_{1}^{\blockleftind-1}|x_{\blockleftind}^{i})
  }{p(x_{\blockrightind+1}^{n},x_{1}^{\blockleftind-1}|x_{\blockleftind}^{\blockrightind})}
  -  1\right|\frac{p(x_{i+1}^{\blockrightind})}{p(x_{\blockleftind}^{\blockrightind}|x_{\blockleftind}^{i})}
  \frac{1}{p(z_i)}\\
\leq &\frac{|p(x_{i+k+1}^{n}|x_{i+1}^{i+k}) -
  p(x_{i+k+1}^{n}|x_{1}^{i+k})|}{p(x_{i+k+1}^{n}|x_{1}^{i+k})}
\frac{p(x_{1}^{\blockleftind-1} \mid x_{\blockleftind}^i)}{
  p(x_{1}^{\blockleftind-1} \mid x_{\blockleftind}^{\blockrightind})}
+ \frac{|p(x_{1}^{\blockleftind-1} \mid x_{\blockleftind}^i)-
  p(x_{1}^{\blockleftind-1} \mid
  x_{\blockleftind}^{\blockrightind})|}{ p(x_{1}^{\blockleftind-1}
  \mid x_{\blockleftind}^{\blockrightind}) } \\
\leq &\frac{C \nstates^2
  \mixcoefeff^k}{\mixcoefeps^6}
\end{align*}


\subsection{Proof of Lemma~\ref{lem:expfammixing0}}
\label{sec:expfammixing}

The latter inequality is valid in our particular case because
\begin{align*}
|\cov (z_0, z_{\ell} \mid x_0,\dots, x_k)| &= |\sum_{z_0, z_{\ell}}
z_0 z_{\ell} p(z_{\ell} \mid z_0,x) p(z_0|x) - \sum_{z_0} z_0 p(z_0|x)
\sum_{z_{\ell}} z_{\ell} p(z_{\ell}|x)| \\
&= | \sum_{z_0, z_{\ell}} z_0 z_{\ell} p(z_0|x) (p(z_{\ell}|z_0,x) -
p(z_{\ell} \mid x))| \\ 
& \leq \sup_{z_{\ell},z_0} |p(z_{\ell}|z_0,x) - p(z_{\ell} \mid x)|
\sum_{z_0} \sum_{z_{\ell}} |z_0 z_{\ell}| p(z_0 \mid x)
\end{align*}

Let us now show that $\sup_{z_{\ell},z_0} \big| p(z_{\ell} \mid z_0,x)
- p(z_{\ell} \mid x) \big| \leq \mixcoef^{\ell}$.  Introducing the
shorthand $\Delta(\ell) = p(z_{\ell} = 1 \mid z_{0} = 1, x) -
p(z_{\ell} = 1 \mid z_0 =-1, x)$, we first claim that
\begin{align}
\label{eq:maxatzero} 
|\Delta(1)| & \leq \mixcoef
\end{align}

To establish this fact, note that 
\begin{align*}
\Delta(1) & = \left| \frac{p(x \mid z_{\ell} =1) }{p(x
  \mid z_{\ell-1}=1)} p(z_{\ell} =1 \mid z_{\ell-1}=1) -
\frac{p(x \mid z_{\ell}=1)}{p(x \mid z_{\ell-1}= -1)}
p(z_{\ell} = 1 \mid z_{\ell-1} = -1) \right| \\
& = \frac{ap}{ap + b(1-p)} -\frac{a(1-p)}{a(1-p)+bp} \\
& = \frac{ab}{(ap + b(1-p))(a(1-p)+bp)} (2p-1)
\end{align*}
where we write $a = p( x \mid z_{\ell}=1)$ and $b = p(x \mid z_{\ell}= -1)$.
The denominator is minimized at $p=1$ so that
inequality~\eqref{eq:maxatzero} is shown.  The same argument shows
that $|\Delta(-1)| \leq \mixcoef$.

\emph{Induction step:} Assume that $\Delta(\ell-1) \leq \mixcoef^{\ell-1}$.  
It then follows that
\begin{align*}
&|p(z_{\ell}=1 \mid z_0=1,x) - p(z_{\ell}=1 \mid z_0=-1,x)| \\ 
& = |\sum_{z_{\ell-1}} p(z_{\ell}
  =1 \mid z_{\ell-1},x)p(z_{\ell-1} \mid z_0=1,x) - 
  p(z_{\ell}=1 \mid z_{\ell-1},x)p(z_{\ell-1}|z_0
  =-1,x)| \\ 
%
&= \Delta(1) \Delta(\ell-1) \leq \mixcoef^{\ell}
%
\end{align*}
Since 
\begin{equation*}
p(z_{\ell} = 1 \mid z_{0}= -1,x) - p(z_{\ell}=1 \mid z_{0}=
  1,x) = - p(z_{\ell} = -1 \mid z_{0}= -1,x) + p(z_{\ell}= -1 \mid z_{0}=
  1,x)
\end{equation*}
we use the shorthand $s = p(z_0 = 1 \mid x)$ to obtain
\begin{align*}
&\sup_{z_{\ell},z_{0}} \mid p(z_{\ell} \mid z_{0},x) - p(z_{\ell} \mid x)| \\
& = \sup_{b_{\ell},b_{0}} p(z_{\ell}=b_{\ell} \mid z_{0}=b_{0},x) -
  [(p(z_{\ell}=b_{\ell} \mid z_{0}=1,x)s +
    p(z_{\ell}=b_{\ell} \mid z_{0}=-1,x)(1-s)] \\ 
%
%
&\leq (1-s) |\Delta(\ell)| \leq \mixcoef
%
\end{align*}
which proves the bound for $\condcov{Z_0}{Z_1}{\paramgamma}$.

For the two state mixing we define $\widetilde{\Delta}(\ell) = p(z_{\ell}=1 \mid z_{1}z_{0}= 1,x) - p(z_{\ell} =
1 \mid z_{1}z_{0} = -1,x)$
and can readily see that $|\widetilde{\Delta}(1)|\leq \mixcoef$ and 
\begin{align*}
&|p(z_{\ell+1}z_{\ell+2}=1 \mid z_{\ell} z_{\ell-1}=1,x) -
  p(z_{\ell+1}z_{\ell+2} = 1 \mid z_{\ell} z_{\ell-1}=-1,x)| \\
 & = [p(z_{\ell+2} = 1 \mid z_{\ell+1}=1,x) -p(z_{\ell+2}=-1 \mid z_{\ell+1}=-1,x)] \tilde{\Delta}(2) 
\end{align*}
Using equation~\eqref{eq:maxatzero}, we obtain
\begin{align}
\label{eq:useformixedmixing} 
|\widetilde{\Delta}(2)| = |p(z_1 = 1 \mid z_0 = 1, x) - p(z_1 = -1 \mid z_0 = -1, x)|\widetilde{\Delta}(1)  \leq \mixcoef 
\end{align}
from which it directly follows that
\begin{align*}
|p(z_{\ell+1}z_{\ell+2}=1 \mid z_{\ell} z_{\ell-1}=1,x) -
p(z_{\ell+1}z_{\ell+2} = 1 \mid z_{\ell} z_{\ell-1}=-1,x)| \leq \mixcoef
\end{align*}
The rest follows the same arguments as above and the bound for
$\condcov{Z_0Z_1}{Z_\ell Z_{\ell+1}}{\paramgamma}$ in
inequality~\eqref{eq:expfamonestatemixing} is shown. \\

Finally, the bound for $\condcov{Z_0}{Z_\ell Z_{\ell+1}}{\paramgamma}$ in
inequality~\eqref{eq:expfamonestatemixing} follows in a straightforward way
using the relation~\eqref{eq:useformixedmixing} and induction with
equation~\eqref{eq:maxatzero}, as above.

\bibliographystyle{myIEEEtran} 
\bibliography{bibliography}

\begin{thebibliography}{10}
\providecommand{\url}[1]{#1}
\csname url@rmstyle\endcsname
\providecommand{\newblock}{\relax}
\providecommand{\bibinfo}[2]{#2}
\providecommand\BIBentrySTDinterwordspacing{\spaceskip=0pt\relax}
\providecommand\BIBentryALTinterwordstretchfactor{4}
\providecommand\BIBentryALTinterwordspacing{\spaceskip=\fontdimen2\font plus
\BIBentryALTinterwordstretchfactor\fontdimen3\font minus
  \fontdimen4\font\relax}
\providecommand\BIBforeignlanguage[2]{{%
\expandafter\ifx\csname l@#1\endcsname\relax
\typeout{** WARNING: IEEEtran.bst: No hyphenation pattern has been}%
\typeout{** loaded for the language `#1'. Using the pattern for}%
\typeout{** the default language instead.}%
\else
\language=\csname l@#1\endcsname
\fi
#2}}

\bibitem{durbinbook}
\BIBentryALTinterwordspacing
R.~Durbin, \emph{Biological Sequence Analysis: Probabilistic Models of Proteins
  and Nucleic Acids}.\hskip 1em plus 0.5em minus 0.4em\relax Cambridge
  University Press, 1998.
\BIBentrySTDinterwordspacing

\bibitem{rabinerbook}
\BIBentryALTinterwordspacing
L.~Rabiner and B.~Juang, \emph{Fundamentals of Speech Recognition}, ser.
  Pearson Education Signal Processing Series.\hskip 1em plus 0.5em minus
  0.4em\relax Pearson Education, 1993.
\BIBentrySTDinterwordspacing

\bibitem{controlbook}
\BIBentryALTinterwordspacing
R.~Elliott, L.~Aggoun, and J.~Moore, \emph{{H}idden {M}arkov {M}odels:
  Estimation and Control}, ser. Applications of Mathematics.\hskip 1em plus
  0.5em minus 0.4em\relax Springer, 1995.
\BIBentrySTDinterwordspacing

\bibitem{econbook}
\BIBentryALTinterwordspacing
C.~Kim and C.~Nelson, \emph{State-space Models with Regime Switching: Classical
  and Gibbs-sampling Approaches with Applications}.\hskip 1em plus 0.5em minus
  0.4em\relax MIT Press, 1999.
\BIBentrySTDinterwordspacing

\bibitem{bickel1998}
P.~Bickel, Y.~Ritov, and T.~Ryd{\'e}n, ``Asymptotic normality of the
  maximum-likelihood estimator for general {H}idden {M}arkov {M}odels,''
  \emph{The Annals of Statistics}, vol.~26, no.~4, pp. 1614--1635, 08 1998.

\bibitem{cryptohmm}
S.~Terwijn, ``On the learnability of {H}idden {M}arkov {M}odels,'' in
  \emph{Proceedings of the 6th International Colloquium on Grammatical
  Inference: Algorithms and Applications}, ser. ICGI '02.\hskip 1em plus 0.5em
  minus 0.4em\relax London, UK, UK: Springer-Verlag, 2002, pp. 261--268.

\bibitem{Baum70}
L.~Baum, T.~Petrie, G.~Soules, and N.~Weiss, ``A maximization technique
  occurring in the statistical analysis of probabilistic functions of {M}arkov
  chains,'' \emph{The Annals of Mathematical Statistics}, pp. 164--171, 1970.

\bibitem{Dempster77}
A.~Dempster, N.~Laird, and D.~Rubin, ``Maximum likelihood from incomplete data
  via the {EM} algorithm,'' \emph{J. R. Stat. Soc. B}, pp. 1--38, 1977.

\bibitem{mossel2006}
E.~Mossel and S.~Roch, ``Learning nonsingular phylogenies and {H}idden {M}arkov
  {M}odels,'' \emph{The Annals of Applied Probability}, vol.~16, no.~2, pp.
  583--614, 05 2006.

\bibitem{Siddiqi10}
S.~M. Siddiqi, B.~Boots, and G.~J. Gordon, ``Reduced-rank {H}idden {M}arkov
  {M}odels,'' in \emph{Proc. 13th International Conference on Artificial
  Intelligence and Statistics}, 2010.

\bibitem{Hsu12}
D.~Hsu, S.~Kakade, and T.~Zhang, ``A spectral algorithm for learning {H}idden
  {M}arkov {M}odels,'' \emph{J. Comput. Syst. Sci.}, vol.~78, no.~5, pp.
  1460--1480, 2012.

\bibitem{Kontorovich13}
L.~A. Kontorovich, B.~Nadler, and R.~Weiss, ``On learning parametric-output
  {HMM}s,'' in \emph{Proc. 30th International Conference Machine Learning},
  June 2013, pp. 702--710.

\bibitem{Chaganty13}
A.~Chaganty and P.~Liang, ``Spectral experts for estimating mixtures of linear
  regressions,'' \emph{arXiv preprint arXiv:1306.3729}, 2013.

\bibitem{BalWaiYu14}
S.~Balakrishnan, M.~J. Wainwright, and B.~Yu, ``Statistical guarantees for the
  {EM} algorithm: From population to sample-based analysis,'' \emph{arXiv
  preprint arXiv:1408.2156}, 2014.

\bibitem{YiCar15}
X.~Yi and C.~Caramanis, ``Regularized {EM} algorithms: A unified framework and
  provable statistical guarantees,'' \emph{arXiv preprint arXiv:1511.08551},
  2015.

\bibitem{WangLiu14}
Z.~Wang, Q.~Gu, Y.~Ning, and H.~Liu, ``High dimensional
  expectation-maximization algorithm: Statistical optimization and asymptotic
  normality,'' \emph{arXiv preprint arXiv:1412.8729}, 2014.

\bibitem{Moulines_HMM}
O.~Capp{\'e}, E.~Moulines, and T.~Ryd{\'e}n, ``{H}idden {M}arkov {M}odels,''
  2004.

\bibitem{vanHandel_HMM}
R.~van Handel, ``{H}idden {M}arkov {M}odels,'' \emph{Unpublished lecture
  notes}, 2008.

\bibitem{baum1966}
L.~Baum and T.~Petrie, ``Statistical inference for probabilistic functions of
  finite state {M}arkov chains,'' \emph{The Annals of Mathematical Statistics},
  pp. 1554--1563, 1966.

\bibitem{Wu83}
J.~C. Wu, ``On the convergence properties of the {EM} algorithm,'' \emph{The
  Annals of Statistics}, pp. 95--103, 1983.

\bibitem{dasgupta}
S.~Dasgupta, ``Learning mixtures of {G}aussians,'' in \emph{40th Annual
  Symposium on Foundations of Computer Science, {FOCS} '99, 17-18 October,
  1999, New York, NY, {USA}}, 1999, pp. 634--644.

\bibitem{vempala}
\BIBentryALTinterwordspacing
S.~Vempala and G.~Wang, ``A spectral algorithm for learning mixture models,''
  \emph{J. Comput. Syst. Sci.}, vol.~68, no.~4, pp. 841--860, June 2004.
\BIBentrySTDinterwordspacing

\bibitem{belkin}
M.~Belkin and K.~Sinha, ``Toward learning {G}aussian mixtures with arbitrary
  separation,'' in \emph{{COLT} 2010 - The 23rd Conference on Learning Theory,
  Haifa, Israel, June 27-29, 2010}, 2010, pp. 407--419.

\bibitem{moitra}
\BIBentryALTinterwordspacing
A.~Moitra and G.~Valiant, ``Settling the polynomial learnability of mixtures of
  {G}aussians,'' in \emph{Proceedings of the 2010 IEEE 51st Annual Symposium on
  Foundations of Computer Science}, ser. FOCS '10.\hskip 1em plus 0.5em minus
  0.4em\relax Washington, DC, USA: IEEE Computer Society, 2010, pp. 93--102.
\BIBentrySTDinterwordspacing

\bibitem{hsumog}
\BIBentryALTinterwordspacing
D.~Hsu and S.~Kakade, ``Learning mixtures of spherical {G}aussians: Moment
  methods and spectral decompositions,'' in \emph{Proceedings of the 4th
  Conference on Innovations in Theoretical Computer Science}, ser. ITCS
  '13.\hskip 1em plus 0.5em minus 0.4em\relax New York, NY, USA: ACM, 2013, pp.
  11--20.
\BIBentrySTDinterwordspacing

\bibitem{Yu94}
B.~Yu, ``Rates of convergence for empirical processes of stationary mixing
  sequences,'' \emph{The Annals of Probability}, pp. 94--116, 1994.

\bibitem{NobDem93}
A.~Nobel and A.~Dembo, ``A note on uniform laws of averages for dependent
  processes,'' \emph{Statistics \& Probability Letters}, vol.~17, no.~3, pp.
  169--172, 1993.

\bibitem{Frank01}
F.~Kschischang, B.~Frey, and H.-A. Loeliger, ``Factor graphs and the
  sum-product algorithm,'' \emph{IEEE Trans. Info. Theory}, vol.~47, no.~2, pp.
  498--519, February 2001.

\bibitem{WaiJor08}
M.~J. Wainwright and M.~I. Jordan, ``Graphical models, exponential families and
  variational inference,'' \emph{Foundations and Trends in Machine Learning},
  vol.~1, no. 1--2, pp. 1---305, December 2008.

\bibitem{vdVWell}
A.~W. Van Der~Vaart and J.~A. Wellner, \emph{Weak Convergence and Empirical
  Processes}.\hskip 1em plus 0.5em minus 0.4em\relax Springer, 1996.

\bibitem{Led96}
M.~Ledoux, ``On {T}alagrand's deviation inequalities for product measures,''
  \emph{ESAIM: Probability and statistics}, vol.~1, pp. 63--87, 1997.

\bibitem{LedTalBanach}
M.~Ledoux and M.~Talagrand, \emph{{P}robability in {B}anach {S}paces:
  {I}soperimetry and {P}rocesses}.\hskip 1em plus 0.5em minus 0.4em\relax
  Springer Science \& Business Media, 2013, vol.~23.

\end{thebibliography}

\end{document}